\newtheorem{theorem}{Theorem}
\newtheorem{lemma}{Lemma}
\newtheorem{definition}{Definition}
\title{Theoretical and Empirical Analysis of Lehmer Codes to Search Permutation Spaces with Evolutionary Algorithms
}
\author{
    Yuxuan Ma\textsuperscript{\rm 1}, Valentino Santucci\textsuperscript{\rm 2}, Carsten Witt\textsuperscript{\rm 3}
}
\begin{document}
\nocopyright
\maketitle

\begin{abstract}
A suitable choice of the representation of candidate solutions is crucial for the efficiency of evolutionary algorithms and related metaheuristics. We focus on problems in permutation spaces, which are at the core of numerous practical applications of such algorithms, e.\,g.\ in scheduling and transportation. Inversion vectors (also called Lehmer codes) are an alternative representation of the permutation space~$S_n$ compared to the classical encoding as a  vector of $n$~unique entries. In particular, they do not require any constraint handling. Using rigorous mathematical runtime analyses, we compare the efficiency of inversion vector encodings to the classical representation and give theory-guided advice on their choice. Moreover, we link the effect of local changes in the inversion code space to classical measures on permutations like the number of inversions. Finally, through experimental studies 
on linear ordering and quadratic assignment problems, we demonstrate the practical efficiency 
of inversion vector encodings. 
\end{abstract}
\section{Introduction}

Permutations are fundamental algebraic objects with broad applications, as they can model diverse concepts such as orderings or rankings of items, bijective mappings between two 
sets of items, and tours or sets of cycles within a collection of locations.
Because of their versatility, permutations form the solution space for many combinatorial optimization problems.
These problems, often called \textit{permutation problems}~\cite{santucci2025use}, include notable examples such as:
the Hamiltonian Path Problem (HPP), which involves finding the shortest path visiting every given location exactly once; 
the Linear Ordering Problem (LOP), which seeks for a maximum-weight directed acyclic subgraph within a given 
digraph;
and the Quadratic Assignment Problem (QAP), which aims to determine a cost-minimizing assignment of facilities to locations.
Permutation problems find applications in many tasks of practical relevance.
Among others, the LOP has been applied to model consensus ranking in computational social choice~\cite{marti2022exact} and to improve natural language translations~\cite{tromble2009learning}, while the QAP has been used for graph matching tasks~\cite{wang2021neural}. Therefore, there is 
interest
in developing effective and efficient solvers for permutation problems.

Nonetheless, permutation problems pose significant challenges.
First, they are often NP-hard, as are all the aforementioned examples.
Second, even among NP-hard problems, they are particularly challenging because the size of their search space increases factorially with the instance size---i.e., even faster than any exponential function.
This inherent difficulty has motivated the adoption of non-traditional algorithmic approaches, including metaheuristics, evolutionary algorithms (EAs) and, more recently, also neural networks and deep learning methodologies.

However, the constrained nature of permutations poses representation challenges for these methods.
The most common encoding scheme is the classical linear encoding, which represents a permutation as a vector of unique items satisfying mutual exclusivity.
This constraint is typically handled in EAs and other metaheuristics through specifically designed neighborhood structures~\cite{schiavinotto2007review} and variation operators~\cite{santucci2015algebraic}.
An alternative encoding is the permutation matrix, which represents a permutation as a binary matrix with exactly one $1$-entry per row and column.
This encoding is particularly popular in neural network architectures that generate permutations, where the final layer applies a temperature-decreasing Sinkhorn operator~\cite{mena2018learning} to iteratively transform the activation values of $n^2$ neurons into a permutation matrix.

To overcome the mutual exclusivity constraint, \textit{random key} representations were introduced \citep{bean1994genetic}.
In this encoding scheme, a real-valued vector can be decoded to a permutation by either argsorting its elements or replacing its values with their ranks within the vector.
However, a single permutation may correspond to an infinite number of vectors, leading to redundancy.
Additional criticisms to this approach are discussed by~\citet{santucci2020algebraic}.
Another method to get rid of mutual exclusivity is to reformulate the permutation problem in the parameter space of a differentiable probability model over permutations~\cite{CeberioSantucciTELO2023}. However, this approach typically requires more sophisticated algorithms.

Interestingly, there exists a less commonly used encoding scheme for permutations, called \textit{Lehmer code} or \textit{inversion vector}~\cite{lehmer1960teaching}, that: (i) does not require to satisfy any constraints, aside from trivial domain bounds,
 (ii) is in one-to-one correspondence with permutations, and
(iii) does not necessarily require sophisticated algorithms to work with.
Given a permutation, its Lehmer code is a vector of integers, where each entry indicates how many smaller items appear after that position in the original linear permutation.

This work investigates the effectiveness of Lehmer codes in EAs, with a primary emphasis on theoretical analysis. 
Our main theoretical contributions are threefold: we define a series of benchmark problems in the Lehmer code space that are closely related to current work in runtime analysis; we introduce simple algorithms for searching within this space; and, most importantly, we derive runtime bounds for these algorithms on the proposed benchmarks, contrasting them with existing algorithms and benchmarks defined over the traditional permutation space. Along the way, we improve an existing runtime analysis by~\citet{DoerrPohl} for a related, multi-valued search space by a factor of almost $\Theta(n^2)$. 
In addition, we also study the structural connections between Lehmer space and permutation space.
To the best of our knowledge, this is the first study to explore these directions.
To complement this, we also conduct an empirical evaluation on real-world instances of two different types of NP-hard permutation problems: the LOP, 
an ordering problem, and the QAP, an assignment problem. The source code is available at https://github.com/TrendMYX/LehmerEA.

The detailed mathematical proofs of our theoretical results are collected in a technical appendix. Moreover, we include an appendix with detailed experimental results.

\subsection{Related Work}
Lehmer code representations in EAs have mostly been studied  empirically so far~\cite{RegnierMcCallFactoradicPPSN14,marmion2015fitness,KhalilGAPermutation,UherLehmerTSP}, showing scenarios where they are preferable to classical encodings. 

The first work describing theoretical runtime analyses of EAs on permutation problems appeared 20 years ago~\citep{Scharnow2005}. 
However, since then, the major focus of runtime analyses of EAs 
has been
on binary search spaces (see~\citealp{DoerrNeumannBook2020} for an overview) and results for permutations were only sporadic, see, e.\,g.,~\citet{GGLAlgo2019,BassinB20}. Only 
recently, there has been increasing momentum in this area, by~\citet{Doerr2023} suggesting a new framework for the runtime analysis and~\citet{BaumannRSGECCO24, 
BaumannRSGECCO25} analyzing permutation spaces  
arising in graph drawing problems. However, these studies all use the canonical representation of permutations as a vector 
of $n$ unique entries and to the best of our knowledge, there has been no runtime analysis for 
other representations of permutations so far. Our study of Lehmer codes fills this gap.

\section{Preliminaries}
\label{sec:preliminaries}

\subsection{Concepts and Notation}
Let $[a..b] := \{z \in \mathbb{Z} \mid a \leq z \leq b\}$ and \mbox{$[k]:=[0..k-1]$}. A mapping from $[1..n]$ to itself is called a \textit{permutation} (of $[1..n]$) if it is bijective. We denote by $S_n$ the set of all permutations of $[1..n]$. A common notation represents a permutation $\sigma \in S_n$ as a vector $(\sigma(1), \sigma(2), \dots, \sigma(n))$. The \textit{identity} permutation in $S_n$ is $(1,2,\dots,n)$.
For clarity, we assume that the vector representation of a permutation is 1-indexed from the left.

As in~\citet{Doerr2023}, we adopt the standard composition \( \circ \) of permutations: for \( \sigma, \tau \in S_n \), 
\( \tau \circ \sigma \) is defined by \( (\tau \circ \sigma)(i) = \tau(\sigma(i)) \) for all \( i \in [1..n] \).

Another common notation is the \textit{cycle} notation, where a \textit{cycle} of length \( k \) (a \( k \)-cycle) is a permutation \( \sigma \in S_n \) such that there exist \( k \) distinct elements \( i_1, \dots, i_k \in [1..n] \) satisfying: for all \( j \in [1..k-1] \), \( \sigma(i_j) = i_{j+1} \), and \( \sigma(i_k) = i_1 \); while for all \( j \in [1..n] \setminus \{i_1, \dots, i_k\} \), \( \sigma(j) = j \). We denote such a cycle by \( \sigma = (i_1\ i_2\ \cdots\ i_k) \). Any permutation can be written as a product of disjoint cycles. A 2-cycle
is called a \textit{transposition}. A transposition of the form \( s_i = (i\ i{+}1) \) is called an \textit{adjacent transposition}.

Note that an adjacent transposition \( s_i = (i\ i{+}1) \) acts as an \textit{adjacent swap} of the values at indices \( i \) and \( i+1 \) of a permutation \( \sigma \in S_n \) when \( \sigma \) is right-multiplied by \( s_i \), i.e., \( \sigma \circ s_i \). In~\citet{Scharnow2005}, the authors define $\text{jump}(i,j)$ as an operation that causes the element at position $i$ to jump to position $j$ while the elements at positions $i+1,\dots,j$  (if $j>i$) or $j,\dots,i-1$ (if $j<i$) are shifted in the appropriate direction. We keep the same notation in this paper, although this operation is also referred to as \textit{insertion} in other works, such as \citet{baioletti2020variable}.

Consider 
$\sigma\in S_n$, then a pair of indices $(i,j)$ with $i<j$ and $\sigma(i)>\sigma(j)$ is called an \textit{inversion} of $\sigma$. The \textit{Lehmer Code} is an alternative way of encoding permutations and it utilizes the concept of inversions, which is why it is also known as the \textit{inversion vector}. The Lehmer code and the Lehmer code space can be defined in various ways. In this paper, we adopt the following definition.
\begin{definition}\label{def:lehmer}
The Lehmer code space $L_n$ is defined as the Cartesian product $[n]\times[n-1]\times\cdots\times[1]$, and the Lehmer code of any permutation $\sigma\in S_n$ is defined as a sequence of length $n$, denoted by \mbox{$L(\sigma):=(L(\sigma)_n,L(\sigma)_{n-1},\dots,L(\sigma)_1)$} where 
\begin{displaymath}
    L(\sigma)_{n-i+1}:=\#\{j>i\mid \sigma(j)<\sigma(i)\},i\in[1..n].
\end{displaymath}
\end{definition}

For example, consider a $\sigma=(3,5,4,1,2)\in S_5$. 
Its corresponding Lehmer code
is
$L(\sigma)=(2,3,2,0,0)$. By convention, the Lehmer code is indexed from $n$ down to $1$. The benefit is that position $i$ has cardinality $i$. In the remainder of this paper, we may omit the final entry (index $1$), as it is always zero by definition. There exists a one-to-one correspondence between the permutation space $S_n$ and the Lehmer code space $L_n$.

Some other useful definitions are as follows. The $n$-th Harmonic number is $H_n=\sum_{i=1}^n 1/i$, for $n=1,2,\dots$, and $H_0=0$. Given an event $A$, the indicator variable $\mathbf{1}_{\{A\}}$ equals $1$ if $A$ occurs and $0$ otherwise.
\subsection{Mathematical Analysis Tools}
Drift theory is about bounding the expected optimization time of a randomized search heuristic by mapping its state space to real numbers and considering the ``drift'', which is the expected decrease in distance from the optimum in a single step. 
In particular, we use multiplicative and variable drift (see \citealp{Lengler2020} for an overview of drift theory). Moreover, we use results on coupon collector processes with non-uniform
selection probabilities. 

\section{Algorithms and Benchmark Functions}
\label{sec:alg_and_benchmark}
Runtime analyses of EAs usually start
by considering simple benchmark algorithms on simple, well-structured benchmark functions. Such studies have been successfully performed for the search space $\{0,1\}^n$ 
of binary representations and enhanced our understanding 
of increasingly complex EAs \citep{DoerrNeumannBook2020}. 
We adopt this approach by adjusting the commonly studied randomized local search (RLS) and $(1+1)$-EA from binary spaces   
to the Lehmer code search space as given by Algs.~\ref{alg:RLS} and~\ref{alg:(1+1)EA-lehmer)}. Both algorithms can be equipped with different \textit{step operators} that decide how the individual mutates at a given position. While the RLS makes a step in exactly one position, the \mbox{$(1+1)$-EA} can make a step in each position $i\in[2..n]$, but with probability $1/(n-1)$. 

One may notice that the Lehmer code space is similar to the multi-valued decision variable search space $[r]^{n}$ considered in theoretical runtime analyses before \citep{Doerr2018}, but with decreasing domain sizes across dimensions. Consequently, we consider the following two \textit{step operators}, adapted from the above-mentioned work:
\begin{enumerate}
    \item The \textit{uniform} step operator: for $i\in[2..n]$, $\text{step}_i(x_i)$ chooses a value $[0..i-1]\backslash\{x_i\}$ uniformly at random.
    \item The $\pm 1$ step operator: for $i\in[2..n]$, $\text{step}_i(x_i)=\max\{0,x_i-1\}$ with probability $1/2$, and $\text{step}_i(x_i)=\min\{i-1,x_i+1\}$ otherwise.
\end{enumerate}
We refer to RLS equipped with the uniform step operator as RLS with uniform mutation strength, and RLS using the $\pm1$ step operator as RLS with unit mutation strength. The same naming convention applies to the $(1+1)$-EA. Given that domain sizes differ across dimensions in the Lehmer code space, we assign a probability vector $\mathbf{p}$ to guide the selection of the position to be modified. We consider the following two types of probability vectors:
\begin{enumerate}
    \item \textit{Uniform probability vector}: Each position $i\in[2..n]$ is selected with the same probability, 
    i.e.,
    $\textbf{p}_i=1/(n-1)$.
    \item \textit{Proportional probability vector}: The probability of selection for each position $i\in[2..n]$ depends on the 
    domain size at position $i$,
    specifically $\textbf{p}_i=2(i-1)/(n(n-1))$.
\end{enumerate}
\begin{algorithm}[tb]
\caption{$\text{RLS}$ minimizing a function $f: L_n\rightarrow \mathbb{R}$ with a given probability vector $\textbf{p}$ and step operator}
\label{alg:RLS}
\begin{algorithmic}[1]
\STATE Sample $x\in L_n$ uniformly at random
\FOR{$t=1,2,3,\dots$}
\STATE Choose $i\in[2..n]$ according to probability $\textbf{p}_i$
\STATE $y\gets x$
\STATE $y_i\gets\text{step}_i(x_i)$
\STATE \textbf{if} $f(y)\leq f(x)$ \textbf{then} $x\gets y$
\ENDFOR
\end{algorithmic}
\end{algorithm}

\begin{algorithm}[tb]
\caption{$(1{+}1)$-EA minimizing a function $f: L_n\rightarrow \mathbb{R}$ with a given step operator}
\label{alg:(1+1)EA-lehmer)}
\begin{algorithmic}[1]
\STATE Sample $x\in L_n$ uniformly at random
\FOR{$t=1,2,3,\dots$}
\FOR{$i$ from $n$ down to $2$}
\STATE Set $y_i{\gets}\text{step}_i(x_i)$ with probability $1/(n{-}1)$ and set $y_i\gets x_i$ otherwise
\ENDFOR
\STATE \textbf{if} $f(y)\leq f(x)$ \textbf{then} $x\gets y$
\ENDFOR
\end{algorithmic}
\end{algorithm}

\begin{algorithm}[t]
\caption{permutation-based $(1{+}1)$-EA minimizing a function $f: S_n\rightarrow \mathbb{R}$ with given mutation operator $\text{mut}(\cdot)$}
\label{alg:(1+1)EA-permutation)}
\begin{algorithmic}[1]
\STATE Sample $\sigma\in S_n$ uniformly at random
\FOR{$t=1,2,3,\dots$}
\STATE Choose $k\sim\text{Poi}(1)$
\STATE $\sigma^\prime\gets \text{mut}(\sigma,k)$
\STATE \textbf{if} $f(\sigma^\prime)\leq f(\sigma)$ \textbf{then} $\sigma\gets \sigma^\prime$
\ENDFOR
\end{algorithmic}
\end{algorithm}

We consider a permutation-based $(1+1)$-EA (see Alg.~\ref{alg:(1+1)EA-permutation)}) that operates on the classical linear representation of permutations, which we will refer simply as $S_n$ from now on. The mutation operator is parameterized by an integer $k$, and is defined via the composition of $k$ elementary operations which are randomly sampled. We investigate the following three mutation schemes:
\begin{enumerate}
    \item \textit{Transposition}: 
    $k$ transpositions $T_1,T_2,\dots,T_k$ 
    are selected
    independently and uniformly at random, and 
    $\text{mut}(\sigma,k)=\sigma\circ T_1\circ\cdots\circ T_{k-1}\circ T_k$.
    This operator has been studied in~\citet{Doerr2023}.
    \item \textit{Adjacent Swap}: 
    the algorithm selects $k$ adjacent transpositions $s_{i_1},s_{i_2},\dots,s_{i_k}$ independently and uniformly at random, and $\text{mut}(\sigma,k)=\sigma\circ s_{i_1}\circ s_{i_2}\circ\cdots\circ s_{i_k}$.
    \item \textit{Insertion}:
    $k$ insertions 
    $\text{jump}(i_1,j_1),\dots,\text{jump}(i_k,j_k)$
    are selected
    independently and uniformly at random, and $\text{mut}(\sigma,k)=\tau$, where $\tau$ is 
    generated by sequentially applying the $k$ insertions on $\sigma$. This operator was 
    also studied
    in~\citet{Scharnow2005}.
\end{enumerate}

    The theoretical benchmark functions that we will analyze the algorithms on 
    are based on well-structured benchmarks from the literature, including \textsc{OneMax}, \textsc{LeadingOnes} and \textsc{BinaryValue} (\textsc{BinVal} for short)  for binary spaces \citep{DJWoneone} and the fitness functions derived in  \citet{Scharnow2005}. The simple structure but distinct fitness landscape of these functions illustrates typical optimization scenarios in evolutionary algorithms. Moreover, the functions have been used as building blocks in the design and analysis of more advanced theory-inspired benchmarks in multimodal optimization \citep{LissovoiOW23AIJ23}.
    
    We study three groups of benchmark functions based on the above-mentioned \textsc{OneMax}, \textsc{LeadingOnes} and \textsc{BinVal} to conduct the theoretical runtime analyses. 
    \begin{enumerate}
        \item $\mathcal{L}\textsc{-OneMax}$ and $\textsc{INV}$: $\mathcal{L}$\textsc{-OneMax} is a unimodal linear function defined over $L_n$ where for all $l\in L_n$, $\mathcal{L}\textsc{-OneMax}(l)=\sum_{i=2}^n l_i$. The $\textsc{INV}$ is defined over $S_n$ and for all $\sigma\in S_n$, $\textsc{INV}(\sigma)$ counts the number of inversions in $\sigma$. 
        Our goal is to \textit{minimize} both functions, and the optimum point is $(0,0,\dots,0)$ and the identity permutation, respectively.
        
        \item $\mathcal{L}\textsc{-LeadingZeros}$ and $\textsc{PLeadingOnes}$: the two functions are defined over $L_n$ and $S_n$, respectively. For all $l\in L_n$, 
        \begin{align*}
            &\mathcal{L}\textsc{-LeadingZeros}(l)\\
            &=\max\{i\in[0..n]\mid \forall j\in[n-i+1..n],l_j=0\}, 
        \end{align*}
        while for all $\sigma\in S_n$, 
        \begin{align*}
        &\textsc{PLeadingOnes}(\sigma)\\
        &=\max\{i\in[0..n]\mid \forall j\in[1..i],\sigma(j)=j\}
        \end{align*}
        Our goal is to \textit{maximize} both functions, thus $f(y)\leq f(x)$ should be replaced by $f(y)\geq f(x)$ in all pseudo-code.
        
        \item $\textsc{FacVal}$, $\textsc{NVal}$ and $\textsc{LexVal}$: The $\textsc{FacVal}$ is defined over $L_n$ where for all $l\in L_n$,
        \begin{displaymath}
            \textsc{FacVal}(l)=\sum_{i=2}^n (i-1)!\cdot l_i
        \end{displaymath}
        the $\textsc{NVal}$ is defined over the muti-valued decision variables space $[n]^n$ and for all $x\in [n]^n$,
        \begin{displaymath}
            \textsc{NVal}(x)=\sum_{i=1}^n n^{i-1}\cdot x_i 
        \end{displaymath}
        the $\textsc{LexVal}$ is defined over $S_n$. In fact, all permutations $\sigma\in S_n$ can be ordered according to the lexicographic order of their vector representations, indexed from $0$. The $\textsc{LexVal}$ of a permutation $\sigma\in S_n$ is defined as its rank in this order, e.g., consider $\sigma=(1,3,2)\in S_3$, then $\textsc{LexVal}(\sigma)=1$. Our goal is to \textit{minimize} all three functions.
    \end{enumerate}

\section{Connection between $L_n$ and $S_n$}
\label{sec:connection}

To illustrate the relationships between a Lehmer vector $x \in L_n$ and its corresponding permutation $\sigma \in S_n$, it is crucial to note that the sum of the entries in $x$ equals the number of inversions in $\sigma$, which can be proved by definition.
Therefore, $\mathcal{L}\textsc{-OneMax}\circ L$ and $\textsc{INV}$ are equivalent functions where $L$ is the bijection from $S_n$ to $L_n$. The following helper lemma shows that $\mathcal{L}\textsc{-LeadingZeros}\circ L$ and $\textsc{PLeadingOnes}$, as well as $\textsc{FacVal}\circ L$ and $\textsc{LexVal}$, are equivalent functions. We denote the function $\mathcal{L}\textsc{-LeadingZeros}$ by $\textsc{LZ}$ and $\textsc{PLeadingOnes}$ by $\textsc{LO}$ for brevity.
\begin{lemma}
\label{lem:EquivalentFunction}
    For any $\sigma\in S_n$, let $L:S_n\rightarrow L_n$ denote the bijection which is defined before, then $\textsc{LO}(\sigma)=\textsc{LZ}(L(\sigma))$ and $\textsc{LexVal}(\sigma)=\textsc{FacVal}(L(\sigma))$.
\end{lemma}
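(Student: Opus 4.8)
The plan is to treat the two claimed identities separately, in each case unfolding Definition~\ref{def:lehmer} of the Lehmer code and matching it term by term with the quantity on the permutation side.

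For $\textsc{LO}(\sigma)=\textsc{LZ}(L(\sigma))$ I would first rewrite $\textsc{LZ}(L(\sigma))$ in permutation terms. Since the Lehmer code is indexed from $n$ down to $1$, the ``leading'' entries $L(\sigma)_n,L(\sigma)_{n-1},\dots,L(\sigma)_{n-i+1}$ are exactly $\#\{j>k\mid\sigma(j)<\sigma(k)\}$ for $k=1,\dots,i$, and such an entry is zero precisely when $\sigma(k)=\min\{\sigma(k),\sigma(k+1),\dots,\sigma(n)\}$, i.e.\ position $k$ starts no inversion. Hence $\textsc{LZ}(L(\sigma))$ is the largest $i$ such that each of the first $i$ positions is a running minimum ``from the right''. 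It then remains to show this coincides with the largest $i$ with $\sigma(j)=j$ for all $j\le i$. One direction is immediate: if $\sigma$ fixes $1,\dots,i$ pointwise then each such position is trivially a running minimum from the right. For the converse I would argue by induction on $k$: if positions $1,\dots,k-1$ are fixed points and position $k$ is a running minimum from the right, then $\{\sigma(k),\dots,\sigma(n)\}=\{k,\dots,n\}$, forcing $\sigma(k)=k$. The boundary case $i=n$ is handled by noting that $L(\sigma)=(0,\dots,0)$ corresponds to the identity.

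For $\textsc{LexVal}(\sigma)=\textsc{FacVal}(L(\sigma))$ I would use the standard decomposition of the lexicographic rank, $\textsc{LexVal}(\sigma)=\#\{\tau\in S_n\mid\tau<_{\mathrm{lex}}\sigma\}$, split according to the first position $k$ at which $\tau$ and $\sigma$ disagree. The permutations that agree with $\sigma$ on positions $1,\dots,k-1$ and carry a smaller value at position $k$ number exactly $c_k\cdot(n-k)!$, where $c_k$ is the number of values below $\sigma(k)$ not used in the prefix $\sigma(1),\dots,\sigma(k-1)$; a one-line argument identifies those unused small values with $\{\sigma(m)\mid m>k,\ \sigma(m)<\sigma(k)\}$, so $c_k=\#\{m>k\mid\sigma(m)<\sigma(k)\}=L(\sigma)_{n-k+1}$ by Definition~\ref{def:lehmer}. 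Therefore $\textsc{LexVal}(\sigma)=\sum_{k=1}^n L(\sigma)_{n-k+1}\,(n-k)!$, and the substitution $i=n-k+1$ turns this into $\sum_{i=1}^n (i-1)!\,L(\sigma)_i$. Since $L(\sigma)_1=0$ always, the $i=1$ term vanishes and the sum equals $\textsc{FacVal}(L(\sigma))$.

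I expect the only genuine content to lie in the $\textsc{LexVal}$ part --- specifically in setting up the counting identity $\textsc{LexVal}(\sigma)=\sum_k c_k(n-k)!$ correctly and verifying that $c_k$ is precisely the corresponding Lehmer-code entry. The two bookkeeping steps (the down-indexing convention of $L$ and the reindexing shift $i=n-k+1$) are where offset errors would most easily creep in, so I would state them explicitly. The $\textsc{LO}/\textsc{LZ}$ identity, by contrast, is essentially a short induction once the definitions are unwound.
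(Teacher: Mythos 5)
Your proof is correct. For the first identity ($\textsc{LO}=\textsc{LZ}\circ L$) your argument is essentially the paper's: both proofs reduce to the observation that the first $k$ Lehmer entries vanish exactly when the first $k$ positions are fixed points, the paper phrasing this as a case analysis on the value of $\textsc{LO}(\sigma)$ and you phrasing it as an induction via ``running minima from the right''; the content is the same. For the second identity your route is genuinely different. The paper never computes $\textsc{LexVal}(\sigma)$ directly: it shows that $\textsc{FacVal}$ is a bijection from $L_n$ onto $[0..n!-1]$ (using the factorial-base inequality $(i-1)!>\sum_{j=2}^{i-1}(j-1)(j-1)!$) and that $\textsc{FacVal}\circ L$ is strictly increasing with respect to the lexicographic order on $S_n$; since $\textsc{LexVal}$ is by definition the unique order-preserving bijection onto $[0..n!-1]$, the two must coincide. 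You instead count the permutations lexicographically below $\sigma$ directly, splitting on the first position of disagreement and identifying the count $c_k$ of unused smaller values with the Lehmer entry $L(\sigma)_{n-k+1}$. Your argument is more constructive and explains \emph{why} the coefficient $(i-1)!$ appears (it is the number of completions of a prefix), at the cost of having to set up the counting identity and the index shift carefully; the paper's argument avoids all counting but needs the injectivity of $\textsc{FacVal}$ and the monotonicity check as separate steps. Both are complete and correct.
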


Moreover, it is well known that applying an adjacent swap to a permutation $\sigma$ adds or removes exactly one inversion in the resulting permutation \citep{santucci2015algebraic}.
This makes it particularly interesting to examine how an adjacent swap affects the corresponding Lehmer code representation. 
The following lemma shows that an adjacent swap affects two entries of the Lehmer code by swapping their entries and changing one of them by $\pm 1$.
\begin{lemma}
\label{lem:AdjacentLehmerCode-2}
    Given $\sigma,\tau\in S_n$ where $\tau_i=\sigma_{i+1},\tau_{i+1}=\sigma_i$ and $\tau_j=\sigma_j$ for all $j\in[1..n]\backslash\{i,i+1\}$. Let $A$ denote the event that $L(\sigma)_{n-i+1}\leq L(\sigma)_{n-i}$. Then we have,
    \begin{itemize}
        \item 
        $L(\tau)_{n-i+1} = L(\sigma)_{n-i}+\mathbf{1}_{A}$,
        
        \item
        $L(\tau)_{n-i} = L(\sigma)_{n-i+1}-\mathbf{1}_{\overline{A}}$, and
        
        \item 
        $L(\tau)_{n-j+1}=L(\sigma)_{n-j+1}$ for all $j\in[1..n]\backslash\{i,i+1\}$.
    \end{itemize}
\end{lemma}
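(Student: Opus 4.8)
The plan is to work directly from Definition~\ref{def:lehmer}. Write $a := \sigma(i)$ and $b := \sigma(i+1)$ and note $a \neq b$. Since $\tau$ agrees with $\sigma$ on every position outside $\{i,i+1\}$ and $\{\tau(i),\tau(i+1)\} = \{a,b\}$ as a multiset, the values at positions $i+2,\dots,n$ are identical in $\sigma$ and $\tau$. Introduce the two "tail" quantities $S_a := \#\{j > i+1 \mid \sigma(j) < a\}$ and $S_b := \#\{j > i+1 \mid \sigma(j) < b\}$; these are the same whether computed from $\sigma$ or $\tau$. From the definition, $L(\sigma)_{n-i+1} = \mathbf{1}_{\{b<a\}} + S_a$ (the contribution of position $j=i+1$ plus the tail) and $L(\sigma)_{n-i} = S_b$.

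The first substantive step is to identify the event $A$. If $a < b$, then every $j > i+1$ with $\sigma(j) < a$ also has $\sigma(j) < b$, so $S_a \le S_b$, hence $L(\sigma)_{n-i+1} = S_a \le S_b = L(\sigma)_{n-i}$, i.e.\ $A$ holds. If $a > b$, then symmetrically $S_b \le S_a$, and the extra unit gives $L(\sigma)_{n-i+1} = 1 + S_a > S_b = L(\sigma)_{n-i}$, so $\overline{A}$ holds. Therefore $A \iff \sigma(i) < \sigma(i+1)$; in particular $\mathbf{1}_{\{b<a\}} = \mathbf{1}_{\overline{A}}$, so $S_a = L(\sigma)_{n-i+1} - \mathbf{1}_{\overline{A}}$, while $S_b = L(\sigma)_{n-i}$.

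Next I compute the changed entries of $L(\tau)$ from the definition. Since $\tau(i)=b$, $L(\tau)_{n-i+1} = \#\{j > i \mid \tau(j) < b\} = \mathbf{1}_{\{\tau(i+1) < b\}} + \#\{j>i+1 \mid \sigma(j) < b\} = \mathbf{1}_{\{a<b\}} + S_b = \mathbf{1}_{A} + L(\sigma)_{n-i}$, which is the first bullet. Since $\tau(i+1)=a$, $L(\tau)_{n-i} = \#\{j>i+1 \mid \tau(j) < a\} = \#\{j>i+1 \mid \sigma(j) < a\} = S_a = L(\sigma)_{n-i+1} - \mathbf{1}_{\overline{A}}$, which is the second bullet. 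For the third bullet, take $j \notin \{i,i+1\}$: if $j > i+1$, all values in positions $\ge j$ are unchanged; if $j < i$, then $\tau(j)=\sigma(j)$ and the counts $\#\{k>j \mid \tau(k)<\sigma(j)\}$ and $\#\{k>j \mid \sigma(k)<\sigma(j)\}$ can differ only over $k \in \{i,i+1\}$, where $\tau$ and $\sigma$ carry the same pair of values $\{a,b\}$, so the count is insensitive to their order and the two agree.

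The only non-mechanical step is the equivalence $A \iff \sigma(i)<\sigma(i+1)$; everything else is bookkeeping with Definition~\ref{def:lehmer}. I expect the monotonicity comparison ($S_a \le S_b$ when $a<b$, and the role of the single extra term contributed by position $i+1$) to be the crux, since it is what connects the inequality defining $A$ to the order relation between the two swapped values.
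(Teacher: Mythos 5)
Your proof is correct and follows essentially the same route as the paper: you establish the key equivalence $A \iff \sigma(i)<\sigma(i+1)$ (which the paper isolates as a separate helper lemma) and then track how the swap preserves the tail counts beyond position $i+1$ while creating or destroying the single inversion between positions $i$ and $i+1$. Your $S_a,S_b$ notation is just a cleaner bookkeeping of the paper's inversion-counting argument.
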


Adjacent swaps are of particular interest because they are a proper subset of both (generic) swaps and insertions.
Moreover, they allow recreating both insertions and generic swaps as follows.
An insertion $\text{jump}(i,j)$ is the composition the adjacent transpositions $s_i \circ s_{i+1} \circ \cdots \circ s_{j-1}$ for $j>i$,
while a transposition $(i\ j)$ equals $\text{jump}(i,j) \circ \text{jump}(j-1,i)$ for $j>i$.
Analogous expressions hold for $j<i$.

\section{Runtime Analysis in Lehmer Code Space}
\label{sec:runtime}
In this section, we rigorously analyze the performance of RLS and the $(1+1)$-EA on several representative benchmark functions introduced in Sect.~\ref{sec:alg_and_benchmark}. These analyses provide insights into the behavior of the algorithms under various conditions. In particular, we focus on deriving bounds on the expected number of function evaluations required to reach an optimal solution, also referred to as the expected optimization time. 

\subsection{RLS with uniform mutation strength}
We begin by analyzing $\textsc{RLS}$ equipped with the uniform step operator. For the fitness functions $\mathcal{L}\textsc{-OneMax}$ or $\textsc{FacVal}$, obtaining tight bounds via drift analysis is challenging due to the decreasing dimension sizes in the Lehmer code space. Fortunately, the problem can be reformulated as a variant of the Coupon Collector’s Problem with unequal probabilities. The result is stated in the following theorem.
\begin{theorem}
    \label{thm:RLS-uniform-OneMax}
    Consider $\textsc{RLS}$ equipped with uniform step operator and uniform probability vector, then its expected optimization time minimizing either $\mathcal{L}\textsc{-OneMax}$ or $\textsc{FacVal}$ is bounded from above by $(n-1)^2\ln n+(n-1)^2$ and from below by $(n-1)^2\ln n-o(n^2\log n)$.
\end{theorem}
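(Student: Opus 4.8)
The plan is to reduce the dynamics of RLS on these two functions to a \emph{non-uniform coupon collector} process and then obtain the upper bound by a first-moment argument and the lower bound by a second-moment argument. The key observation is this: both $\mathcal{L}\textsc{-OneMax}$ and $\textsc{FacVal}$ are separable with strictly positive coefficients, so a step that changes only coordinate $i$ is accepted exactly when the new value is at most $x_i$; hence $0$ is absorbing in every coordinate and each coordinate is non-increasing over time. Now whenever the uniform step operator is applied to a coordinate $i$ currently holding a value $v>0$, it outputs $0$ with probability exactly $1/(i-1)$ \emph{independently of $v$}, because $0$ is always one of the $i-1$ admissible target values $[0..i-1]\setminus\{v\}$. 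Thus, as long as coordinate $i$ is still nonzero, every selection of $i$ sends it to $0$ forever with probability exactly $1/(i-1)$, independently of the history; since the selected position is uniform over $[2..n]$, coordinate $i$ gets ``collected'' in a given iteration with probability $q_i := 1/((n-1)(i-1))$ while uncollected, and it is ``already collected'' at initialization with probability $1/i$, all coordinates being mutually independent. The optimization time $T$ is the number of iterations until all $n-1$ coupons are collected.

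For the upper bound, note that discarding the head start given by the already-collected coordinates can only increase $T$ (couple with the process that must collect all $n-1$ coupons from scratch, using the same randomness), so it suffices to bound the collection time $T'$ of the latter. For each $i$ the events ``in iteration $s$, coupon $i$ would be collected'' are i.i.d.\ Bernoulli$(q_i)$ across $s$, so $\Pr[T'>t]\le\sum_{i=2}^n(1-q_i)^t\le(n-1)(1-q_{\min})^t$ with $q_{\min}=1/(n-1)^2$. Using $\mathbb{E}[T']=\sum_{t\ge0}\Pr[T'>t]$, splitting the sum at $t^\star=\lceil(n-1)^2\ln n\rceil$, bounding the first $t^\star$ summands by $1$ and the tail by the geometric series $(n-1)(1-q_{\min})^{t^\star}/q_{\min}=(n-1)^3(1-q_{\min})^{t^\star}\le(n-1)^3/n$ (since $(1-q_{\min})^{t^\star}\le e^{-q_{\min}t^\star}\le1/n$), an elementary estimate gives $\mathbb{E}[T]\le\mathbb{E}[T']\le(n-1)^2\ln n+(n-1)^2$ for $n\ge3$; alternatively one may invoke a known bound for non-uniform coupon collectors.

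For the lower bound, set $t:=(n-1)^2\bigl(\ln(n-1)-2\ln\ln(n-1)\bigr)=(n-1)^2\ln n-o(n^2\log n)$, let $c:=t/(n-1)^2$, and let $Z$ count the coupons still uncollected after $t$ iterations, so $\mathbb{E}[Z]=\sum_{i=2}^n\tfrac{i-1}{i}(1-q_i)^t$. Keeping only the $\Theta(n/\log n)$ indices $i$ with $i-1\ge(n-1)(1-\tfrac{1}{2c})$, one checks $q_it\le c+1$, hence $(1-q_i)^t=\Omega(e^{-c})=\Omega\bigl((\log n)^2/n\bigr)$ for each such term, which yields $\mathbb{E}[Z]=\Omega(\log n)\to\infty$. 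Moreover the indicators ``coupon $i$ uncollected at time $t$'', $i\in[2..n]$, are pairwise non-positively correlated, because within a single iteration the events ``collect $i$'' and ``collect $j$'' are disjoint, so $(1-q_i-q_j)^t\le(1-q_i)^t(1-q_j)^t$, while the initial-value randomness is independent across coordinates. Therefore $\mathrm{Var}(Z)\le\mathbb{E}[Z]$, and Chebyshev's inequality gives $\Pr[Z=0]\le1/\mathbb{E}[Z]\to0$, i.e.\ $\Pr[T>t]=1-o(1)$, so $\mathbb{E}[T]\ge t\cdot\Pr[T>t]=(n-1)^2\ln n-o(n^2\log n)$.

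The crux of the proof is spotting the reformulation: once one observes that the collapse probability of a selected nonzero coordinate depends only on the (shrinking) domain size and not on the current value, the resulting memorylessness turns the process into a transparent non-uniform coupon collector, after which both bounds follow from standard first- and second-moment calculations. The one place that needs care is calibrating $t$ in the lower bound so that the expected number of uncollected coupons still tends to infinity while the loss relative to $(n-1)^2\ln n$ remains $o(n^2\log n)$ — this is precisely what forces the $\ln\ln n$ correction term and the weaker additive error there compared to the clean $+(n-1)^2$ of the upper bound.
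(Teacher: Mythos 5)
Your proof is correct, and while it rests on the same key reformulation as the paper --- once a nonzero coordinate $i$ is selected, the uniform step sends it to the absorbing value $0$ with probability exactly $1/(i-1)$ regardless of its current value, so the run is a coupon-collector process with unequal probabilities --- the technical route you take to extract the two bounds is genuinely different. The paper factors the optimization time as (number of ``correction'' events) times (geometric waiting time between them) via Wald's equation, and then invokes the exact integral representation $\mathbb{E}[N]=\int_0^\infty\bigl(1-\prod_i(1-\mathrm{e}^{-p_ix})\bigr)\mathrm{d}x$ of Flajolet et al., estimating that integral carefully from above and below; for the lower bound it additionally applies a Chernoff bound to the number of initially nonzero coordinates and pessimistically relocates them to the smallest domains. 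You instead work directly with the per-iteration collection probabilities $q_i=1/((n-1)(i-1))$: a union bound plus a tail-sum split at $t^\star=\lceil(n-1)^2\ln n\rceil$ gives the upper bound (your arithmetic checks out for $n\ge3$, and $n=2$ is trivial), and a second-moment argument --- exact initialization probabilities $\tfrac{i-1}{i}$, restriction to the $\Theta(n/\log n)$ slowest coordinates, and the pairwise negative correlation $(1-q_i-q_j)^t\le(1-q_i)^t(1-q_j)^t$ feeding Chebyshev --- gives the lower bound. Your version is more elementary and self-contained (no Wald, no integral formula), at the price of a slightly less explicit upper-bound constant derivation and the same $o(n^2\log n)$ slack in the lower bound that the paper also incurs; the paper's integral machinery is heavier but yields the constants somewhat more mechanically and reuses the quantity $I_n$ for both directions.
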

The following theorem provides an exact expression for the expected optimization time of RLS on $\mathcal{L}\textsc{-LeadingZeros}$. 
\begin{theorem}
\label{thm:RLS-uniform-LeadingZeros}
    The expected optimization time of \textsc{RLS}, when using the uniform step operator and uniform probability vector on the $\mathcal{L}\textsc{-LeadingZeros}$ function, is \mbox{$n^3/2-2n^2+nH_n+3n/2-H_n$}.
\end{theorem}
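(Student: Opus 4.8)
The plan is to partition the run into phases on which the fitness is constant, to evaluate the expected length of each phase exactly, and to sum up by linearity of expectation. Recall that $\textsc{LZ}(l)$ counts the length of the maximal all-zero block among the leading entries $l_n,l_{n-1},\dots$; call an index $i\in[2..n]$ \emph{active} when $\textsc{LZ}=n-i$, \ie when $l_{i+1}=\dots=l_n=0$ while $l_i\neq 0$. The fitness is non-decreasing along the run and can never equal $n-1$ (since $l_1\equiv 0$, already $l_2=\dots=l_n=0$ forces $\textsc{LZ}=n$), so at every iteration before the optimum is reached exactly one index in $[2..n]$ is active. Letting $A_i$ be the number of iterations with index $i$ active, the active intervals partition the run, hence $T=\sum_{i=2}^{n}A_i$ and $\expect{T}=\sum_{i=2}^{n}\expect{A_i}$.

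First I would show that, conditioned on index $i$ ever becoming active, $\expect{A_i}=(n-1)(i-1)$. While $i$ is active, the entries $l_{i+1},\dots,l_n$ are zero and turning any of them nonzero strictly decreases the fitness (so such steps are rejected), the entries $l_2,\dots,l_{i-1}$ do not influence the fitness, and $l_i$ is nonzero; hence the phase ends exactly when index $i$ is selected---probability $1/(n-1)$---and the uniform step operator maps its current nonzero value to $0$---probability $1/(i-1)$, since it draws uniformly from $[0..i-1]$ with the current nonzero value excluded. As these events are independent over iterations, the phase length is geometric with success probability $\tfrac{1}{(n-1)(i-1)}$ and mean $(n-1)(i-1)$, regardless of the configuration of the remaining entries at the start of the phase.

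The heart of the argument is to establish $\Pr[i\text{ becomes active}]=(i-1)/i$. Let $t^\ast$ be the first iteration at which $l_{i+1}=\dots=l_n=0$ (so $t^\ast=0$ for $i=n$); since the optimum is reached almost surely, $t^\ast<\infty$ a.s. At time $t^\ast$ the fitness is at least $n-i$, and index $i$ becomes active if and only if $l_i\neq 0$ at that moment---if $l_i=0$ there, the block $l_{i+1},\dots,l_n$ stays zero and $l_i$ can never be un-zeroed, so $i$ is never active. It therefore suffices to show that $l_i$ is uniformly distributed on $[0..i-1]$ at time $t^\ast$. During $[0,t^\ast)$ the active index is strictly larger than $i$, so every selection of index $i$ is accepted and simply applies the uniform step operator, which is a symmetric, doubly stochastic transition and hence preserves the uniform distribution on $[0..i-1]$. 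Moreover, during $[0,t^\ast)$ the active index---and thus which steps are accepted---is a function of $(l_{i+1},\dots,l_n)$ and the selection sequence only; consequently $t^\ast$ and the number $m^\ast$ of selections of index $i$ before $t^\ast$ depend solely on the selections, the initial values $l_{i+1},\dots,l_n$, and their update randomness, all independent of the initial value of $l_i$ and of the step-operator draws applied to $l_i$. Expressing $l_i$ at time $t^\ast$ as a composition of $m^\ast$ i.i.d.\ uniform-step-operator maps applied to its uniform initial value, with $m^\ast$ independent of this family, uniformity after every fixed number of maps plus the independence of $m^\ast$ gives uniformity at $t^\ast$. I expect this independence-and-symmetry argument to be the main obstacle, as it requires carefully separating the randomness that drives $l_i$ from the randomness that determines the stopping time $t^\ast$.

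Putting the pieces together, $\expect{A_i}=\tfrac{i-1}{i}(n-1)(i-1)=(n-1)\tfrac{(i-1)^2}{i}$, and therefore, writing $j=i-1$ and using $\tfrac{j^2}{j+1}=j-1+\tfrac{1}{j+1}$ together with $\sum_{j=1}^{n-1}\tfrac{1}{j+1}=H_n-1$,
\[
\expect{T}=(n-1)\sum_{i=2}^{n}\frac{(i-1)^2}{i}=(n-1)\sum_{j=1}^{n-1}\Bigl(j-1+\frac{1}{j+1}\Bigr)=(n-1)\Bigl(\frac{(n-1)(n-2)}{2}+H_n-1\Bigr).
\]
A routine expansion of the right-hand side yields $n^3/2-2n^2+nH_n+3n/2-H_n$, as claimed.
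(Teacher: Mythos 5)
Your proof is correct and follows essentially the same route as the paper's: decompose the run by the position currently blocking the leading-zeros prefix, show that this position's value is uniform on $[0..i-1]$ at the moment it becomes critical (the paper's Lemma~\ref{lem:RLS-position-i}), and compute the expected waiting time until it is set to zero, yielding $(n-1)\sum_{i=2}^n (i-1)^2/i$. The only cosmetic differences are that you establish the uniformity via a doubly-stochastic/independence argument rather than the paper's induction conditioned on $\tau_i=k$, and you fold the paper's Wald-plus-random-walk computation into a single geometric phase length split as $\Pr[i\text{ active}]\cdot(n-1)(i-1)$, which gives the identical value.
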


One may notice that there exists an imbalance in the optimization time of the operator across different positions due to the unequal domain sizes in the Lehmer code space. Therefore, it is natural to investigate whether the expected optimization time of $\textsc{RLS}$ will be faster if equipped with the proportional probability vector which, from the probability perspective, smooths out the effects caused by differences in domain sizes. The following two theorems state the results.
\begin{theorem}
\label{thm:RLS-uniform-proportional-OneMax}
    Consider $\textsc{RLS}$ equipped with uniform step operator and proportional probability vector, then its expected optimization time minimizing either $\mathcal{L}\textsc{-OneMax}$ or $\textsc{FacVal}$ is $n(n-1)(\ln(n)+\Theta(1))/2$.
\end{theorem}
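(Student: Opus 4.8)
The plan is to observe that for both $\mathcal{L}\textsc{-OneMax}$ and $\textsc{FacVal}$ a single-position mutation at index $i$ is accepted if and only if the new value does not exceed the old one: for $\mathcal{L}\textsc{-OneMax}$ this is immediate, and for $\textsc{FacVal}$ it follows since $f(y)-f(x)=(i-1)!\,(y_i-x_i)$ when $y$ and $x$ differ only in coordinate $i$. In particular, a step that sets $x_i$ to $0$ is always accepted, and once a coordinate equals $0$ it remains $0$ forever (any step from $0$ strictly increases that coordinate and is rejected, since the uniform step operator then picks a target in $[1..i-1]$). Call a position $i\in[2..n]$ \emph{collected} once it has reached value $0$, let $U_t\subseteq[2..n]$ be the set of not-yet-collected positions after $t$ steps, and let $m:=|U_0|$ be the number of positions that are nonzero in the uniformly random initial point; note $m\le n-1$. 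The key observation is that whenever position $i$ is currently nonzero of value $v$, the uniform step operator moves it to $0$ with probability exactly $1/(i-1)$, since $0$ is one of the $i-1$ equally likely targets in $[0..i-1]\setminus\{v\}$, and this probability does not depend on $v$. Hence, in any iteration with $|U_t|=k$, the probability that the step collects a position (there can be at most one, and it lies in $U_t$) equals
\[
\sum_{i\in U_t}\mathbf{p}_i\cdot\frac{1}{i-1}=\sum_{i\in U_t}\frac{2(i-1)}{n(n-1)}\cdot\frac{1}{i-1}=\frac{2k}{n(n-1)}=:kq,
\]
independently of the current values of the uncollected positions and of the history.

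Consequently the process is exactly a uniform coupon-collector process: $|U_t|$ is non-increasing, decreases by at most one per step, and stays at each level $k$ for a number $T_k$ of steps that is geometrically distributed with parameter $kq$, with the $T_k$ mutually independent (the per-step collection probability $kq$ depends only on $k$, not on the state reached within that level, so $T_k$ is independent of the entry state of level $k$). Therefore $\mathbb{E}[T\mid m]=\sum_{k=1}^{m}\tfrac{1}{kq}=\tfrac{n(n-1)}{2}H_m$, exactly mirroring the textbook analysis of RLS on binary \textsc{OneMax}. This identity is the same for both functions, which already explains why the two optimization times coincide.

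The upper bound is immediate from $m\le n-1$: $\mathbb{E}[T]\le\tfrac{n(n-1)}{2}H_{n-1}=\tfrac{n(n-1)}{2}(\ln n+O(1))$. For the lower bound, position $i$ is nonzero initially with probability $(i-1)/i$, so the number $Z=(n-1)-m$ of initially-zero positions satisfies $\mathbb{E}[Z]=\sum_{i=2}^{n}1/i=H_n-1=O(\log n)$, and Markov's inequality gives $\Pr[m<\lceil n/2\rceil]=\Pr[Z\ge n-\lceil n/2\rceil]=O((\log n)/n)=o(1)$. Hence $\mathbb{E}[T]=\tfrac{n(n-1)}{2}\,\mathbb{E}[H_m]\ge\tfrac{n(n-1)}{2}\,(1-o(1))\,H_{\lceil n/2\rceil}=\tfrac{n(n-1)}{2}(\ln n-O(1))$, and combining with the upper bound yields $\mathbb{E}[T]=\tfrac{n(n-1)}{2}(\ln n+\Theta(1))$. (A slightly more careful estimate of $\mathbb{E}[H_m]=H_{n-1}-\sum_{k\ge1}\tfrac1k\Pr[m\le k-1]$, splitting the correction at $k\approx n-\Theta(\log n)$ and using a Chernoff tail bound on $Z$ for the large-$k$ part, shows the additive term in fact equals $\gamma+o(1)$.)

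The crux of the argument is not a difficult calculation but the state-independence of the collection probability: although accepted non-zeroing moves keep decreasing the values of the pending coordinates, the chance of zeroing position $i$ per selection is the constant $1/(i-1)$, which together with the proportional weights $\mathbf{p}_i\propto i-1$ equalizes all per-position collection rates and collapses the problem to a plain coupon-collector process with $m\le n-1$ coupons. The only other point requiring care is extracting $\Theta(1)$ rather than merely $O(\log n)$ in the logarithmic term, which rests on the easy fact that a uniformly random start has only $O(\log n)$ zero coordinates, so $H_m=\ln n+\Theta(1)$ with overwhelming probability.
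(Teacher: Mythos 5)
Your proposal is correct and follows essentially the same route as the paper's proof: both reduce the process to a coupon-collector argument by observing that the proportional weight $\mathbf{p}_i\propto i-1$ exactly cancels the $1/(i-1)$ probability of zeroing a selected nonzero position, giving level-waiting times that are geometric with parameter $2k/(n(n-1))$ and hence $\mathbb{E}[T\mid X_0]=n(n-1)H_{X_0}/2$, followed by a concentration argument on the initial number of nonzero positions. The only cosmetic difference is that you control the initial-state fluctuation via Markov's inequality on the count of initially-zero positions (which suffices, since the resulting $O(\log n/n)$ failure probability times $H_{\lceil n/2\rceil}=O(\log n)$ is still $o(1)$), whereas the paper uses a Chernoff bound on the count of nonzero positions and Jensen's inequality for the matching upper bound.
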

\begin{theorem}
\label{thm:RLS-uniform-proportional-LeadingZeros}
    Consider $\textsc{RLS}$ equipped with uniform step operator and proportional probability vector, then its expected optimization time maximizing $\mathcal{L}\textsc{-LeadingZeros}$ is $n^3/2-n^2 H_n/2-n^2/2+nH_n/2$.
\end{theorem}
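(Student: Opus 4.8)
The plan is to carry out a \textsc{LeadingOnes}-style analysis that tracks the \emph{frontier} $k := \textsc{LZ}(x)$. Since a step is accepted only when it does not decrease fitness, $\textsc{LZ}(x)$ is non-decreasing along the run. Because the last Lehmer entry (index $1$) is identically $0$, the value $k=n-1$ is never attained: whenever $\textsc{LZ}(x)=k$ for some $k\le n-2$, the entries at indices $n,n-1,\dots,n-k+1$ are $0$ while the entry at index $n-k$, whose domain is $[0..n-k-1]$, is non-zero; call $n-k$ the \emph{active} index at frontier $k$.

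First I would compute the per-step probability $q_k$ of strictly increasing the frontier while $\textsc{LZ}(x)=k$ (for $0\le k\le n-2$). The only way to increase it is to select the active index $n-k$ --- probability $\mathbf{p}_{n-k}=2(n-k-1)/(n(n-1))$ --- and to have $\text{step}_{n-k}$ output $0$; as the current value is non-zero, $0$ is one of the $n-k-1$ equally likely alternatives, giving probability $1/(n-k-1)$. Hence $q_k=2/(n(n-1))$, \emph{independent of $k$}; this is precisely the effect the proportional probability vector is designed to produce. Any step that does not increase the frontier leaves it unchanged (modifying a ``locked'' leading zero is rejected; changing the active entry to another non-zero value, or changing any entry at an index below $n-k$, keeps $\textsc{LZ}$ at $k$ and is accepted). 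Therefore, conditioned on the frontier ever taking the value $k$, the number $N_k$ of steps spent with $\textsc{LZ}(x)=k$ is geometric with success probability $q_k$, so $\expect{N_k}=\tfrac{n(n-1)}{2}\Pr[\textsc{LZ}\text{ attains }k]$ and the expected optimization time equals $\tfrac{n(n-1)}{2}\sum_{k=0}^{n-2}\Pr[\textsc{LZ}\text{ attains }k]$.

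Next I would evaluate $\Pr[\textsc{LZ}\text{ attains }k]$. Let $\tau_k$ be the first time $\textsc{LZ}(x)\ge k$. Because $\textsc{LZ}$ is non-decreasing, the frontier equals $k$ at some point if and only if it equals $k$ at time $\tau_k$, that is, if and only if the entry $l_{n-k}$ at time $\tau_k$ is non-zero (not a ``free rider''). The key claim is that $l_{n-k}(\tau_k)$ is uniform on $[0..n-k-1]$: entry $n-k$ is initialized uniformly and is changed only by $\text{step}_{n-k}$, which maps the uniform law on $[0..n-k-1]$ to itself; every such change occurring before $\tau_k$ is accepted, since while $\textsc{LZ}(x)<k$ the index $n-k$ lies strictly behind the first non-zero entry; and whether $\textsc{LZ}(x)\ge k$ --- hence $\tau_k$ --- is determined solely by the entries at indices $n,\dots,n-k+1$, whose dynamics and acceptances do not involve entry $n-k$. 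Conditioning on the position-selection sequence, the fresh step-operator randomness driving entry $n-k$ is then independent of $\tau_k$ (and of the number of times entry $n-k$ is modified by $\tau_k$), so $l_{n-k}(\tau_k)$ inherits the uniform law of $l_{n-k}(0)$. Thus $\Pr[\textsc{LZ}\text{ attains }k]=1-1/(n-k)$ for $0\le k\le n-2$ (for $k=n-1$ this consistently gives $0$, since the domain of index $1$ is a singleton).

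Finally, $\sum_{k=0}^{n-2}\bigl(1-\tfrac{1}{n-k}\bigr)=(n-1)-\sum_{j=2}^{n}\tfrac1j=(n-1)-(H_n-1)=n-H_n$, so the expected optimization time is $\tfrac{n(n-1)}{2}(n-H_n)=n^3/2-n^2H_n/2-n^2/2+nH_n/2$, as claimed. The main obstacle will be making the uniformity of $l_{n-k}(\tau_k)$ rigorous: one must rule out that conditioning on the frontier's trajectory (equivalently, on $\tau_k$ together with which steps touched entry $n-k$) biases that entry, which needs a careful conditional-independence argument exploiting that the step operator's transition matrix is doubly stochastic. Everything else is bookkeeping; as a sanity check, running the same scheme with the uniform probability vector gives $q_k=1/((n-1)(n-k-1))$ and reproduces Theorem~\ref{thm:RLS-uniform-LeadingZeros}.
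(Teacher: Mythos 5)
Your proof is correct and follows essentially the same route as the paper: the same position/frontier decomposition, the same key uniformity lemma (the paper's Lemma~\ref{lem:RLS-position-i}, established there by exactly the conditional-independence induction you sketch as the "main obstacle"), and the same final sum $\tfrac{n(n-1)}{2}\sum_{i=2}^{n}\bigl(1-\tfrac{1}{i}\bigr)$. The only cosmetic difference is that you fold the position-selection probability and the step-success probability into a single geometric waiting time per frontier level, whereas the paper separates the selection waiting time (via Wald's equation) from the expected number of touches of the active position, computed as a random-walk hitting time $(i-1)^2/i$ --- the same quantity as your $\bigl(1-\tfrac1i\bigr)(i-1)$.
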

We note that the proportional probability vector gives no 
asymptotic speed-up compared to the uniform one. 
Moreover, for the unit step operator, the mutation strength is the same for each position. Therefore, the proportional probability vector will no longer be considered in the subsequent analyses.
\subsection{RLS with unit mutation strength}
\begin{theorem}
\label{thm:RLS-unit-OneMax}
    Consider $\textsc{RLS}$ equipped with $\pm1$ step operator and uniform probability vector, then its expected optimization time minimizing either $\mathcal{L}\textsc{-OneMax}$ or $\textsc{FacVal}$ is $\Theta(n^2)$.
\end{theorem}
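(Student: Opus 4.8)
The plan is to first reduce the two cases to a single one and then exploit a strong structural simplification. I would observe that $\mathcal{L}\textsc{-OneMax}$ and $\textsc{FacVal}$ are both linear functions $f(x)=\sum_{i=2}^{n} c_i x_i$ over $L_n$ with strictly positive weights ($c_i=1$, resp.\ $c_i=(i-1)!$). For the $\pm1$ step operator, which perturbs a single coordinate $x_i$ by $+1$ or $-1$ (clipped to $[0..i-1]$), an attempted increment either strictly worsens $f$ (rejected) or is a boundary no-op (accepted, no change), while an attempted decrement either strictly improves $f$ (accepted) or is a no-op at $x_i=0$ (accepted, no change). Hence, run with identical random choices, $\textsc{RLS}$ follows exactly the same trajectory on both functions, so it suffices to analyze $\mathcal{L}\textsc{-OneMax}$. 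The key point I would then extract is that, because increments are never accepted, the coordinates never interfere: if $D_i(t)$ denotes the number of the first $t$ iterations in which position $i$ is selected and a decrement is attempted---an $f$-independent quantity with $D_i(t)\sim\mathrm{Bin}(t,\tfrac{1}{2(n-1)})$---then $x_i^{(t)}=\max\{0,x_i^{(0)}-D_i(t)\}$ for every $i$, so the optimization time is $T=\min\{t\ge 0:\ D_i(t)\ge x_i^{(0)}\ \text{for all}\ i\in[2..n]\}$.

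For the lower bound I would keep only position $n$: $T\ge\min\{t:\ D_n(t)\ge x_n^{(0)}\}$, which, conditioned on $x_n^{(0)}=v$, is a sum of $v$ i.i.d.\ $\mathrm{Geom}(\tfrac{1}{2(n-1)})$ waiting times of mean $2(n-1)$ each; since $x_n^{(0)}$ is uniform on $[0..n-1]$ with mean $(n-1)/2$, this yields $\expect{T}\ge (n-1)^2=\Omega(n^2)$. For the upper bound I would use $x_i^{(0)}\le n-1$ to get $T\le\min\{t:\ D_i(t)\ge n-1\ \text{for all}\ i\}$, fix $t_0=8(n-1)^2$, and bound $\Pr[D_i(t_0)<n-1]$ by a Chernoff estimate (the mean $t_0/(2(n-1))=4(n-1)$ is far above $n-1$), so that a union bound over the $n-1$ positions gives $\Pr[T>t_0]\le\tfrac12$ from \emph{every} starting state. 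A standard restart argument then closes the gap: with $M:=\sup_{x}\expect{T\mid x^{(0)}=x}$---finite since from any state the optimum is reached within expected time at most $2(n-1)\binom{n}{2}$---we get $M\le t_0+\tfrac12 M$, hence $M=O(n^2)$; the finitely many small $n$ are trivial. Together, $\expect{T}=\Theta(n^2)$ for both functions.

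I expect the main obstacle to be making the upper bound tight at $\Theta(n^2)$ rather than $\Theta(n^2\log n)$. A naive multiplicative-drift analysis on the fitness, or on a concave potential such as $\sum_i\sqrt{x_i}$, loses a logarithmic factor, because the fitness can sit almost entirely in one coordinate whose value is as large as $n-1$ while the per-step improvement probability is only $\Theta(1/n)$. The exact decoupling $x_i^{(t)}=\max\{0,x_i^{(0)}-D_i(t)\}$---valid precisely because increments are always rejected---removes this log: it turns the whole question into a tail bound on $n-1$ binomials whose means over $\Theta(n^2)$ steps already exceed the required $n-1$ decrements, so no coupon-collector logarithm appears. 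The remaining ingredients---the Chernoff bound, the union bound, and the restart argument (including the uniform-in-start failure probability and the a-priori finiteness of $M$)---are routine.
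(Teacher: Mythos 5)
Your proposal is correct, and the two halves relate differently to the paper's proof. The lower bound is essentially the paper's: both arguments track only position $n$, use that its initial value is uniform on $\{0,\dots,n-1\}$, and sum geometric waiting times of mean $2(n-1)$ (Wald's equation in the paper, direct conditioning in yours) to get $(n-1)^2$. The upper bound, however, takes a genuinely different route. The paper applies the Multiplicative Drift Theorem to the exponential potential $X_t=\sum_{i=2}^n\bigl(2^{x_i^{(t)}}-1\bigr)$, for which the drift is at least $X_t/(4(n-1))$ while $\ln X_0=O(n)$, so the usual logarithm of the drift theorem contributes only a factor $n$ rather than $\log$ of a polynomial; this yields the explicit bound $(4\ln 2)n(n-1)+4(n-1)$. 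You instead exploit the exact decoupling $x_i^{(t)}=\max\{0,\,x_i^{(0)}-D_i(t)\}$, valid because every attempted increment is either strictly worsening (rejected) or a boundary no-op, and then finish with Chernoff, a union bound over the $n-1$ independent binomials, and a restart argument. Your route is more elementary and more informative---it identifies the optimization time exactly as $\max_i\min\{t: D_i(t)\ge x_i^{(0)}\}$ and makes transparent why no coupon-collector logarithm arises---at the price of a slightly larger constant and a two-stage (tail bound plus restart) argument; the paper's potential-function trick is shorter and gives a clean closed-form constant, but hides the structural reason the log disappears. Your reduction of $\textsc{FacVal}$ to $\mathcal{L}\textsc{-OneMax}$ (identical accept/reject decisions for any coordinatewise strictly increasing linear function under single-coordinate $\pm1$ steps) is the same observation the paper makes, and all the routine ingredients you defer (the Chernoff estimate, the a priori finiteness of the worst-case expected time, the finitely many small $n$) do go through.
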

\begin{theorem}
\label{thm:RLS-unit-LeadingZeros}
    Consider $\textsc{RLS}$ equipped with $\pm1$ step operator and uniform probability vector, then its expected optimization time maximizing $\mathcal{L}\textsc{-LeadingZeros}$ is \mbox{$2n^4/9-7n^3/18+n^2/9+n/18$}.
\end{theorem}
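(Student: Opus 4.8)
The plan is to decompose the running time by which Lehmer coordinate is currently the first nonzero one, to identify the random walk performed on that coordinate, and to sum up the per-coordinate contributions; the exact polynomial then drops out of elementary sums. Write $\textsc{LZ}$ for $\mathcal{L}\textsc{-LeadingZeros}$; since a move is accepted only if it does not decrease $\textsc{LZ}$, the value of $\textsc{LZ}$ is nondecreasing along a run. For $k\in[2..n]$ let $\tau_k$ be the first step at which $\textsc{LZ}(x)\ge n-k$ and $\sigma_k$ the first step at which $\textsc{LZ}(x)\ge n-k+1$; equivalently, $\tau_k$ (resp.\ $\sigma_k$) is the first step at which all coordinates of index $>k$ (resp.\ all of index $\ge k$) equal $0$. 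Because $n-k+1=n-(k-1)$ we have $\tau_{k-1}=\sigma_k$, and $\tau_n=0$, so $\sum_{k=2}^n(\sigma_k-\tau_k)$ telescopes to $\sigma_2$, which is exactly the hitting time $T$ of the all-zero vector (recall that $\textsc{LZ}=n-1$ is impossible). Hence $\expect{T}=\sum_{k=2}^n\expect{\sigma_k-\tau_k}$, and $\sigma_k-\tau_k$ is the number of steps during which coordinate $k$ is the ``active'' one.

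Second, I would pin down the behavior of a single coordinate $k$ (writing $x_k(t)$ for its value after step $t$). Whether a proposed move at a coordinate $j$ is accepted depends only on the current value $x_j$ and on whether all coordinates of index $>j$ equal $0$; in particular, coordinate $k$ never influences coordinates of larger index. Therefore, before step $\tau_k$ a move at coordinate $k$ is always accepted and merely performs a $\pm1$ step on $[0..k-1]$ reflected (lazily) at both ends; this walk has a doubly stochastic transition matrix and hence preserves the uniform distribution on $[0..k-1]$. The number $S$ of times coordinate $k$ is picked before $\tau_k$ is a function of the picked-position sequence and the evolution of coordinates of index $>k$ only, and is therefore independent of coordinate $k$'s initial value and of its own coin flips; since that initial value is uniform on $[0..k-1]$, it follows that $x_k(\tau_k)$ is uniformly distributed on $[0..k-1]$ (if $x_k(\tau_k)=0$ then $\sigma_k=\tau_k$ and coordinate $k$ contributes nothing).

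Third, from step $\tau_k$ on, coordinates of index $>k$ remain $0$, so progress happens only when coordinate $k$ is picked (probability $1/(n-1)$ per step); while active, coordinate $k$ performs a symmetric $\pm1$ walk on $\{0,\dots,k-1\}$ that reflects at $k-1$ and is absorbed at $0$. Letting $h_k(v)$ denote the expected number of picks of coordinate $k$ needed to reach $0$ from value $v$, Wald's identity over the independent geometric gaps between consecutive picks of coordinate $k$ gives $\expect{\sigma_k-\tau_k\mid x_k(\tau_k)=v}=(n-1)h_k(v)$, valid also for $v=0$. Solving $h_k(0)=0$, $h_k(v)=1+\tfrac12 h_k(v-1)+\tfrac12 h_k(v+1)$ for $1\le v\le k-2$, and $h_k(k-1)=2+h_k(k-2)$ (homogeneous solutions $1,v$; particular solution $-v^2$) yields $h_k(v)=v(2k-1-v)$. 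Averaging over $v\sim\mathrm{Unif}[0..k-1]$ gives $\frac1k\sum_{v=0}^{k-1}v(2k-1-v)=\tfrac{(k-1)(2k-1)}{3}$, so $\expect{\sigma_k-\tau_k}=\tfrac{(n-1)(k-1)(2k-1)}{3}$.

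Finally, summing, $\expect{T}=\tfrac{n-1}{3}\sum_{k=2}^n(k-1)(2k-1)=\tfrac{n-1}{3}\cdot\tfrac{(n-1)n(4n+1)}{6}=\tfrac{(n-1)^2n(4n+1)}{18}$, and expanding $(n-1)^2n(4n+1)=4n^4-7n^3+2n^2+n$ gives the claimed value $\tfrac{2n^4}{9}-\tfrac{7n^3}{18}+\tfrac{n^2}{9}+\tfrac{n}{18}$. I expect the main obstacle to be the uniformity claim for $x_k(\tau_k)$: the acceptance rule couples the coordinates and $\tau_k$ is an intricate stopping time, so one must carefully exploit the triangular dependence structure (coordinate $k$ depends on the coordinates of larger index only through whether they are all $0$, and never acts back on them) to conclude that the number of free steps taken by coordinate $k$ before $\tau_k$ is independent of that coordinate's own randomness — which is exactly what makes the doubly stochastic argument go through.
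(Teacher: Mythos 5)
Your proposal is correct and follows essentially the same route as the paper: a level-by-level decomposition with Wald's equation, the uniformity of the active coordinate's value at its activation time (the paper's Lemma~\ref{lem:RLS-position-i}, which you obtain more slickly via double stochasticity of the lazy reflected walk rather than by induction on time), and the hitting-time formula $v(2k-1-v)$ for the absorbed/reflected walk (the paper's Theorem~\ref{thm:Random-Walk}). The final summation and the resulting polynomial match the paper exactly.
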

\subsection{$(1+1)$-EA with uniform mutation strength}
Now, we begin to analyze $(1+1)$-EA equipped with uniform step operator, and the following theorem provides a general lower bound for its expected optimization time on both $\mathcal{L}\textsc{-OneMax}$ and $\textsc{FacVal}$.
\begin{theorem}
\label{thm:linear-function-lower-bound}
    Consider $(1+1)$-EA equipped with uniform step operator, then its expected optimization time minimizing either $\mathcal{L}\textsc{-OneMax}$ or $\textsc{FacVal}$ is bounded from below by $\Omega(n^2\log n)$.
\end{theorem}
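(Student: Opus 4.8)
The plan is to prove the lower bound by a coupon-collector argument focused on the positions with the largest domains, which are the hardest to drive to $0$ under the uniform step operator. The crucial structural fact is: whenever position $i$ is selected (probability $1/(n-1)$) and currently holds a nonzero value, the uniform step operator outputs $0$ only with probability $1/(i-1)$; hence in any single iteration the probability that position $i$ becomes $0$ is at most $p_i:=\frac{1}{(n-1)(i-1)}$, which is $\Theta(1/n^2)$ for $i=\Omega(n)$. Since both $\mathcal{L}\textsc{-OneMax}$ and $\textsc{FacVal}$ attain their (minimum) optimum only at the all-zeros vector, every position that is nonzero in the initial search point must be set to $0$ at least once before the optimum is found, so the optimization time $T$ is at least the time until this has happened for all such positions.

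First I would decouple the per-iteration randomness from the current state. In iteration $t$, encode the randomness of position $i$ as a pair $(B_{i,t},\theta_{i,t})$, where $B_{i,t}\sim\mathrm{Bernoulli}(1/(n-1))$ indicates that position $i$ is selected and $\theta_{i,t}$ is uniform on $[0,1]$ and used to realize the uniform step (since $0$ is the smallest element of $[0..i-1]\setminus\{x_i\}$ whenever $x_i\neq 0$, the step outputs $0$ iff $x_i\neq0$ and $\theta_{i,t}\le 1/(i-1)$); all these variables are mutually independent, across iterations and of the initial point. Put $R_{i,t}:=B_{i,t}\cdot\mathbf 1_{\{\theta_{i,t}\le 1/(i-1)\}}$, so $\Pr[R_{i,t}=1]=p_i$, and note the key containment: the event ``$y_i=0$ in iteration $t$'' is a subset of $\{R_{i,t}=1\}$ irrespective of the state. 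Hence, for a time budget $t^\ast$, the events $\mathcal E_i:=\{\sum_{t=1}^{t^\ast}R_{i,t}\ge 1\}$ are mutually independent across $i$ and independent of the initial point.

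Next I would fix $I:=\{i\in[2..n]: i\ge\lceil n/2\rceil\}$ and let $I'\subseteq I$ be the random set of positions that are nonzero at initialization. Since coordinate $i$ of a uniformly random element of $L_n$ is nonzero with probability $1-1/i\ge 1-2/n$, a Chernoff bound gives $|I'|\ge n/4$ with probability $1-o(1)$. If $T\le t^\ast$, then every $\mathcal E_i$ with $i\in I'$ must occur, so conditioning on the initial point, $\Pr[T\le t^\ast\mid I']\le\prod_{i\in I'}\bigl(1-(1-p_i)^{t^\ast}\bigr)$. Using $p_i\le c/n^2$ for $i\in I$ and taking $t^\ast:=\varepsilon n^2\ln n$ for a sufficiently small constant $\varepsilon>0$ makes each factor at most $1-n^{-1/2}$, so on the event $|I'|\ge n/4$ the product is at most $(1-n^{-1/2})^{n/4}\le e^{-\sqrt n/4}=o(1)$. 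Together with the Chernoff estimate this gives $\Pr[T\le t^\ast]=o(1)$, whence $\expect{T}\ge t^\ast\Pr[T>t^\ast]=\Omega(n^2\log n)$; the argument is identical for both benchmark functions since only the dynamics of the uniform step operator and the location of the optimum were used.

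I expect the main obstacle to be the dependency between positions created by the shared acceptance test $f(y)\le f(x)$: a direct ``maximum of per-position hitting times'' bound would tacitly require those hitting times to be independent, which they are not. The decoupling via the state-independent indicators $R_{i,t}$ in the second step is exactly what removes this difficulty---once the necessary condition for driving position $i$ to zero is phrased in terms of fresh randomness only, independence across positions is recovered and the product bound becomes rigorous. A minor point is that reaching $0$ at a position also requires the mutation to be accepted; since we only ever use ``$y_i=0$'' as a necessary condition, this can safely be dropped.
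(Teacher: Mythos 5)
Your proposal is correct and follows essentially the same route as the paper's proof: restrict attention to the positions $i>n/2$ with per-iteration hitting probability $O(1/n^2)$, use a Chernoff bound to show $\Omega(n)$ of them start nonzero, and then argue via a product over positions that with probability $1-o(1)$ at least one of them is never proposed the value $0$ within $\Theta(n^2\log n)$ iterations. The only difference is that you make the cross-position independence explicit through the state-independent indicators $R_{i,t}$, which the paper's proof (following the general lower-bound argument of Droste, Jansen and Wegener) uses implicitly; this is a welcome clarification but not a different argument.
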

We state the following two theorems that provide the upper bounds for the expected optimization time on $\mathcal{L}\textsc{-OneMax}$ and $\textsc{FacVal}$ which match their lower bound asymptotically.
\begin{theorem}
\label{thm:(1+1)-EA-uniform-OneMax}
    Consider $(1+1)$-EA equipped with uniform step operator, then its expected optimization time minimizing $\mathcal{L}\textsc{-OneMax}$ is bounded from above by \mbox{$\mathrm{e}(n-1)^2\ln n+2\mathrm{e}(n-1)^2-2\mathrm{e}(n-1)$}.
\end{theorem}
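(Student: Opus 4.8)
The plan is to invoke the variable drift theorem with the fitness itself as the potential. Write $\Phi(x):=\mathcal{L}\textsc{-OneMax}(x)=\sum_{i=2}^{n}x_i$ and let $X_t:=\Phi(x^{(t)})$, where $x^{(t)}$ is the search point after $t$ generations. Because the $(1+1)$-EA accepts an offspring $y$ only when $f(y)\le f(x)$, the sequence $(X_t)_{t\ge0}$ is non-increasing and $T=\min\{t:X_t=0\}$ is the optimization time. The task then is to find an increasing function $h$ with $\expect{X_t-X_{t+1}\mid X_t=s}\ge h(s)$ for all reachable $s\ge1$; the variable drift theorem then gives $\expect{T\mid X_0}\le \tfrac{1}{h(1)}+\int_{1}^{X_0}\tfrac{\mathrm{d}s}{h(s)}$, and we use the crude bound $X_0\le\binom{n}{2}$.

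For the drift I would retain only the simplest improving steps. For $i\in[2..n]$ let $B_i$ be the event that in the current generation position $i$ is the \emph{only} mutated position and its new value satisfies $y_i<x_i$; the events $B_2,\dots,B_n$ are pairwise disjoint, and on $B_i$ the offspring is accepted and $\Phi$ drops by exactly $x_i-y_i$. Each position is mutated independently with probability $\tfrac{1}{n-1}$, conditioned on position $i$ being mutated the uniform step operator gives $\Pr[y_i<x_i]=x_i/(i-1)$, and $\expect{x_i-y_i\mid y_i<x_i}=(x_i+1)/2$. Multiplying these and summing over the disjoint $B_i$ yields
\begin{gather*}
\expect{X_t-X_{t+1}\mid x^{(t)}=x}\\
\ge\ \frac{1}{n-1}\Bigl(1-\tfrac{1}{n-1}\Bigr)^{n-2}\sum_{i=2}^{n}\frac{x_i(x_i+1)}{2(i-1)}\\
\ge\ \frac{1}{\mathrm{e}(n-1)}\sum_{i=2}^{n}\frac{x_i(x_i+1)}{2(i-1)},
\end{gather*}
the last step using $(1-\tfrac{1}{n-1})^{n-2}\ge 1/\mathrm{e}$; this is where the factor $\mathrm{e}$ of the statement enters. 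Improving steps that mutate two or more positions are discarded, which can only inflate the resulting runtime bound.

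It remains to bound $g(x):=\sum_{i=2}^{n}\frac{x_i(x_i+1)}{2(i-1)}$ from below in terms of $s=\Phi(x)$. Over all $x\in L_n$ with $\sum_{i}x_i=s$, I would show that $g$ is minimized by placing mass greedily at the largest indices: raising $x_i$ from $v$ to $v+1$ costs $(v+1)/(i-1)$, so the first $\min\{s,n-1\}$ units each go to a fresh top index. Hence for $1\le s\le n-1$ the minimizer is all-ones on positions $n,n-1,\dots,n-s+1$, of value $H_{n-1}-H_{n-s-1}\ge\ln\frac{n}{n-s}$, and for $s>n-1$ a Cauchy--Schwarz estimate against $\sum_{i}2(i-1)=n(n-1)$ gives $g(x)\ge s^2/(n(n-1))$. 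This establishes $\expect{X_t-X_{t+1}\mid X_t=s}\ge h(s)$ with $h$ increasing and $h(s)=\tfrac{1}{\mathrm{e}(n-1)}\bigl(H_{n-1}-H_{n-s-1}\bigr)$ on $1\le s\le n-1$.

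Feeding $h$ into the variable drift theorem, the term $\tfrac{1}{h(1)}$ equals $\mathrm{e}(n-1)^2$, and the tail $\int_{n-1}^{\binom{n}{2}}\tfrac{\mathrm{d}s}{h(s)}$ is only $\Theta(n^2)$ — in fact $\mathrm{e}(n-1)(n-2)$ — since there $h(s)\ge s^2/(\mathrm{e}\,n(n-1)^2)$. The logarithmic factor comes solely from the head, $\int_{1}^{n-1}\tfrac{\mathrm{d}s}{h(s)}\le \mathrm{e}(n-1)\int_{1}^{n-1}\tfrac{\mathrm{d}s}{\ln(n/(n-s))}=\mathrm{e}(n-1)\bigl(n\ln n+O(n)\bigr)=\mathrm{e}(n-1)^2\ln n+O(n^2)$. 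Summing the three contributions and keeping track of the constants produces $\mathrm{e}(n-1)^2\ln n+2\mathrm{e}(n-1)^2-2\mathrm{e}(n-1)$. (As a consistency check on the constant: generations that mutate exactly one position perform an RLS step, so one expects this bound to be the RLS bound of Theorem~\ref{thm:RLS-uniform-OneMax} inflated by $\mathrm{e}=1/\lim_{n\to\infty}(1-\tfrac{1}{n-1})^{n-2}$.) I expect the two main obstacles to be: (i) the combinatorial claim that, among \emph{integer} configurations, spreading the mass over many small entries cannot beat the all-ones configuration for $g$; and (ii) evaluating the head integral sharply enough that the coefficients of $(n-1)^2\ln n$ and of $(n-1)^2$ come out exactly, not merely up to constants.
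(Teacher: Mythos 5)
Your proposal follows the paper's proof almost step for step: the potential is the fitness itself, the drift is extracted from the event that exactly one position mutates (contributing the factor $(1-\tfrac{1}{n-1})^{n-2}\ge 1/\mathrm{e}$), the per-step gain is $\frac{1}{n-1}\sum_{i=2}^n\frac{x_i(x_i+1)}{2(i-1)}$, the regime $s\ge n$ is handled by exactly the same Cauchy--Schwarz estimate, and the conclusion comes from the variable drift theorem with $s_{\min}=1$. The only place you diverge is the regime $1\le s\le n-1$, and there your variant creates two avoidable problems. First, your piecewise $h$ is not increasing: at $s=n-1$ the harmonic-sum branch gives roughly $\ln n/(\mathrm{e}(n-1))$ while the quadratic branch gives $1/(\mathrm{e}n)$, so $h$ drops by a factor of about $\ln n$ at the junction and the variable drift theorem is not applicable to it as stated. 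Second, your head integral $\mathrm{e}(n-1)\int_1^{n-1}\frac{\mathrm{d}s}{\ln(n/(n-s))}$ only gives $\mathrm{e}(n-1)^2\ln n+O(n^2)$ with an uncontrolled constant in the $O(n^2)$ term, so the exact bound $\mathrm{e}(n-1)^2\ln n+2\mathrm{e}(n-1)^2-2\mathrm{e}(n-1)$ is asserted rather than derived --- precisely the obstacle (ii) you flag. Both issues disappear if, for $s<n$, you replace the combinatorial minimization by the trivial estimate $\frac{x_i(x_i+1)}{i-1}\ge\frac{2x_i}{n-1}$ (valid since $x_i+1\ge 2$ whenever $x_i\ge 1$ and $i-1\le n-1$), which yields $h(s)=s/(\mathrm{e}(n-1)^2)$ for $s<n$; this is what the paper does. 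That choice makes $h$ continuous and increasing across $s=n$, makes the head integral exactly $\mathrm{e}(n-1)^2\ln n$, and together with $1/h(1)=\mathrm{e}(n-1)^2$ and the tail bound $\mathrm{e}n(n-1)^2\bigl(\frac1n-\frac{2}{n(n-1)}\bigr)=\mathrm{e}(n-1)^2-2\mathrm{e}(n-1)$ (using $X_0\le n(n-1)/2$) reproduces the stated constants with no further work. Your greedy minimization claim in obstacle (i) is true but unnecessary, and in fact counterproductive here.
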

\begin{theorem}
\label{thm:(1+1)-EA-uniform-FacVal}
    Consider $(1+1)$-EA equipped with uniform step operator, then its expected optimization time minimizing $\textsc{FacVal}$ is bounded from above by \mbox{$69.2(n-1)^2\ln n+\mathcal{O}(n^2)$}.
\end{theorem}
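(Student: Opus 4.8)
The plan is to prove this by the multiplicative drift theorem, following the template of the proof of Theorem~\ref{thm:(1+1)-EA-uniform-OneMax} but replacing the potential so that it accommodates the ``significance-ordered'' structure of $\textsc{FacVal}$. The first ingredient is the acceptance characterisation implicit in Sect.~\ref{sec:connection}: because $(i-1)!>\sum_{j=2}^{i-1}(j-1)!\,(j-1)=(i-1)!-1$, an offspring $y$ generated from $x$ by Alg.~\ref{alg:(1+1)EA-lehmer)} is accepted if and only if $y=x$ or $y_k<x_k$, where $k:=\max\{i:x_i\neq y_i\}$; consequently, once $l_{k+1}=\dots=l_n=0$ these coordinates stay $0$, and the ``frontier'' $m(x):=\max\{i:x_i\neq0\}$ (set to $1$ at the optimum $\mathbf 0$) never increases. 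This is the same monotone structure underlying $\mathcal L\textsc{-LeadingZeros}$ and, via Lemma~\ref{lem:EquivalentFunction}, the leading fixed prefix of the permutation under $\textsc{LexVal}$.

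The plain potential $\sum_{i=2}^n l_i$ that works for $\mathcal L\textsc{-OneMax}$ fails here, since an accepted step may shave a little off $l_{m(x)}$ while re-randomising smaller coordinates upward, so its drift need not be positive. Instead I would work with $g(x):=\sum_{i=2}^n \gamma^{\,i}\,l_i$ for a constant $\gamma=1+c/n$, chosen large enough that the weight $\gamma^k$ of the frontier coordinate dominates the weight of any single lower coordinate, yet small enough that $\ln g_{\max}\le\ln\!\big(\mathrm e^{c}\binom n2\big)=2\ln n+O(1)$. The heart of the argument is the drift estimate: conditioning on the event that the $(1+1)$-EA steps coordinate $k=m(x)$ and no larger coordinate --- probability at least $\tfrac1{n-1}\big(1-\tfrac1{n-1}\big)^{n-2}\ge\tfrac1{\mathrm e(n-1)}$ --- the coordinate $l_k$ moves to $\min(l_k,U)$ with $U$ uniform on $[0..k-1]\setminus\{l_k\}$, contributing an expected decrease $\gamma^k\cdot\tfrac{l_k(l_k+1)}{2(k-1)}$ to $g$; against this one balances the expected increase of $\sum_{i<k}\gamma^i l_i$ incurred when lower coordinates are also stepped in that generation. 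Since in expectation only $O(1)$ such coordinates are stepped, each weighted by $\gamma^i$ with $i<k$, one aims to show that the geometric weighting makes the net drift at least a fixed multiple of $g(x)/(n-1)^2$; feeding $\delta=\Theta(1/(n-1)^2)$ and $\ln g_{\max}=2\ln n+O(1)$ into the multiplicative drift theorem yields $O((n-1)^2\ln n)$, and tuning $c$ produces the explicit constant $69.2$ (with the $\mathcal O(n^2)$ term absorbing the additive ``$+1$'' in the drift theorem together with the initial value).

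The step I expect to be the main obstacle is precisely this balancing when $l_k$ is small: for $l_k=1$ the guaranteed progress at the frontier is only $\Theta(\gamma^k/k)$, and a careless bound on the cost of re-randomising the lower block degrades the rate to $\omega(n^2\log n)$. Making the balance work likely requires either a slightly non-geometric weight sequence (e.g.\ capping the contribution of each $l_i$ in $g$), or coupling the drift step with the monotonicity of $m(x)$ in a layered way --- arguing that by the time the frontier has descended to $k$, the block $(l_{k-1},\dots,l_2)$ has already been optimised enough that its total weighted contribution is controlled. A sanity check is available at both ends: Theorem~\ref{thm:linear-function-lower-bound} certifies that the $\Omega(n^2\log n)$ order is tight, and one can verify the one-step behaviour of $g$ directly on the equivalent $\textsc{LexVal}$ dynamics over $S_n$.
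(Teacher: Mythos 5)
You have correctly identified the lexicographic acceptance rule for $\textsc{FacVal}$ and the right family of tools (an exponentially weighted potential plus multiplicative drift, which is indeed what the paper does, following Witt and Doerr--Pohl), but the concrete potential you propose, $g(x)=\sum_i \gamma^i l_i$ with $\gamma=1+c/n$ on the \emph{raw} coordinate values, provably does not have positive drift, and the ``balancing'' step you defer is exactly where the missing idea lives. Conditioned on your frontier event, each lower coordinate $i<k$ is stepped with probability $1/(n-1)$ and, when stepped from $l_i=0$, its new value is uniform on $[1..i-1]$ with mean about $i/2$; the expected damage is therefore of order $\frac{1}{n-1}\sum_{i<k}\gamma^i\,\frac{i}{2}$, which for $k=\Theta(n)$ and constant $c$ is $\Theta(n)\cdot\gamma^k$ (all weights $\gamma^i$ are $\Theta(1)$ of each other when $\gamma=1+c/n$). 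The frontier gain is only $\gamma^k(l_k+1)/2$, so at a state such as $l_n=1$, $l_i=0$ for $i<n$, the conditional drift is $\Theta(1)-\Theta(n)<0$ and multiplicative drift is inapplicable. Increasing $\gamma$ to a constant ratio restores positivity but inflates $\ln g_{\max}$ to $\Theta(n)$ and yields only $O(n^3)$. Your suggestion of ``capping the contribution of each $l_i$'' points in the right direction but is not carried out, and the explicit constant $69.2$ cannot be recovered from the sketch as it stands.

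The paper's proof resolves this with two coupled choices you are missing. First, the potential is $X_t=\sum_i g_i\ln(x^{(t)}_i+1)$, i.e.\ each coordinate is logarithmically rescaled: this caps the worst-case damage of any single up-flip at $\ln n$, and simultaneously guarantees that a down-flip at the frontier yields conditional progress $\frac{1}{z}\bigl(\ln(z+1)^z-\ln(z!)\bigr)\ge\ln 2$ \emph{uniformly in the current value $z$}, which removes your ``small $l_k$'' obstacle. Second, the geometric ratio is $1+\lambda$ with $\lambda=c\ln n/(n-1)$ (not $c/n$), so that the accumulated damage $\frac{\ln n}{n-1}\sum_{j<i}g_j=\frac{\ln n}{n-1}\cdot\frac{g_i-1}{\lambda}\le g_i/c$ is a small fraction of the frontier gain $g_i\ln 2/\mathrm{e}$; the price is only $\ln X_0\le(c+1)\ln n-\ln c$, and optimizing $c>\mathrm{e}/\ln 2$ gives the constant $69.2$. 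Without the logarithmic rescaling and the $\ln n$ factor in the weight ratio, the argument does not close, so the proposal has a genuine gap at its central step.
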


\citet{DoerrPohl} analyze $(1+1)$-EA on all linear functions in the search space $[r+1]^n$. Applying the bound they provide, the expected optimization time of \mbox{$(1+1)$-EA} on $\textsc{NVal}$ is bounded from above by $\mathcal{O}(n^4\log\log n)$ and from below by $\Omega(n^2\log n)$. We improve this bound via the same proof technique used in Theorem~\ref{thm:(1+1)-EA-uniform-FacVal}.
\begin{theorem}
\label{thm:(1+1)-EA-uniform-NVal}
    The expected optimization time of $(1+1)$-EA on $\textsc{NVal}$ is $\Theta(n^2\log n)$.
\end{theorem}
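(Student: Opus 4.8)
We already know from the bound of \citet{DoerrPohl}, quoted above, that the expected optimization time on \textsc{NVal} is $\Omega(n^2\log n)$; hence it remains to prove a matching upper bound $O(n^2\log n)$. The plan is to transfer the argument behind Theorem~\ref{thm:(1+1)-EA-uniform-FacVal} from \textsc{FacVal} on $L_n$ to \textsc{NVal} on $[n]^n$, since the two settings share the same ``positional-weight'' structure: in both, the weight of one position strictly dominates the total contribution of all lower positions (for \textsc{NVal}, $\sum_{j<m}n^{j-1}(n-1)=n^{m-1}-1<n^{m-1}$).

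The first step is a structural observation governing acceptance. If a step mutates a set $M$ of positions and $m=\max M$, then by the displayed geometric-sum bound the offspring is accepted if and only if the leading mutated digit strictly decreases, i.e.\ $y_m<x_m$, regardless of how the lower positions change. Consequently the entries become and remain zero ``from the top down'': once $x_n=\dots=x_{i+1}=0$ they can never increase again, and $x_i$ then plays the role of the most significant coordinate. When position $m$ is the leading mutated one (probability $\Theta(1/n)$, as no position above $m$ is touched), it strictly decreases with probability $x_m/(n-1)$, in which case its new value is uniform in $\{0,\dots,x_m-1\}$.

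The core is a multiplicative-drift argument with a carefully designed potential $\phi$ on $[n]^n$ that vanishes exactly at the all-zeros point. Two requirements must hold simultaneously: (i) $\phi_{\max}/\phi_{\min}=n^{O(1)}$, so that the logarithmic factor of the multiplicative drift theorem is $O(\log n)$ and not $\Theta(n\log n)$ — this excludes using \textsc{NVal} itself (whose range is $n^{\Theta(n)}$) and more generally limits how fast the coordinate weights inside $\phi$ may grow; and (ii) $\expect{\phi(x_t)-\phi(x_{t+1})\mid x_t}=\Omega(\phi(x_t)/n^2)$. To establish (ii) I would decompose over the leading mutated position $m$: the improving event contributes a conditional expected drop of order (potential-weight of $m$) $\times\,x_m$ to $\phi$, while against it one must charge the expected increase from the simultaneous re-randomization of the mutated lower positions (each mutated independently with probability $1/n$ and then taking a value of expected order $n/2$); summing the positive per-position contributions against the accumulated re-randomization cost, and using $x_m^2\ge x_m$, should give the claimed $\Omega(\phi(x_t)/n^2)$ once the weights in $\phi$ are tuned correctly. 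This is essentially the multi-valued analogue of the classical linear-function analysis of the $(1+1)$~EA on $\{0,1\}^n$.

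The hard part is precisely this tuning, and the stress test is a configuration in which a single high position is nonzero while all lower positions are zero: there every improving move re-randomizes (in expectation) lower positions upward, which forces a plain linear potential $\sum_i w_i x_i$ towards geometrically growing weights and thereby violates (i). Overcoming this is where the technique of Theorem~\ref{thm:(1+1)-EA-uniform-FacVal} is needed — e.g.\ dampening the dependence of $\phi$ on each individual $x_i$ so that jumping a low coordinate to $\Theta(n)$ is cheap, while retaining enough drift on the leading coordinate, and/or amortizing the total re-randomization cost over the whole run. Given a potential meeting (i) and (ii), the multiplicative drift theorem yields $\expect{T}\le(1+\ln(\phi_{\max}/\phi_{\min}))/\Omega(1/n^2)=O(n^2\log n)$, which together with the lower bound gives $\Theta(n^2\log n)$.
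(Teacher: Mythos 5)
Your high-level plan is exactly the paper's: take the lower bound from \citet{DoerrPohl}, and prove the matching upper bound by transferring the multiplicative-drift argument of Theorem~\ref{thm:(1+1)-EA-uniform-FacVal}, decomposing the drift over the leading mutated position and exploiting that acceptance is decided solely by whether that leading digit decreases. You also correctly identify the central tension: a plain linear potential needs geometrically growing weights to absorb the re-randomization of lower positions, which blows up $\ln(\phi_{\max}/\phi_{\min})$ to $\Theta(n\log n)$.

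The gap is that you stop exactly where the proof begins: you never write down a potential satisfying your own requirements (i) and (ii), and you defer ``the tuning'' to an unspecified dampening trick. That tuning is the entire content of the argument. Concretely, the paper uses $\phi(x)=\sum_{i} g_i\ln(x_i+1)$ with $g_i=(1+\lambda)^{i-1}$ and $\lambda=c\ln n/(n-1)$. Three things have to be checked and none is automatic. First, the logarithmic dampening caps the loss from re-randomizing a lower position $j$ at $g_j\ln n$ rather than $g_j\cdot\Theta(n)$, and the geometric sum $\sum_{j<i}g_j=(g_i-1)/\lambda$ then bounds the total expected loss conditional on acceptance by $g_i/c$ --- the specific growth rate $\lambda=\Theta(\log n/n)$ is forced: large enough that the telescoped sum is $O(g_i n/\log n)$ and the loss term $\frac{\ln n}{n}\cdot\frac{g_i n}{c\ln n}=g_i/c$ is a constant fraction of $g_i$, yet small enough that $g_n\le n^c$ keeps $\ln\phi_{\max}=O(\log n)$. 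Second, the gain on the leading coordinate under the dampened potential is no longer $\propto x_i$; one needs $\mathbb{E}[\ln(x_i+1)-\ln(y_i+1)\mid\text{accept}]=\frac1z(\ln(z+1)^z-\ln z!)\ge\ln 2$ for $z=x_i\ge 1$, so the net conditional drift is $(\ln 2/\mathrm{e}-1/c)g_i$ after accounting for the probability $\Pr[A_i]=\Theta(x_i/(n(n-1)))$ of the acceptance event. Third, to turn the resulting bound $\sum_i g_i x_i/(n(n-1))$ into a multiple of $\phi(x)$ you need $x_i\ge\ln(x_i+1)$. Without exhibiting the potential and these three estimates, the claim $\mathbb{E}[\phi(x_t)-\phi(x_{t+1})\mid x_t]=\Omega(\phi(x_t)/n^2)$ is an unproven assertion, and your ``stress test'' configuration is precisely the case that would refute a wrongly tuned candidate. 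So: right architecture, but the load-bearing construction is missing.
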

In the following theorem, we provide a tight bound for the expected optimization time of $(1+1)$-EA equipped with uniform step operator on $\mathcal{L}\textsc{-LeadingZeros}$.
\begin{theorem}
\label{thm:(1+1)-EA-uniform-LeadingZeros}
Consider $(1+1)$-EA equipped with uniform step operator, then its expected optimization time maximizing $\mathcal{L}\textsc{-LeadingZeros}$ is $(\mathrm{e}-2)(n-1)^3+(3-3\mathrm{e}/2)(n-1)^2+R_n$, where $R_n>0$ and $R_n=\mathcal{O}(n\log n)$.
\end{theorem}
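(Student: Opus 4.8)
The plan is to analyze the $(1+1)$-EA on $\mathcal{L}\textsc{-LeadingZeros}$ by tracking the fitness value $k = \textsc{LZ}(x) \in [0..n]$, which equals the length of the all-zero suffix of the Lehmer code. A crucial structural observation is that, because we never accept a worse solution, the entries $x_n, x_{n-1}, \dots, x_{n-k+1}$ stay zero once fitness $k$ is reached, so the algorithm behaves essentially like a ``facet-by-facet'' process: to increase fitness from $k$ to $k+1$ (or more), the position to zero out is $i = n-k$, which has domain size $i = n-k$, hence the uniform step operator hits $0$ with probability $1/(n-k-1)$ given that position $i$ is touched. The standard feature of \textsc{LeadingOnes}-type analyses applies: when a fitness-improving step occurs, the entries strictly beyond the new prefix are in a fresh, essentially uniform-random state over their domains, so the expected remaining ``free'' progress at each level can be computed cleanly.

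Concretely, I would write the expected optimization time as a sum over levels $k$ from $0$ to $n-1$ of the expected time $T_k$ spent at fitness exactly $k$ before leaving it. To leave level $k$, in one generation the mutation must (i) touch position $i=n-k$, which happens with probability $1/(n-1)$; (ii) set that entry to $0$ via $\text{step}_{n-k}$, probability $1/(n-k-1)$ conditioned on being touched (for $k \le n-2$; the case $k=n-1$ is degenerate since domain size $1$); and (iii) not destroy the zero prefix, i.e., for each of the $k$ prefix positions $j \in [n-k+1..n]$ that gets touched, its step must return $0$ as well. Since each such position is touched independently with probability $1/(n-1)$ and, if touched, returns $0$ with probability $1/(j-1)$ where $j$ ranges over $n-k+1, \dots, n$, the ``survival'' probability of the prefix is $\prod_{t=1}^{k}\bigl(1 - \tfrac{1}{(n-1)}\cdot\tfrac{t}{t}\bigr)$... more carefully, $\prod_{j=n-k+1}^{n}\bigl(1-\tfrac{1}{n-1}\cdot(1-\tfrac{1}{j-1})\bigr)$, and I will need to bound this product from above and below by constants times $\mathrm{e}^{-\Theta(k/n)}$-type factors, ultimately by a constant $c \in (1/\mathrm{e}, 1)$ after summing/telescoping. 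Then $\mathbb{E}[T_k]$ is roughly $(n-1)(n-k-1)/(\text{prefix survival prob})$, and summing $\sum_{k=0}^{n-1}\mathbb{E}[T_k]$ gives the leading term. Replacing $n-k-1$ by $m$ and using $\sum_{m} m\cdot(\text{geometric-in-}m\text{ factor})$ is where the constants $\mathrm{e}-2$ and $3-3\mathrm{e}/2$ emerge: the factor $(1-1/(n-1))^{j}$ summed against the linear weight $m$ produces exactly the partial sums of the exponential series truncated at the linear term, i.e. $\mathrm{e}-1-1 = \mathrm{e}-2$ for the cubic coefficient.

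The main obstacle I anticipate is getting the constants exactly right rather than just $\Theta(n^3)$. This requires (a) handling the prefix-survival product precisely — it is not simply $(1-1/(n-1))^k$ because touching a prefix position only breaks the prefix if the step moves away from $0$, and at position $j$ that conditional probability is $1-1/(j-1)$, so one must carefully expand $\prod_{j=n-k+1}^{n}(1 - \tfrac{1}{n-1} + \tfrac{1}{(n-1)(j-1)})$ and show the correction terms contribute only to the lower-order remainder $R_n = \mathcal{O}(n\log n)$ (the $1/(j-1)$ corrections summed over $j$ and then over $k$ is where the $\log n$ enters); and (b) the ``freeness'' claim — arguing that conditioned on reaching fitness $k+g$ in a step from fitness $k$, the newly-fixed prefix positions $n-k, \dots, n-k-g+1$ are all $0$ (true by definition of $\textsc{LZ}$) and the remaining positions retain their prior distribution, so no accumulated bias helps or hurts later levels; this is standard for \textsc{LeadingOnes} but must be stated carefully here because the step operator is not symmetric across the whole domain. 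I would make the argument rigorous by conditioning on the first generation that changes position $n-k$ and observing that, given the fitness strictly increases at that generation, the value placed in position $n-k$ is uniform on $[0..n-k-1]$, in particular $0$ with probability $1/(n-k-1)$ independently of everything to its right, while all positions with index $< n-k$ are untouched in expectation and so carry no bias. Finally I would verify the stated closed form by matching the cubic, quadratic, and residual terms, and confirm $R_n > 0$ and $R_n = \mathcal{O}(n\log n)$ from the explicit lower-order sums, which also shows the bound is tight (it is stated as an equality, so both directions follow from the same level-wise computation with matching upper and lower estimates on the survival products).
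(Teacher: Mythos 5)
Your overall strategy---a level-by-level decomposition with geometric waiting times, a uniformity claim for the critical entry, and an exact summation of a linear weight against a geometric factor---is the same as the paper's, and it does recover the cubic coefficient $\mathrm{e}-2$. However, two concrete errors in your accounting would corrupt the exact quadratic coefficient $(3-3\mathrm{e}/2)(n-1)^2$ that the theorem asserts. First, your prefix-survival product is wrong: the uniform step operator draws the new value from $[0..j-1]\setminus\{x_j\}$, i.e.\ it \emph{excludes} the current value, so a touched zero entry in the prefix becomes nonzero with probability $1$, not $1-1/(j-1)$. The survival probability is therefore exactly $(1-1/(n-1))^k$ with no correction terms; the spurious corrections $\tfrac{1}{(n-1)(j-1)}$ you plan to carry along would, if actually present, contribute $\Theta(n^2)$ to the total (not merely $\mathcal{O}(n\log n)$ as you hope), so your plan to absorb them into $R_n$ fails---fortunately they simply should not be there. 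Second, your per-level formula $\mathbb{E}[T_k]\approx (n-1)(n-k-1)/(\text{survival})$ omits the probability that level $k$ is visited at all: by the uniformity of the entry at position $i=n-k$ at the moment the prefix to its left first becomes all-zero, that entry is already $0$ with probability $1/i$, and otherwise the number of accepted resamplings until it hits $0$ is geometric with parameter $1/(i-1)$; the correct per-level factor is $(i-1)^2/i$, not $i-1$. The difference, $\sum_i (n-1)(1-1/i)\,(1-1/(n-1))^{i-n}$, is again $\Theta(n^2)$ and changes the quadratic coefficient.

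Finally, the uniformity statement you invoke (``no accumulated bias'') is the real workhorse and cannot be waved through: the paper proves it as a separate lemma by induction over time, conditioning on the random hitting time $\tau_i$ at which positions $n,\dots,i+1$ first all become zero, and using that acceptance of an offspring never depends on the value at position $i$ while some position to its left is still nonzero, so every accepted touch of position $i$ resamples it uniformly and preserves the uniform law. With that lemma, the exact survival probability, and the $(i-1)^2/i$ factor in place, Wald's equation gives $\mathbb{E}[T]=\sum_{i=2}^n (n-1)(1-1/(n-1))^{i-n}(i-1)^2/i$, whose asymptotic expansion produces both stated coefficients and $R_n=\mathcal{O}(n\log n)$ (the logarithm enters through the tail sum $\sum_i q^i/(i+1)$, not through any prefix-survival correction).
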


\subsection{$(1+1)$-EA with unit mutation strength}
The following theorem provides a general lower bound for the expected optimization time of $(1+1)$-EA equipped with $\pm1$ step operator on both $\mathcal{L}\textsc{-OneMax}$ and $\textsc{FacVal}$.
\begin{theorem}
\label{thm:(1+1)-EA-unit-lower-bound}
    Consider $(1+1)$-EA equipped with $\pm1$ step operator, then its expected optimization time minimizing either $\mathcal{L}\textsc{-OneMax}$ or $\textsc{FacVal}$ is bounded from below by $(n-1)^2$.
\end{theorem}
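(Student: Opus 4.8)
The plan is to concentrate on a single coordinate of the Lehmer code---namely position $n$, which has the largest domain $[0..n-1]$---and to lower bound how long it takes to bring that coordinate to $0$. Since the all-zeros vector is the unique optimum of both $\mathcal{L}\textsc{-OneMax}$ and $\textsc{FacVal}$, every successful run must in particular set coordinate $n$ to $0$. As the $(1+1)$-EA replaces the current individual only by a non-worsening offspring, coordinate $n$ changes only in \emph{accepted} iterations, and then by at most $1$, because the $\pm 1$ step operator moves each selected coordinate by at most $1$. Writing $x_n^{(0)}$ for the initial value of that coordinate and telescoping its changes over the accepted iterations (the net change being $-x_n^{(0)}$), one sees that the run must contain at least $x_n^{(0)}$ iterations in which the step operator is applied ``downwards'' at position $n$; here one has to be a little careful about the boundary case where the coordinate is already $0$ and a downward step has no effect, but downward steps that are actually effective and accepted are a subset of all downward steps, so the count only goes up.

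Next I would turn this into a waiting-time bound. In each iteration position $n$ is selected for mutation with probability $1/(n-1)$, independently of the state, and conditioned on being selected the $\pm1$ operator decreases it with probability $1/2$; hence each iteration is a ``downward step at position $n$'' with probability $q:=1/(2(n-1))$, independently across iterations. Therefore the optimization time $T$ is at least the number of iterations needed to accumulate $x_n^{(0)}$ such downward steps, which is a sum of $x_n^{(0)}$ independent geometric variables of parameter $q$. Since $x_n^{(0)}$ is fixed at initialization and is independent of the per-iteration selection/step coins, and since coordinate $n$ is initialized uniformly on $[0..n-1]$ so that $\expect{x_n^{(0)}}=(n-1)/2$, taking expectations gives $\expect{T}\ge \expect{x_n^{(0)}}/q=\tfrac{n-1}{2}\cdot 2(n-1)=(n-1)^2$. (If $\expect{T}=\infty$ the claim is trivial, so this covers every case; alternatively one invokes Wald's identity once $\expect{T}<\infty$ is noted.)

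The argument is short, and the only genuinely delicate points are the bookkeeping in the telescoping identity and the independence needed to pass from the pathwise bound $T\ge G_1+\dots+G_{x_n^{(0)}}$ (with the $G_j$ i.i.d.\ geometric) to the bound in expectation; both are handled by noting that ``a downward step at position $n$'' is defined purely through the algorithm's mutation coins, hence independent of the current search point, and that the initial value of coordinate $n$ is drawn before any of these coins are used. I expect this last independence/stopping-time bookkeeping to be the main thing to state cleanly. Finally, since the proof uses nothing about the objective beyond the location of its optimum, the identical reasoning yields the bound simultaneously for $\mathcal{L}\textsc{-OneMax}$ and $\textsc{FacVal}$.
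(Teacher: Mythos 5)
Your argument is correct and follows essentially the same route as the paper's proof: both isolate position $n$, use that it is selected with probability $1/(n-1)$ and moved downwards with conditional probability $1/2$, and combine the expected initial value $(n-1)/2$ with the expected waiting time $2(n-1)$ per required downward event (via Wald's identity / a sum of geometrics) to obtain $(n-1)^2$. Your bookkeeping---counting the mutation-coin events ``select position $n$, step down'' that any successful run must contain---is in fact slightly more robust than the paper's lazy-random-walk formulation, since it does not rely on the claim that upward steps at position $n$ are always rejected, which is not quite literal for $\mathcal{L}\textsc{-OneMax}$ under the $(1+1)$-EA where several positions may move simultaneously.
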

The following two theorems respectively provide asymptotic upper bounds for $\mathcal{L}\textsc{-OneMax}$ and $\text{FacVal}$.
\begin{theorem}
\label{thm:(1+1)-EA-unit-OneMAx}
    Consider $(1+1)$-EA equipped with $\pm 1$ step operator, then its expected optimization time minimizing $\mathcal{L}\textsc{-OneMax}$ is bounded from above by $\mathcal{O}(n^2)$.
\end{theorem}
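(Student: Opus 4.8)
The plan is to track the potential $X_t := \mathcal{L}\textsc{-OneMax}(x^{(t)}) = \sum_{i=2}^{n} x_i^{(t)}$ until it first equals $0$. Two preliminary remarks. Since all coefficients of $\mathcal{L}\textsc{-OneMax}$ are $1$, a generation is accepted exactly when the total change $\sum_i (y_i-x_i)$ over the touched coordinates is $\le 0$, so $X_t$ is non-increasing over the whole run; and since $x^{(0)}$ is uniform on $L_n=[n]\times\cdots\times[1]$, we deterministically have $X_0\le\sum_{i=2}^{n}(i-1)=\binom n2=O(n^2)$. So it suffices to bound how long it takes to remove this total of $O(n^2)$.

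The basic engine is the drift bound $\mathbb{E}[X_t-X_{t+1}\mid x^{(t)}]\ge \frac{k_t}{2\mathrm{e}(n-1)}$, where $k_t:=\#\{i\in[2..n]\mid x_i^{(t)}>0\}$. For each nonzero coordinate $i$, let $E_i$ be the event that $i$ is the only touched coordinate and its $\pm1$ step picks $-1$; then $\Pr[E_i]=\frac1{2(n-1)}\bigl(1-\frac1{n-1}\bigr)^{n-2}\ge\frac1{2\mathrm{e}(n-1)}$, the $E_i$ are disjoint, and on $E_i$ the offspring is the parent with $x_i$ lowered by $1$, hence accepted and decreasing $X_t$ by exactly $1$. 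In particular the drift is $\ge\frac1{2\mathrm{e}(n-1)}$ while $X_t\ge1$, so additive drift gives the crude bound $\mathbb{E}[T]\le 2\mathrm{e}(n-1)X_0=O(n^3)$, and feeding $k_t\ge X_t/(n-1)$ into the multiplicative drift theorem upgrades this to $O(n^2\log n)$. The work is to shave the last logarithmic factor.

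To reach $O(n^2)$ I would decompose the run into $O(\log n)$ phases indexed by the current number of nonzero coordinates, say phase $j$ is ``$k_t\in(n/2^{j+1},\,n/2^j]$''. In phase $j$ the drift is $\ge\frac{n/2^{j+1}}{2\mathrm{e}(n-1)}=\Theta(2^{-j})$, so this phase lasts at most $O(2^{j})$ times the decrease of $X_t$ during it. Summing $O(2^j)\cdot\Delta_j X$ with $\sum_j\Delta_j X=X_0=O(n^2)$ only gives $O(n^2\log n)$ unless one also controls how large $X_t$ can be in phase $j$: I would show $X_t=O(n^2/4^j)$ throughout phase $j$ (rather than the trivial $X_t\le k_t(n-1)=O(n^2/2^j)$), which makes phase $j$ cost $O(2^j\cdot n^2/4^j)=O(n^2/2^j)$ and hence the total $O(n^2)$. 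The ingredients for the $X_t=O(n^2/4^j)$ claim are: (i) a per-coordinate drift bound — if $x_i^{(t)}>0$ then $\mathbb{E}[x_i^{(t)}-x_i^{(t+1)}\mid x^{(t)}]\ge\frac1{2\mathrm{e}(n-1)}$ uniformly in the other coordinates, because, conditioned on $i$ being touched, a $-1$ attempt is accepted whenever the other touched coordinates' net change is $\le 1$ while a $+1$ attempt needs it $\le-1$, and the two events differ by at least the probability $\bigl(1-\frac1{n-1}\bigr)^{n-2}\ge1/\mathrm{e}$ that no other coordinate is touched; and (ii) the order statistics of the uniform start — the independent values $x_i^{(0)}\sim\mathrm{Unif}[0..i-1]$ make the coordinates surviving into phase $j$ essentially those with the $n/2^j$ largest initial values, and (i) tells us how far each of those has already descended.

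The hard part will be the quantitative step combining (i) and (ii): controlling the joint evolution of all $n-1$ coordinates well enough to conclude $X_t=O(n^2/4^j)$ in phase $j$. This is where the coupling bites — coordinates interact through the single accept/reject decision, and a coordinate that has reached $0$ can be bumped back up, so ``surviving into phase $j$'' is not a clean stopping event — and the error terms from the $O(\log n)$ phases and from the concentration of the per-coordinate descent times must all be kept below the $O(n^2)$ budget (with the negligible-probability exceptional events soaked up by the $O(n^3)$ bound from the second paragraph). Getting the constants and the phase bookkeeping to close without re-importing a $\log n$ factor is the crux.
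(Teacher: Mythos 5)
Your argument up to the $\mathcal{O}(n^2\log n)$ bound is sound, but the step that is supposed to remove the logarithmic factor is not a proof: the claim that $X_t=\mathcal{O}(n^2/4^j)$ holds throughout phase $j$ is only asserted, and you yourself flag it as the unresolved crux. That is a genuine gap, and the claimed exponent moreover looks too strong. A back-of-the-envelope computation with the order statistics of the uniform initialization (coordinate $i$ starts at $\mathrm{Unif}[0..i-1]$ and has descended by roughly $t/(2\mathrm{e}(n-1))$ by time $t$) suggests that when $k$ coordinates remain nonzero their total value is of order $\sqrt{n}\,k^{3/2}$, i.e.\ $\Theta(n^2/2^{3j/2})$ when $k=n/2^j$ --- larger than $n^2/4^j$. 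The weaker exponent $3j/2$ would still make $\sum_j 2^j\cdot n^2/2^{3j/2}$ sum to $\mathcal{O}(n^2)$, so the plan is not doomed, but nothing of this kind is established; making ``surviving into phase $j$'' rigorous despite bump-ups of zeroed coordinates and the joint accept/reject coupling is exactly the difficulty you name and do not resolve.

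The paper sidesteps all of this with a standard and much shorter device that you could have lifted from its own analysis of RLS with unit mutation (Theorem~\ref{thm:RLS-unit-OneMax}): apply multiplicative drift not to the fitness but to an exponential potential $X_t=\sum_{i=2}^n\bigl(w^{x^{(t)}_i}-1\bigr)$ for a suitable constant $w>1$. The paper invokes Theorem~14 of \citet{Doerr2018}, which carries this out for the $(1+1)$-EA on multi-valued \textsc{OneMax}; the base $w$ must be chosen so that the potential still drifts downward even though an accepted generation may increase individual coordinates. Decreasing a coordinate by one then removes a constant fraction of that coordinate's contribution, so the multiplicative drift factor is $\delta=\Theta(1/n)$ while $\ln X_0\leq \ln\frac{w^n-w}{w-1}=\mathcal{O}(n)$, and the Multiplicative Drift Theorem immediately yields $\mathcal{O}(n\cdot n)=\mathcal{O}(n^2)$. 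The $\log$ factor you are fighting is absorbed into the logarithm of the exponentially large initial potential, with no phase decomposition and no control of the joint trajectory of the coordinates.
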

\begin{theorem}
\label{thm:(1+1)-EA-unit-FacVal}
    Consider $(1+1)$-EA equipped with $\pm1$ step operator, then its expected optimization time minimizing $\textsc{FacVal}$ is bounded from above by $\mathcal{O}(n^2\log n)$.
\end{theorem}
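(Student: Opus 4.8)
The plan is to exploit the super-increasing structure of the \textsc{FacVal} weights $w_i:=(i-1)!$ --- which satisfy $\sum_{j=2}^{i-1}w_j=\sum_{k=1}^{i-2}k!<(i-1)!=w_i$ --- to reduce the problem to a one-dimensional ``clearing'' process, and then to bound its runtime by a phase decomposition together with a waiting-time argument. First I would record the structural consequences of this weight structure for any accepted step from a parent $x$ to an offspring $y\ne x$: (i) \textsc{FacVal} strictly decreases, and if $i^\ast$ is the largest index with $x_{i^\ast}\ne y_{i^\ast}$ then $y_{i^\ast}=x_{i^\ast}-1$ (the $\pm1$ operator changes an entry by exactly $1$) while $y_j=x_j$ for all $j>i^\ast$; (ii) hence $m(x):=\max\{i\in[2..n]:x_i\ge1\}$ (set to $1$ at the optimum) is non-increasing over the run, and while $m(x)\equiv m$ (a ``phase'') the entry $x_m$ only decreases, so the phase consists of exactly $v_m$ decrements of position $m$, where $v_m$ is the value of $x_m$ at the start of the phase.

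Next I would bound the length of a single phase. In any step, the event that position $m$ is selected, the $\pm1$ operator returns $x_m-1$, and no index $k>m$ is incremented has probability at least $\frac1{2(n-1)}\bigl(1-\frac1{2(n-1)}\bigr)^{n-m}\ge\frac1{2\mathrm e(n-1)}$; on this event the step is accepted (the gain $w_m$ at position $m$ outweighs the total change $<w_m$ at smaller indices) and position $m$ is decremented. By a Wald-type estimate, phase $m$ therefore has expected length $O(n\,v_m)$, so that $\expect{T}=O(n)\cdot\expect{\sum_m v_m}$, the sum running over the phases that actually occur.

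The crux is to show $\expect{\sum_m v_m}=O(n\log n)$. The mechanism is that while a large entry $x_m$ is being cleared, every lower entry $x_j$ is pushed downward: an improving step whose highest changed index equals $j$ is forced to decrement $x_j$, which happens in an $\Omega(1/n)$-fraction of the ($\Theta(n\,v_m)$ in expectation) steps of phase $m$, while increments of $x_j$ --- as side effects below higher leads --- are of lower order and can never be accepted when $j$ is itself the top changed index. Thus over phase $m$ the entry $x_j$ drifts down by $\Omega(v_m)$ until it reaches $0$, so the value $v_j$ of $x_j$ when its own phase begins obeys, up to an additive $O(\log n)$ deviation (from the reflecting behaviour near $0$, with room for a union bound over the $n$ positions), a recursion of the form $v_j\le\max\bigl\{O(\log n),\ x_j^{(0)}-c\textstyle\sum_{m>j}v_m\bigr\}$ for a constant $c>0$, with $x_j^{(0)}\le j-1$. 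Unrolling this from $j=n$ downwards yields $\sum_m v_m=O(n\log n)$ and hence $\expect{T}=O(n^2\log n)$.

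I expect the last step to be the main obstacle: converting the informal ``downward pressure'' into a rigorous bound requires handling the random walks performed by the low entries (reflection at $0$, side-effect increments, the dependence between consecutive phases, and the coupling of a phase's length to $v_m$) and controlling their deviations tightly enough --- a geometric-tail/union-bound argument for the $O(\log n)$ slack --- so that the recursion does not blow up; everything else is routine bookkeeping with the super-increasing weights. (A single multiplicative-drift argument directly on \textsc{FacVal}, mirroring the uniform-step case, will not suffice here: since the $\pm1$ operator moves an entry by only one, improving the top entry once shrinks \textsc{FacVal} by a factor of merely $1-\Theta(1/n)$, not a constant factor, so the multiplicative drift is only $\Theta(1/n^2)$ and the drift theorem gives just $O(n^3\log n)$.)
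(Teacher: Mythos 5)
Your structural observations are sound: with the $\pm1$ operator and the super-increasing weights, every accepted step has a decrement at its largest changed index, so $m(x)$ is non-increasing, $x_m$ only decreases within a phase, and each of the $v_m$ decrements of phase $m$ costs $O(n)$ expected waiting time. But the proof then stands or falls with the claim $\expect{\sum_m v_m}=\mathcal{O}(n\log n)$, and this is exactly the part you do not establish. The recursion $v_j\le\max\{\mathcal{O}(\log n),\,x_j^{(0)}-c\sum_{m>j}v_m\}$ is a pathwise statement about quantities ($v_j$, the phase lengths, the number of forced decrements of $x_j$ during higher phases) that are mutually dependent: the length of phase $m$ is itself random and correlated with $v_m$, the downward steps of $x_j$ during that phase are correlated with its length, and the ``$\mathcal{O}(\log n)$ slack'' for entries hovering near $0$ needs a concentration bound that survives a union over $n$ positions and over the nested conditioning on all higher phases. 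None of this is routine bookkeeping; without it the only unconditional bound is the trivial $\sum_m v_m\le\sum_m(m-1)=\mathcal{O}(n^2)$, which yields $\mathcal{O}(n^3)$. Since you yourself flag this step as the main obstacle, the argument as it stands is a plausible programme rather than a proof.

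The paper avoids the whole issue with a drift argument that your closing parenthesis dismisses too quickly. You are right that multiplicative drift applied to $\textsc{FacVal}$ itself only gives $\mathcal{O}(n^3\log n)$, because $\ln(\textsc{FacVal}(x^{(0)}))=\Theta(n\log n)$. But the paper does not use $\textsc{FacVal}$ as the potential: it uses $X_t=\sum_{i=2}^n g_i x^{(t)}_i$ with the \emph{compressed} exponential weights $g_i=(1+c/(n-1))^{i-2}$ (the technique of Witt~2013 / Doerr--Pohl). All weights are $\Theta(1)$, so $X_0=\mathcal{O}(n^2)$ and $\ln X_0=\mathcal{O}(\log n)$; yet the weights still grow fast enough that, conditioned on the leftmost flipped position $i$ being down-flipped (the event $A_i$, which characterizes acceptance), the guaranteed gain $g_i$ at position $i$ dominates the expected loss $\sum_{j<i}g_j/(2(n-1))\le g_i/(2c)$ from up-flips to its right. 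This yields multiplicative drift $\delta=\Theta(1/n^2)$ and hence $\mathcal{O}(n^2\log n)$ in a few lines. In short: the obstruction you identify to multiplicative drift is an artifact of the choice of potential, and choosing the rescaled potential is precisely the missing idea that replaces your unproven recursion.
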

To conclude this section, we provide the exact expression of the expected optimization time on $\mathcal{L}\textsc{-LeadingZeros}$ in the following theorem.
\begin{theorem}
\label{thm:(1+1)-EA-unit-LeadingZeros}
    Consider $(1+1)$-EA equipped with $\pm1$ step operator, then its expected optimization time maximizing $\mathcal{L}\textsc{-LeadingZeros}$ is $\frac{32\sqrt{\mathrm{e}}-52}{3}(n-1)^4+\frac{28-16\sqrt{\mathrm{e}}}{3}(n-1)^3+\frac{13\sqrt{\mathrm{e}}-12}{36}(n-1)^2-\frac{\sqrt{\mathrm{e}}}{48}(n-1)-\Theta(1)$.
\end{theorem}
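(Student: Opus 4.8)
The plan is to decompose the run by the current value of the objective, abbreviated $\textsc{LZ}$, which is non-decreasing over time because the algorithm accepts an offspring $y$ only when $\textsc{LZ}(y)\ge\textsc{LZ}(x)$. Call the (possibly empty) stretch of generations during which $\textsc{LZ}=k$ the \emph{$k$-th phase}; since the last Lehmer entry is identically $0$, the value $\textsc{LZ}=n-1$ cannot occur, so $\textsc{LZ}$ passes only through values in $\{0,1,\dots,n-2,n\}$ and I sum over $k\in[0..n-2]$. Writing $m:=n-k$ for the index of the \emph{front} position (the first nonzero entry while $\textsc{LZ}=k$), the target identity is $\expect{T}=\sum_{k=0}^{n-2}\Pr[\text{phase }k\text{ entered}]\cdot\ell_k$, where $\ell_k$ is the expected length of phase $k$ conditioned on it being entered; I then evaluate both factors exactly.

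\textbf{Structural invariant (the crux).} First I would prove the ``free rider'' invariant of classical \textsc{LeadingOnes}-type analyses, adapted to this space: at every generation, conditioned on $\textsc{LZ}=k$ and on the whole history, the entries at positions $2,\dots,m-1$ (strictly below the front) are mutually independent, each uniform on its domain. Two observations drive this. (i) Acceptance depends \emph{only} on the $k$ leading-zero positions $m+1,\dots,n$: the offspring is rejected exactly when one of these is mutated up to $1$, which happens with probability $\tfrac1{2(n-1)}$ per such position, independently; so a generation is accepted with probability $\bigl(1-\tfrac1{2(n-1)}\bigr)^{k}$, independently of the mutations at all positions $\le m$. (ii) The $\pm1$ step operator restricted to a fixed domain $[0..i-1]$ has a doubly stochastic transition matrix, hence preserves the uniform distribution. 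Combining these by induction over time (positions below the front have never been constrained, because the front only ever moves to smaller indices) gives the invariant. From it I would extract two consequences, evaluated at the first generation $\tau_k$ at which $\textsc{LZ}\ge k$: the entry at position $m$ is then uniform on $[0..m-1]$, so \emph{phase $k$ is entered with probability exactly $(m-1)/m=(n-k-1)/(n-k)$}; and conditioned on this, the front value is uniform on $[1..m-1]$ while the trailing entries remain i.i.d.\ uniform.

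\textbf{Length of a phase.} While $\textsc{LZ}=k$, the front value $v=x_m$ changes only in generations where position $m$ is mutated (probability $\tfrac1{n-1}$) \emph{and} the generation is accepted (probability $\bigl(1-\tfrac1{2(n-1)}\bigr)^{k}$, independent of position $m$), and in that case $v$ moves to $v\pm1$ with equal probability, truncated at the top value $m-1$; the phase ends exactly when $v$ reaches $0$. So $v$ performs a lazy fair random walk on $\{0,\dots,m-1\}$ with $0$ absorbing and $m-1$ reflecting, making a non-lazy move each generation with probability $q_k:=\tfrac1{n-1}\bigl(1-\tfrac1{2(n-1)}\bigr)^{k}$. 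Solving $h(0)=0$, $h(v)=1+\tfrac12h(v-1)+\tfrac12h(v+1)$ for $0<v<m-1$, and $h(m-1)=2+h(m-2)$ gives $h(v)=v(2m-1-v)$ expected non-lazy moves to absorption from $v$; dividing by $q_k$ and averaging over $v\sim\mathrm{Unif}[1..m-1]$ (using $\sum_{v=1}^{m-1}v(2m-1-v)=\tfrac13(m-1)m(2m-1)$) yields $\ell_k=\dfrac{m(2m-1)}{3q_k}=\dfrac{(n-1)(n-k)\bigl(2(n-k)-1\bigr)}{3}\bigl(1-\tfrac1{2(n-1)}\bigr)^{-k}$.

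\textbf{Summation and expansion.} Multiplying the two factors, the $(n-k)$'s cancel, giving $\expect{T}=\tfrac{n-1}{3}\sum_{k=0}^{n-2}(n-k-1)\bigl(2(n-k)-1\bigr)\bigl(1-\tfrac1{2(n-1)}\bigr)^{-k}$. Substituting $u=n-1-k$ rewrites this as $\tfrac{n-1}{3}\,\beta^{-(n-1)}\sum_{u=0}^{n-1}(2u^2+u)\,\beta^{u}$ with $\beta:=1-\tfrac1{2(n-1)}$; the finite sum is evaluated in closed form using $\sum_{u=0}^{\infty}(2u^2+u)x^u=\tfrac{x(3+x)}{(1-x)^3}$ for $|x|<1$ together with the truncation tail. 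After substituting $x=\beta$ and simplifying, $\expect{T}=\tfrac13\bigl(\tfrac{2(n-1)}{2n-3}\bigr)^{n-1}\bigl(32(n-1)^4-20(n-1)^3+2(n-1)^2\bigr)-\tfrac13\bigl(52(n-1)^4-28(n-1)^3+(n-1)^2\bigr)$, the exact (rational) optimization time. Finally I would plug in $\bigl(1-\tfrac1{2(n-1)}\bigr)^{-(n-1)}=\exp\!\bigl(\tfrac12+\tfrac1{8(n-1)}+\tfrac1{24(n-1)^2}+\tfrac1{64(n-1)^3}+\cdots\bigr)=\sqrt{\mathrm e}\,\bigl(1+\tfrac1{8(n-1)}+\tfrac{19}{384(n-1)^2}+\tfrac{65}{3072(n-1)^3}+O(n^{-4})\bigr)$, multiply out, and collect the coefficients of $(n-1)^4,(n-1)^3,(n-1)^2,(n-1)^1$, leaving an $O(1)$ (in fact a fixed negative constant) remainder; this produces exactly $\tfrac{32\sqrt{\mathrm e}-52}{3}$, $\tfrac{28-16\sqrt{\mathrm e}}{3}$, $\tfrac{13\sqrt{\mathrm e}-12}{36}$, $-\tfrac{\sqrt{\mathrm e}}{48}$, and the $-\Theta(1)$ term. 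The main obstacle is the structural invariant --- carefully justifying that conditioning on the observed leading-zero pattern and on ``phase $k$ entered'' at the random time $\tau_k$ keeps the front entry uniform on $[1..m-1]$ and leaves the trailing entries uncorrelated; after that, the work is bookkeeping, though the $\exp(\log)$ expansion must be carried to third order to pin down the $(n-1)^1$ coefficient exactly.
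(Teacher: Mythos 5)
Your proposal is correct and follows essentially the same route as the paper's proof: the uniformity of the front entry at the moment the prefix becomes all-zero (the paper's Lemma~\ref{lem:(1+1)EA-position-i}), the hitting time $v(2m-1-v)$ of the lazy reflected fair walk (Theorem~\ref{thm:Random-Walk}), the acceptance probability $\bigl(1-\tfrac{1}{2(n-1)}\bigr)^{n-i}$ combined via Wald's equation, and the same closed-form sum $\tfrac{N}{3q^N}\sum_i i(2i+1)q^i$ with the same asymptotic expansion of $q^{-N}$. The only differences are cosmetic: you factor out the probability that a phase is entered instead of averaging $h(v)$ over $v\in[0..m-1]$ with $h(0)=0$, and you evaluate the finite sum by a generating-function identity where the paper uses a polynomial ansatz.
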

\subsection{Comparison of Runtime Results}
Here, we mainly compare the results obtained in this section
with previous results based on the canonical representation of permutations. In the seminal work by \citet{Scharnow2005}, the authors rigorously analyze the performance of a $(1+1)$-EA which chooses $s{+}1$ ($s{\sim}\text{Poi}(1)$) operators where each operator is decided by a fair coin flip to be either \textit{jump} or \textit{transposition}. The authors show that the expected optimization time on four fitness functions including $\textsc{INV}$ can be bounded by $\mathcal{O}(n^2\log n)$ and $\Omega(n^2)$. In the recent work by \citet{BaumannRSGECCO24}, the authors show that the expected time for $\textsc{RLS}$ with \textit{adjacent swap} to optimize $\textsc{INV}$ can be bounded by $\Theta(n^2)$. However, for the general $(1+1)$~EA using the classical representation on $\textsc{INV}$, the best available upper bound is still $\mathcal{O}(n^2\log n)$. For comparison, our bound for \mbox{$\mathcal{L}$-$\textsc{OneMax}$}   
in Theorem~\ref{thm:(1+1)-EA-unit-lower-bound} and Theorem~\ref{thm:(1+1)-EA-unit-OneMAx} is $\Theta(n^2)$ which is asymptotically tight.

In the work by \citet{Doerr2023}, the authors show that the expected time for $(1+1)$-EA with \textit{transposition} to optimize $\textsc{PLeadingOnes}$ can be bounded by $\Theta(n^3)$. The same polynomial also appears in 
Theorem~\ref{thm:(1+1)-EA-uniform-LeadingZeros} for the related $\mathcal{L}$-\textsc{LeadingZeros}, even with exact coefficients for the cubic and quadratic terms. Only for unit mutation (Theorem~\ref{thm:(1+1)-EA-unit-LeadingZeros}), the Lehmer representation gives a worse complexity, which is due to a random walk behavior.

Due to difficulty and complexity, analyses based on canonical representation are typically given in terms of asymptotic bounds. Therefore, to enable a more detailed comparison, we also include empirical analyses in in the following section.

\section{Experimental Analysis}\label{sec:exp}
\begin{figure*}[t]
  \centering
    \includegraphics[width=0.98\textwidth]{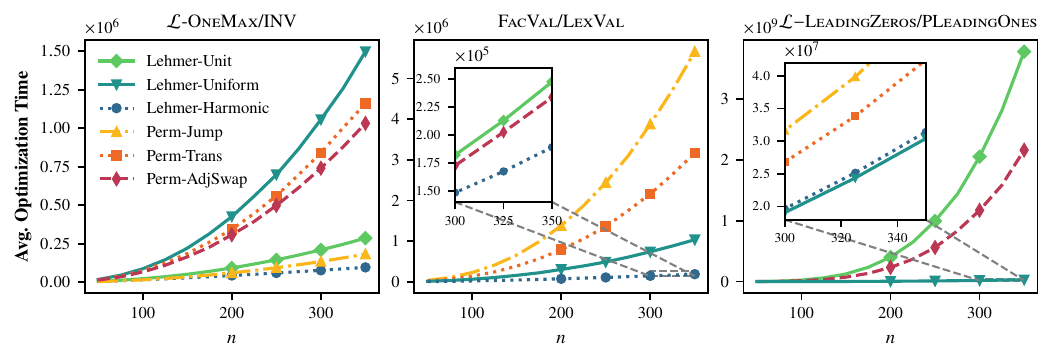}
    \caption{Summary of the results obtained in the experiments on theoretical benchmark functions.}
  \label{fig:comparison_all}
\end{figure*}

\begin{figure*}
    \centering
    \includegraphics[width=0.98\textwidth]{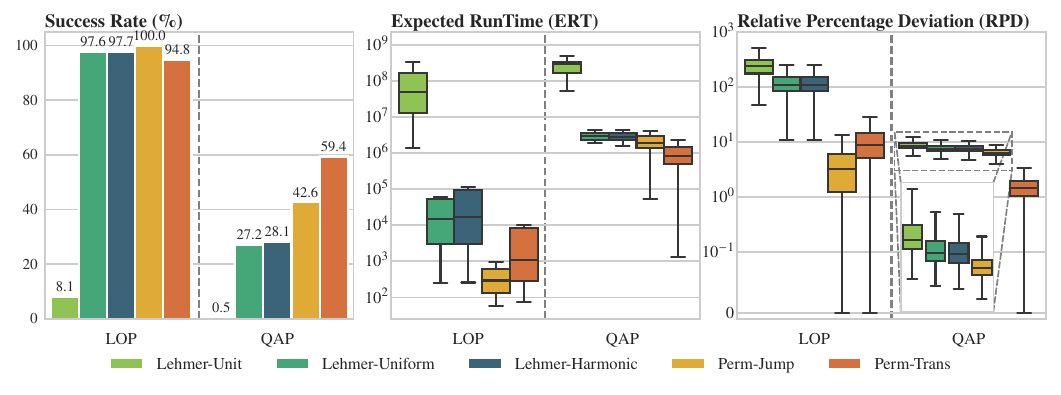}
    \caption{Summary of the results obtained in the experiments on LOP and QAP.}
    \label{fig:exp_lop_qap}
\end{figure*}

We conduct an experimental investigation with a twofold objective.
First, we validate the theoretical runtime analysis on benchmark functions by including the \textit{Harmonic mutation} operator in the comparison, as it has recently become popular in runtime analysis \citep{Doerr2018,FischerLWFOGA23}.
Second, we evaluate the algorithms on real-world instances of the LOP and QAP that, unlike benchmark functions, are NP-hard problems characterized by fitness landscapes that exhibit high multimodality and irregular patterns of neutrality.

The included Harmonic mutation works on the domain $[i]$ by choosing a step size $j\in [1..i-1]$ with probability proportional to $1/j$, and the direction is chosen uniformly at random. Hence, this operator is a compromise between the very local unit mutation and the completely uniform one.
Moreover, since RLS search cannot escape local optima, we limit the empirical analysis to $(1+1)$-EAs.
In total, we empirically evaluate 
six
algorithms: three operating in the Lehmer space---using harmonic (\textit{Lehmer-Harmonic}), uniform (\textit{Lehmer-Uniform}) and unit (\textit{Lehmer-Unit}) mutation---and 
three
operating in the permutation space---employing the standard jump
(\textit{Perm-Jump}), 
transposition
(\textit{Perm-Trans}),
and adjacent swap (\textit{Perm-AdjSwap})
mutation.

To validate and complement the theoretical runtime results on benchmark functions, we considered instances of all theoretical benchmark functions with $n$ ranging from $50$ to $350$. For each algorithm-instance pair, $1000$ independent runs are conducted, except for those shown in the third graph of Fig.~\ref{fig:comparison_all}, where $100$ runs are performed. Each run reports the number of evaluations required to reach the optimum.
Average results are shown in Fig.~\ref{fig:comparison_all}, with equivalent functions grouped together. The graphs show that: (i)~at~least one Lehmer-$\ast$ algorithm always outperforms all the Perm-$\ast$ competitors, and (ii)~Lehmer-Harmonic performs well across all the theoretical benchmarks.

To ensure a more meaningful experimental comparison for LOP/QAP, we slightly modified the implementations of the \mbox{$(1+1)$-EAs} so that iterations in which the mutation does not alter the current solution are not counted toward the evaluation budget---a common aspect of algorithm engineering \cite{PintoDoerrPPSN18,YeEtAlIEEETEC22}.
For the sake of space, we also omit Perm-AdjSwap because its performance is not competitive with the others.
For the LOP, we consider its minimization variant (equivalent to the original maximization version\footnote{Maximizing upper triangular part of the matrix and minimizing lower triangular part induce identical rankings over the solutions.}) in which, given a matrix $\mathbf{B} = [b_{i j}]_{n \times n}$, the objective is to minimize 
\mbox{$f(\sigma) = \sum_{i>j} b_{\sigma(i),\sigma(j)}$}.
In the QAP, an instance is defined by two input matrices $\mathbf{A} = [a_{i j}]_{n \times n}$ and $\mathbf{B} = [b_{i j}]_{n \times n}$, and the goal is to minimize 
$f(\sigma)=\sum_{i,j} a_{i,j} b_{\sigma(i),\sigma(j)}$.
A total of 20 real-world instances have been selected: 10 from the LOP and 10 from the QAP.
LOP instances come from the well-established IO benchmark suite \citep{marti2022exact}, while QAP instances are from the Skorin-Kapov subset of the QAPLIB benchmark collection \citep{burkard1997qaplib}.

Two experiments were conducted: the first follows a fixed-target setting using small instances of size $n = 10$, obtained by subsampling the selected benchmark instances; the second adopts a fixed-budget setting and is performed directly on the original instances, whose sizes range from 42 to 100.
In both experiments, each algorithm was executed $1000$ times per instance.

In the experiment on instances with $n=10$, we began by performing an exhaustive search to identify the global optima for all instances.
We then executed each algorithm repeatedly with a maximum budget of $1\,000\,000$ evaluations per run. For each run, we recorded whether the optimum was reached, and the corresponding runtime---i.e., the number of evaluations required to reach the optimum, or the maximum budget if the optimum was not attained.

The first graph in Fig. \ref{fig:exp_lop_qap} shows the success rates of each algorithm, with results aggregated by problem.
To assess algorithmic efficiency, we also considered the empirical runtime measure, defined as the average runtime divided by the success rate, following \citet{wang2022iohanalyzer}. This corresponds to the expected number of evaluations required to reach the optimum using a multistart version of the algorithm.
Empirical runtimes, aggregated by problem, are shown in the second graph of Fig. \ref{fig:exp_lop_qap}.

These figures show that Lehmer-Harmonic and Lehmer-Uniform are by far more effective than Lehmer-Unit, both approaching the performance of the Perm-$\ast$ algorithms in terms of both success rate and empirical runtime.
This is particularly evident in the LOP, where their success rates surpass that of Perm-Trans and are very close to that of \mbox{Perm-Jump}.

In the experiment on larger instances, global optima are unknown. Consequently, all runs use the entire evaluation budget, which was set to $1000n$ to keep the computational time manageable. Each run returns the best objective value encountered.
For the sake of aggregation, objective values are converted to relative percentage deviations based on the best value observed for each instance.
Those relative percentage deviations, aggregated by problem, are presented in the third graph of Fig. \ref{fig:exp_lop_qap}. The results align with the observations from the small-instances experiment.
\section{Conclusion and Discussion}\label{sec:concl}
We have studied Lehmer codes, also called inversion vectors, as representations for permutations in EAs. Our main focus was the theoretical runtime analysis of simple EAs using Lehmer codes (``Lehmer-EAs'') and a comparison to existing analyses for the classical representation, in particular the seminal work by \citet{Scharnow2005}. As we show for specific benchmarks, there is a clear correspondence between simple mutations in the Lehmer code and well-known fitness measures like the number of inversions in the classical space. Our runtime analyses are asymptotically tight or even non-asymptotic in most cases and reveal that on most benchmarks, the Lehmer-EAs achieve an expected runtime of $\mathcal{O}(n^2\log n)$ or $\mathcal{O}(n^2)$, which is on par with the algorithms for the classical representation or even better by a factor of $\Theta(\log n)$. An exception is the  
$\mathcal{L}\textsc{-LeadingZeros}$ function, where the expected runtime of the Lehmer-EA using unit mutation is worse by a factor of $\Theta(n)$ than the classical approach. This is due to a random-walk behavior, which can be remedied by the more globally searching uniform mutation.

We supplemented experimental studies of the Lehmer-EAs and compared them to the classical EAs on the theoretical benchmarks and on instances of the linear ordering and quadratic assignment problem. On the theoretical benchmarks, the Lehmer-EA with Harmonic mutation is fastest. While the classical algorithms seem to perform best in general on the empirical benchmarks, the performance of the Lehmer-EAs with Harmonic and Uniform mutation operators are not far behind.
In future work, we will analyze whether further runtime improvements for the Lehmer-EAs are possible with advanced operators like self-adjusting mutations and heavy-tailed mutations  \citep{Doerr2018,DoerrLMN17}
or by considering variants of the Lehmer code definitions which can be more suitable for optimization purposes \citep{malagon2025combinatorial}.

Finally, although the results presented focus on simple evolutionary algorithms, they may have far-reaching implications, offering valuable insights for the development of more sophisticated methods based on Lehmer codes, as both optimization (e.g.\ \citealp{UherLehmerTSP}) and learning (e.g.\ \citealp{severo2025learning}) have recently begun to explore their use for handling permutations.
\paragraph{Acknowledgements.} The third author was supported by the Independent Research Fund Denmark (grant~id 10.46540/2032-00101B). Moreover, the research benefited from discussions at Dagstuhl seminar 25092
``Estimation-of-Distribution Algorithms: Theory and Applications''. 
\bibliography{references}

\clearpage 

\appendix

\section*{Appendix~of~``Theoretical~and~Empirical~Analysis of Lehmer Codes to Search Permutation
Spaces with Evolutionary Algorithms''}

\section{Mathematical Analysis Tools and Proofs}
\label{app:proofs}
We first list all the mathematical tools that are used in this paper.
\begin{theorem}
\label{thm:Variable-Drift}
(Variable Drift~\cite{Johannsen2010,Mitavskiy2009})
Let $\{X_t\}_{t\geq 0}$ be a sequence of non-negative random variables with a finite state space $\mathcal{S}\subseteq \mathbb{R}_0^+$ such that $0\in\mathcal{S}$. Let $s_{\text{min}}:=\min(\mathcal{S}\backslash\{0\})$, let $T:=\inf\{t\geq 0\mid X_t=0\}$, and for $t\geq0$ and $s\in\mathcal{S}$ let $\Delta_t(s):=\mathbb{E}[X_t-X_{t+1}\mid X_t=s]$. If there is an increasing function $h:\mathbb{R}^+\rightarrow\mathbb{R}^+$ such that for all $s\in\mathcal{S}\backslash\{0\}$ and all $t\geq0$, $\Delta_t(s)\geq h(s)$, then,
\begin{displaymath}
    \mathbb{E}[T]\leq \frac{s_{\text{min}}}{h(s_\text{min})}+\mathbb{E}\left[\int_{s_\text{min}}^{X_0}\frac{1}{h(\sigma)}\,\mathrm{d}\sigma\right]
\end{displaymath}
\end{theorem}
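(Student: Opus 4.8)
The plan is to derive this statement from the additive drift theorem by passing to a rescaled potential, which is the standard route for variable drift. First I would introduce $g:\mathcal{S}\to\mathbb{R}_0^+$ defined by $g(0):=0$ and, for $s\in\mathcal{S}\setminus\{0\}$,
\[ g(s):=\frac{s_{\text{min}}}{h(s_{\text{min}})}+\int_{s_{\text{min}}}^{s}\frac{\mathrm{d}\sigma}{h(\sigma)}, \]
so that the right-hand side of the claimed bound is exactly $\mathbb{E}[g(X_0)]$. Since $g$ is strictly increasing with $g(s)=0$ only for $s=0$, the stopping time $T$ coincides with the first time $\{g(X_t)\}_{t\ge0}$ reaches $0$. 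The goal is then to show that $\{g(X_t)\}$ drops in expectation by at least $1$ in each step before hitting $0$, after which the additive drift theorem (or, to stay self-contained, a stopped-supermartingale argument together with finiteness of $\mathbb{E}[T]$, which is guaranteed by the finite state space) gives $\mathbb{E}[T]\le\mathbb{E}[g(X_0)]$.

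The technical core is the pointwise comparison: for every $s\in\mathcal{S}\setminus\{0\}$ and every $s'\in\mathcal{S}$,
\[ g(s)-g(s')\ \ge\ \frac{s-s'}{h(s)}. \]
I would obtain this from concavity. Because $h$ is increasing, $1/h$ is non-increasing, so $g$ extends to a concave non-decreasing function on $[0,\infty)$, namely linear on $[0,s_{\text{min}}]$ with slope $1/h(s_{\text{min}})$, which matches the right derivative of $g$ at $s_{\text{min}}$, and with right derivative $1/h(s)$ at each $s\ge s_{\text{min}}$. The supporting-line inequality for concave functions, using the right derivative at $s$, then reads $g(s')\le g(s)+(s'-s)/h(s)$ for all $s'\ge0$, which rearranges to the displayed bound and covers both $s'<s$ and $s'>s$; in particular the case $s'=0$ follows, though it can also be checked directly using $h(\sigma)\le h(s)$ for $\sigma\le s$ together with $h(s_{\text{min}})\le h(s)$. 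Taking conditional expectations and using linearity of expectation,
\[ \mathbb{E}\bigl[g(X_t)-g(X_{t+1})\ \big|\ X_t=s\bigr]\ \ge\ \frac{\Delta_t(s)}{h(s)}\ \ge\ 1, \]
the last step being the hypothesis $\Delta_t(s)\ge h(s)>0$.

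Finishing up, applying additive drift to $\{g(X_t)\}$ yields $\mathbb{E}[T]\le\mathbb{E}[g(X_0)]$, which is the claim; the degenerate case $X_0=0$, hence $T=0$, is handled by the convention $g(0)=0$. The step I expect to be the main obstacle is the concavity argument---in particular, making the junction between the isolated value $0$ and the interval $[s_{\text{min}},\infty)$ precise and dealing with successor states $s'$ that may exceed $s$. Once the one-step inequality $g(s)-g(s')\ge(s-s')/h(s)$ is in hand, the remainder is a routine invocation of additive drift.
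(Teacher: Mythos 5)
Your proof is correct. Note that the paper does not prove this statement at all: it is listed in the appendix as an imported tool and attributed to Johannsen (2010) and Mitavskiy et al.\ (2009), so there is no in-paper argument to compare against. Your reduction to additive drift via the rescaled potential $g(s)=\frac{s_{\min}}{h(s_{\min})}+\int_{s_{\min}}^{s}\frac{\mathrm{d}\sigma}{h(\sigma)}$, the one-step supergradient inequality $g(s)-g(s')\geq (s-s')/h(s)$ (which, as you observe, also follows directly from monotonicity of $h$ without invoking concavity), and the finiteness of $\mathbb{E}[T]$ from the finite state space together with $h(s_{\min})>0$, is exactly the standard proof given in those references.
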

We also list the Multiplicative Drift Theorem which is a special case of the Variable Drift Theorem.
\begin{theorem}(Multiplicative Drift~\cite{Doerr2012}) 
\label{thm:Multiplicative-Drift}
    Let $\{X_t\}_{t\geq 0}$ be a sequence of non-negative random variables with a finite state space $\mathcal{S}\subseteq \mathbb{R}_0^+$ such that $0\in\mathcal{S}$. Let $s_{\text{min}}:=\min(\mathcal{S}\backslash\{0\})$, let $T:=\inf\{t\geq 0\mid X_t=0\}$, and for $t\geq0$ and $s\in\mathcal{S}$ let $\Delta_t(s):=\mathbb{E}[X_t-X_{t+1}\mid X_t=s]$. Suppose there exists $\delta>0$ such that for all $s\in\mathcal{S}\backslash\{0\}$ and all $t\geq 0$, $\Delta_t(s)\geq \delta s$. Then,
    \begin{displaymath}
        \mathbb{E}[T]\leq \frac{1+\mathbb{E}[\ln(X_0/s_\text{min})]}{\delta}.
    \end{displaymath}
\end{theorem}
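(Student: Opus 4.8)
The plan is to obtain this theorem as an immediate special case of the Variable Drift Theorem (Theorem~\ref{thm:Variable-Drift}) by instantiating the generic drift function with a linear one. Concretely, I would set $h(s) := \delta s$ for $s \in \mathbb{R}^+$. Since $\delta > 0$, this $h$ is a strictly increasing function from $\mathbb{R}^+$ to $\mathbb{R}^+$, hence admissible in Theorem~\ref{thm:Variable-Drift}. Moreover, the hypothesis of the Multiplicative Drift Theorem, namely $\Delta_t(s) \geq \delta s$ for all $s \in \mathcal{S} \setminus \{0\}$ and all $t \geq 0$, is exactly the condition $\Delta_t(s) \geq h(s)$ required to invoke Theorem~\ref{thm:Variable-Drift}.

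Applying the Variable Drift Theorem with this choice of $h$ then yields
\begin{displaymath}
\mathbb{E}[T] \leq \frac{s_{\min}}{h(s_{\min})} + \mathbb{E}\!\left[\int_{s_{\min}}^{X_0} \frac{1}{h(\sigma)}\,\mathrm{d}\sigma\right] = \frac{s_{\min}}{\delta s_{\min}} + \mathbb{E}\!\left[\int_{s_{\min}}^{X_0} \frac{1}{\delta \sigma}\,\mathrm{d}\sigma\right] = \frac{1}{\delta} + \frac{1}{\delta}\,\mathbb{E}\!\left[\ln\frac{X_0}{s_{\min}}\right],
\end{displaymath}
where the integral is evaluated using only $\int 1/\sigma\,\mathrm{d}\sigma = \ln\sigma$, and linearity of expectation moves the factor $1/\delta$ outside. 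Factoring out $1/\delta$ gives precisely the claimed bound $\mathbb{E}[T] \leq (1 + \mathbb{E}[\ln(X_0/s_{\min})])/\delta$.

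Since every step is a direct substitution, there is no genuine obstacle; the only point deserving a line of care is the boundary behaviour of the state space. Because $X_0 \in \mathcal{S}$ and $s_{\min} = \min(\mathcal{S}\setminus\{0\})$, whenever $X_0 > 0$ we automatically have $X_0 \geq s_{\min}$, so the integral is well-defined and non-negative; and if $X_0 = 0$ then $T = 0$ and the bound holds trivially, with the usual convention that the empty integral is $0$. If one instead prefers a self-contained argument not routed through Theorem~\ref{thm:Variable-Drift}, one can track the potential $\ln(X_t/s_{\min})$ directly: Jensen's inequality gives $\mathbb{E}[\ln X_{t+1} \mid X_t = s] \leq \ln \mathbb{E}[X_{t+1}\mid X_t=s] \leq \ln((1-\delta)s) \leq \ln s - \delta$ whenever $s > 0$, so this potential decreases by at least $\delta$ in expectation each step until it reaches $0$, and an additive drift argument closes the proof. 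I would, however, present the short reduction to the Variable Drift Theorem as the main proof and mention the Jensen route only as a remark.
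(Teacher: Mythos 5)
Your reduction is correct and is exactly the route the paper takes: the paper states the Multiplicative Drift Theorem only as a cited special case of the Variable Drift Theorem (Theorem~\ref{thm:Variable-Drift}), and your instantiation $h(s)=\delta s$ with the evaluation $\int_{s_{\min}}^{X_0}\frac{1}{\delta\sigma}\,\mathrm{d}\sigma=\frac{1}{\delta}\ln(X_0/s_{\min})$ is precisely that specialization, including the correct handling of the cases $X_0=0$ and $X_0\geq s_{\min}$. No gaps; the Jensen-based remark is a reasonable aside but unnecessary.
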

The following theorem is well known as Wald's equation~\cite{Wald1944}
\begin{theorem}
\label{thm:Wald's equation}
    Let $X_1,X_2,\dots$ be nonnegative, independent, identically distributed random variables with distribution $X$. Let $T$ be a stopping time for the sequence. If $T$ and $X$ have bounded expectation, then
    \begin{displaymath}
        \mathbb{E}\left[\sum_{i=1}^T X_i\right]=\mathbb{E}[T]\cdot\mathbb{E}[X].
    \end{displaymath}
\end{theorem}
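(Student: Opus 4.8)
The plan is to expand the random sum as an infinite series of indicator-weighted terms and then exploit the stopping-time property to decouple each term. Concretely, I would start from the pathwise identity
\[
\sum_{i=1}^{T} X_i \;=\; \sum_{i=1}^{\infty} X_i\,\mathbf{1}_{\{T \geq i\}},
\]
which holds on every sample path once we read an empty sum as $0$. The crucial structural observation is that the event $\{T \geq i\} = \{T \leq i-1\}^{c}$ is, by the very definition of a stopping time for the sequence $X_1,X_2,\dots$, determined by $X_1,\dots,X_{i-1}$ alone, and is therefore independent of $X_i$.

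Next I would take expectations and move the expectation inside the sum. Since the $X_i$ are nonnegative and so are the indicators, every summand is nonnegative, so the interchange is justified by Tonelli's theorem (equivalently, monotone convergence applied to the partial sums). This yields
\[
\mathbb{E}\!\left[\sum_{i=1}^{T} X_i\right]
= \sum_{i=1}^{\infty} \mathbb{E}\!\left[X_i\,\mathbf{1}_{\{T \geq i\}}\right]
= \sum_{i=1}^{\infty} \mathbb{E}[X_i]\,\mathbb{P}(T \geq i),
\]
where the last equality uses the independence noted above together with $X_i \sim X$. Pulling out the common factor $\mathbb{E}[X_i] = \mathbb{E}[X]$ and applying the tail-sum identity $\sum_{i=1}^{\infty}\mathbb{P}(T\geq i) = \mathbb{E}[T]$, valid for a nonnegative integer-valued $T$, gives exactly $\mathbb{E}[T]\cdot\mathbb{E}[X]$. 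The hypotheses that $T$ and $X$ have bounded expectation ensure this product, and hence the original expectation, is finite, so no $\infty - \infty$ or $0 \cdot \infty$ ambiguity can arise.

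The only genuinely delicate point will be verifying that $\mathbf{1}_{\{T \geq i\}}$ is independent of $X_i$: this is precisely where the stopping-time assumption is used, and it must be invoked explicitly rather than taken for granted, since a naively chosen random index (with no measurability restriction) would break the identity entirely. Everything else — the pathwise series expansion, the nonnegativity-driven interchange of sum and expectation, and the tail-sum formula for $\mathbb{E}[T]$ — is routine. As the statement is a classical result, in the write-up I would primarily cite \cite{Wald1944} and include this short argument only for completeness.
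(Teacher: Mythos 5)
Your argument is the standard and correct proof of Wald's equation: the pathwise expansion $\sum_{i=1}^{T}X_i=\sum_{i=1}^{\infty}X_i\mathbf{1}_{\{T\geq i\}}$, the observation that $\{T\geq i\}$ is determined by $X_1,\dots,X_{i-1}$ and hence independent of $X_i$, Tonelli to swap sum and expectation, and the tail-sum formula for $\mathbb{E}[T]$ are all sound, and you correctly identify the stopping-time property as the one non-routine ingredient. The paper itself gives no proof of this statement --- it is listed among the analysis tools and attributed to \citet{Wald1944} --- so there is nothing to compare against; your write-up (or simply the citation, as you suggest) is entirely appropriate.
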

The following theorem states the result of the expected hitting time of the Coupon Collector's Problem with unequal probabilities, a setting that has been thoroughly investigated in~\citet{Flajolet1992}.
\begin{theorem}
\label{thm:CCP}
    Suppose that there are $N$ different types of coupons, with $p_i$ being the probability that the coupon of type $i$ is issued where $p_1+\cdots+p_N=1$. Then let $T$ be the random variable that denotes the first point in time for which all types of coupons are collected, then
    \begin{displaymath}
        \mathbb{E}[T]=\int_0^\infty\left(1-\prod_{i=1}^N(1-\mathrm{e}^{-p_i x})\right)\mathrm{d}x.
    \end{displaymath}
\end{theorem}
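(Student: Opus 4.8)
The plan is to use the classical \emph{Poissonization} trick: embed the discrete coupon-collector process in a continuous-time Poisson arrival process so that the collection time becomes the maximum of independent exponential random variables. Concretely, let arrivals occur according to a rate-$1$ Poisson process, with i.i.d.\ $\mathrm{Exp}(1)$ interarrival times $\tau_1,\tau_2,\dots$, and let the $k$-th arrival carry an independent coupon $C_k$ with $\Pr[C_k=i]=p_i$. By Poisson thinning, the arrivals of type $i$ form independent Poisson processes of rates $p_1,\dots,p_N$, so the first time a coupon of type $i$ appears is a random variable $E_i\sim\mathrm{Exp}(p_i)$, and $E_1,\dots,E_N$ are mutually independent. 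Writing $\widetilde T:=\max_{1\le i\le N}E_i$ for the continuous time at which every type has been seen, we have the identity $\widetilde T=\sum_{k=1}^{T}\tau_k$, where $T$ is exactly the discrete number of draws until all types are collected, as in the statement.

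Next I would pass from $\mathbb{E}[\widetilde T]$ to $\mathbb{E}[T]$. Since $T$ is a function of the coupon sequence $C_1,C_2,\dots$ alone, it is independent of the interarrival times, and $\mathbb{E}[T]<\infty$ because $\Pr[T>t]\le\sum_{i=1}^N(1-p_i)^t\le N(1-p_{\min})^t$ with $p_{\min}:=\min_i p_i>0$, which is summable. Hence Wald's equation (Theorem~\ref{thm:Wald's equation}) applies to $\widetilde T=\sum_{k=1}^{T}\tau_k$ and gives $\mathbb{E}[\widetilde T]=\mathbb{E}[T]\,\mathbb{E}[\tau_1]=\mathbb{E}[T]$. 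It then remains to evaluate $\mathbb{E}[\widetilde T]=\mathbb{E}[\max_i E_i]$ by the layer-cake (tail integral) formula for nonnegative random variables:
\begin{displaymath}
\mathbb{E}[\max_i E_i]=\int_0^\infty \Pr[\max_i E_i>x]\,\mathrm{d}x=\int_0^\infty\Bigl(1-\prod_{i=1}^N\Pr[E_i\le x]\Bigr)\,\mathrm{d}x=\int_0^\infty\Bigl(1-\prod_{i=1}^N(1-\mathrm{e}^{-p_i x})\Bigr)\,\mathrm{d}x,
\end{displaymath}
where independence of the $E_i$ is used in the middle step; this is precisely the claimed formula.

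The only genuinely delicate part is the coupling together with the invocation of Wald's equation: one must verify that the coupon-type sequence (hence $T$) is truly independent of the interarrival times, that $T$ is a legitimate stopping time for the natural filtration — equivalently that $\{T\ge k\}$ is measurable with respect to $C_1,\dots,C_{k-1}$, so independent of $\tau_k$ — and that $\mathbb{E}[T]$ is finite, which the geometric tail bound above supplies. Everything else is a routine integral. If one prefers to avoid Wald entirely, the same conclusion follows from $\mathbb{E}[\widetilde T]=\sum_{k\ge1}\mathbb{E}[\tau_k\mathbf{1}_{\{T\ge k\}}]=\sum_{k\ge1}\Pr[T\ge k]=\mathbb{E}[T]$, using termwise the independence of $\tau_k$ and $\mathbf{1}_{\{T\ge k\}}$.
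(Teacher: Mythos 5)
Your proof is correct. Note, however, that the paper does not prove Theorem~\ref{thm:CCP} at all: it is stated as a known result and attributed to the literature (Flajolet et al.), so there is no in-paper argument to compare against. Your Poissonization derivation is the standard one for exactly this formula: thinning a rate-$1$ Poisson process gives independent $E_i\sim\mathrm{Exp}(p_i)$, the continuous collection time is $\max_i E_i=\sum_{k=1}^{T}\tau_k$, Wald (or your termwise alternative, which is justified by Tonelli since all terms are nonnegative) converts $\mathbb{E}[\max_i E_i]$ into $\mathbb{E}[T]$, and the tail-integral formula for the maximum of independent exponentials yields the claimed integral. You correctly flag and discharge the only delicate points: $T$ depends solely on the coupon labels and is therefore independent of the interarrival times, and $\mathbb{E}[T]<\infty$ by the geometric tail bound. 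The one unstated hypothesis is $p_i>0$ for all $i$ (needed for $p_{\min}>0$ and for Wald); if some $p_i=0$ both sides are infinite, so the identity still holds, but your finiteness argument would not apply and that degenerate case should be set aside explicitly. In short: a correct, self-contained proof of a statement the paper only cites.
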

Now, we begin to prove the main results in our paper.
\begin{proof}(of Lemma~\ref{lem:EquivalentFunction})
For a $\sigma\in S_n$, assume that $\textsc{LO}(\sigma)=0$. This implies that $\sigma_1\neq 1$, so $L(\sigma)_n>0$ and $\textsc{LZ}(L(\sigma))=0$. Otherwise, assume that $\textsc{LO}(\sigma)=n$, then clearly $\sigma_i=i$ for all $1\leq i\leq n$ and $\textsc{LZ}(L(\sigma))=n$ as well. Note that $\textsc{LO}(\sigma)$, as well as $\textsc{LZ}(L(\sigma))$, cannot take the value $n-1$. Now, assume that $\textsc{LO}(\sigma)=k,1\leq k\leq n-2$, which means $\sigma_{i}=i$ for all $1\leq i\leq k$ and $\sigma_{k+1}\neq k+1$. This implies that $L(\sigma)_{n-i+1}=0$ for all $1\leq i\leq k$ because $\sigma_j>i$ for all $j>i$, while $L(\sigma)_{n-k}>0$ because $\sigma_{k+1}>k+1$ and there must exist a position $j$ where $j>k+1$ and $\sigma_j=k+1$. Hence, we conclude that for any $\sigma\in S_n$, $\textsc{LO}(\sigma)=\textsc{LZ}(L(\sigma))$. 

Regarding $\textsc{LexVal}$, first note that it is a bijection from $S_n$ to $[0..n!-1]$ by definition. Also note that $\textsc{FacVal}$ is a function from $L_n$ to integers with minimum $0$ and maximum $\sum_{i=2}^n (i-1)(i-1)!=n!-1$. Meanwhile, for any $l,\lambda\in L_n$, $l\neq \lambda$ implies that there exists at least one position $i$ such that $l_i\neq \lambda_i$. Thus, $\textsc{FacVal}(l)\neq\textsc{FacVal}(\lambda)$ because $1\cdot (i-1)!>\sum_{j=2}^{i-1}(j-1)\cdot (j-1)!$. Therefore, based on the fact that $|L_n|=n!$, $\textsc{FacVal}$ is a bijection from $L_n$ to $[0..n!-1]$. 

For any $\sigma,\tau \in S_n$, assume that $\textsc{LexVal}(\sigma)<\textsc{LexVal}(\tau)$, which implies that either $\sigma_1<\tau_1$ or there exists a $k$, $2\leq k\leq n-1$, such that $\sigma_i=\tau_i$ for all $1\leq i\leq k-1$ and $\sigma_k<\tau_k$. In the first case, this implies that $L(\sigma)_n<L(\tau)_n$. Since $1\cdot (n-1)!>\sum_{i=2}^{n-1}(i-1)\cdot(i-1)!$, we know $\textsc{FacVal}(\sigma)<\textsc{FacVal}(\tau)$. Similarly, in the second case, we have $L(\sigma)_{n-i+1}=L(\tau)_{n-i+1}$ for all $1\leq i\leq k-1$ and $L(\sigma)_{n-k+1}<L(\tau)_{n-k+1}$, and $\textsc{FacVal}(\sigma)<\textsc{FacVal}(\tau)$.

Therefore, we conclude that for any $\sigma\in S_n$, $\textsc{LexVal}(\sigma)=\textsc{FacVal}(L(\sigma))$.
\end{proof}
Before proving Lemma~\ref{lem:AdjacentLehmerCode-2}, we state the following helper lemma.
\begin{lemma}
    \label{lem:AdjacentLehmerCode}
For any $\sigma\in S_n$, we have $\sigma_i>\sigma_{i+1}\text{ iff. }L(\sigma)_{n-i+1}>L(\sigma)_{n-i}$ for all $1\leq i\leq n-1$.
\end{lemma}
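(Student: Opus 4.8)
The plan is to unfold Definition~\ref{def:lehmer} and compare the two consecutive Lehmer entries as cardinalities of explicit index sets. Write $A := \{j > i \mid \sigma(j) < \sigma(i)\}$ and $B := \{j > i+1 \mid \sigma(j) < \sigma(i+1)\}$, so that $L(\sigma)_{n-i+1} = |A|$ and $L(\sigma)_{n-i} = |B|$ by definition, and the claim reduces to showing $\sigma(i) > \sigma(i+1)$ if and only if $|A| > |B|$.

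The first step is to isolate the boundary index $j = i+1$, which lies in the range defining $A$ but not in the one defining $B$. Setting $A' := \{j > i+1 \mid \sigma(j) < \sigma(i)\}$, one has $A = A' \cup (\{i+1\}\cap A)$ as a disjoint union, and $i+1 \in A$ exactly when $\sigma(i+1) < \sigma(i)$; hence $|A| = |A'| + \mathbf{1}_{\{\sigma(i+1) < \sigma(i)\}}$. Now $A'$ and $B$ range over the same set of indices $\{j \mid j > i+1\}$, so I would compare them through $|A'| - |B| = \sum_{j > i+1}\bigl(\mathbf{1}_{\{\sigma(j) < \sigma(i)\}} - \mathbf{1}_{\{\sigma(j) < \sigma(i+1)\}}\bigr)$.

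The two cases then split cleanly. If $\sigma(i) > \sigma(i+1)$, then for every $j > i+1$ the inequality $\sigma(j) < \sigma(i+1)$ forces $\sigma(j) < \sigma(i)$, so $B \subseteq A'$, the displayed sum is non-negative, and $\mathbf{1}_{\{\sigma(i+1) < \sigma(i)\}} = 1$; altogether $|A| \ge |B| + 1 > |B|$. Symmetrically, if $\sigma(i) < \sigma(i+1)$, then $A' \subseteq B$, the sum is non-positive, and $\mathbf{1}_{\{\sigma(i+1) < \sigma(i)\}} = 0$, so $|A| \le |B|$. Because $\sigma$ is a bijection we have $\sigma(i) \neq \sigma(i+1)$, so exactly one of the two hypotheses holds, and the two implications combine to give the biconditional.

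I do not anticipate a genuine obstacle here; the only delicate point is the bookkeeping for the index $j = i+1$, which contributes the extra $\pm 1$ that makes one direction strict and the other non-strict, and it is precisely this asymmetry that upgrades a one-sided bound into an honest ``iff''. This lemma will in turn feed into the proof of Lemma~\ref{lem:AdjacentLehmerCode-2}, where the condition $\sigma_i > \sigma_{i+1}$ is exactly the negation of the event $A$ (namely $L(\sigma)_{n-i+1} \le L(\sigma)_{n-i}$) that governs which of the two swapped entries is shifted by one.
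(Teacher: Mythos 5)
Your proof is correct and follows essentially the same route as the paper's: both directions rest on the inclusion of the index sets over $j>i+1$ (one containment per ordering of $\sigma(i),\sigma(i+1)$) together with the extra contribution of the boundary index $j=i+1$, which supplies the strict inequality in one case and is absent in the other. Your version merely makes the bookkeeping explicit with indicator functions, whereas the paper phrases the converse as a proof by contradiction; the content is identical.
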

\begin{proof}(of Lemma~\ref{lem:AdjacentLehmerCode})
Assume that $\sigma_i>\sigma_{i+1}$, then for all $j\in[i+2..n]$, $\sigma_j<\sigma_{i+1}$ implies that $\sigma_j<\sigma_i$. Thus, $L(\sigma)_{n-i+1}$ should be at least as large as $L(\sigma)_{n-i}$. Moreover, we know that $(\sigma_i,\sigma_{i+1})$ is an inversion that will increase $L(\sigma)_{n-i+1}$ by $1$. Therefore, $L(\sigma)_{n-i+1}$ is strictly greater than $L(\sigma)_{n-i}$. To prove the other direction, we assume that there exists $i\in[1..n-1]$ s.t. $L(\sigma)_{n-i+1}>L(\sigma)_{n-i}$ and $\sigma_i<\sigma_{i+1}$, which clearly leads to a contradiction because for every $j>i+1$, $(\sigma_i,\sigma_j)$ being an inversion implies that $(\sigma_{i+1},\sigma_j)$ is also an inversion.
\end{proof}
\begin{proof}(of Lemma~\ref{lem:AdjacentLehmerCode-2})
    Assume that $L(\sigma)_{n-i+1}>L(\sigma)_{n-i}$. By Lemma~\ref{lem:AdjacentLehmerCode}, we know that $\sigma_i>\sigma_{i+1}$. If we swap $\sigma_i$ and $\sigma_{i+1}$, then the inversions $(\sigma_j,\sigma_{k})$ where $j=[1..n]\backslash\{i,i+1\}$ and $k>j$ will not be affected. Hence $L(\tau)_{n-j+1}=L(\sigma)_{n-j+1}$ for all $j\in[1..n]\backslash\{i,i+1\}$. Meanwhile, after swapping $\sigma_i$ and $\sigma_{i+1}$, we have 
    \begin{itemize}
        \item Any inversion $(\sigma_i,\sigma_j)$ and $(\sigma_{i+1},\sigma_j)$ where $j>i+1$ will remain in $\tau$,
        \item The inversion $(\sigma_i,\sigma_{i+1})$ will no longer exist in $\tau$
    \end{itemize}
    Thus, $L(\tau)_{n-i+1}=L(\sigma)_{n-i}$ and $L(\tau)_{n-i}=L(\sigma)_{n-i+1}-1$. Note that $L(\sigma)_{n-i+1}-1\geq 0$ since $L(\sigma)_{n-i+1}>L(\sigma)_{n-i}\geq0$.

    Now, assume that $L(\sigma)_{n-i+1}\leq L(\sigma)_{n-i}$. By Lemma~\ref{lem:AdjacentLehmerCode}, we know that $\sigma_i<\sigma_{i+1}$. Same as before, we know that $L(\tau)_{n-j+1}=L(\sigma)_{n-j+1}$ for all $j\in[1..n]\backslash\{i,i+1\}$. Similarly, all inversions $(\sigma_i,\sigma_j)$ and $(\sigma_{i+1},\sigma_j)$ where $j>i+1$ will remain in $\tau$. And $(\sigma_{i+1},\sigma_i)$ will be the new inversion in $\tau$. Therefore $L(\tau)_{n-i+1}=L(\sigma)_{n-i}+1$ and $L(\tau)_{n-i}=L(\sigma)_{n-i+1}$.
\end{proof}
\begin{proof}(of Theorem~\ref{thm:RLS-uniform-OneMax})
The optimization process of $\textsc{RLS}$ on $\textsc{FacVal}$ is exactly the same as that of $\mathcal{L}\textsc{-OneMax}$, thus we only need to derive bounds for the latter one. Let T be the random variable denoting the optimization time. We first derive an upper bound of $\mathbb{E}[T]$. Let $x^{(t)},t=0,1,\dots$ be the random variable that denotes the search point at time $t$, and $y^{(t)},t=1,2,\dots$ the random variable that denotes the solution after applying the uniform step operator to a certain position of $x^{(t-1)}$. Then, $x^{(t)}=y^{(t)}$ if $\mathcal{L}\textsc{-OneMax}(y^{(t)})\leq\mathcal{L}\textsc{-OneMax}(x^{(t-1)})$ and $x^{(t)}=x^{(t-1)}$ otherwise. To give the upper bound, we pessimistically assume that $x^{(0)}_i\neq 0$ for all $i\in[2..n]$. Since $\textsc{RLS}$ flips only one position at a time, once a position is flipped to zero at time $t$, then $x^{(t)}=y^{(t)}$, and this position will remain zero in $x^{(t+1)},x^{(t+2)},\dots$. We call time $t$ a \textit{correction time} if a position of $x^{(t-1)}$ is flipped to zero (and will be accepted), or a position of $x^{(t-1)}$ which is already zero is flipped to $1$ (and will not be accepted). Let $N$ be the random variable that denotes the number of correction times during the optimization process, and let $t_k,k=1,2,\dots,N$ be the random variable that denotes the $k$-th correction time, and $t_0=0$. Then, let $T_k:=t_k-t_{k-1},k=1,2,\dots,N$. Clearly, $T=t_n=\sum_{k=1}^N T_k$. We can see that for every $k\in[1..N]$, $T_k$ is a geometrically distributed random variable with parameter $p$ where
\begin{displaymath}
    p=\frac{1}{n-1}\left(\sum_{i=2}^n \frac{1}{i-1}\right)=\frac{H_{n-1}}{n-1}.
\end{displaymath}
Let $\tilde{T}\sim \text{Geo}(H_{n-1}/(n-1))$, then $T_1,T_2,\dots,T_N$ are nonnegative, independent and identically distributed random variables with distribution $\tilde{T}$. Applying  Wald's equation (see Theorem~\ref{thm:Wald's equation}), we have,
\begin{displaymath}
    \mathbb{E}[T]=\mathbb{E}[N]\mathbb{E}[\tilde{T}]=\frac{n-1}{H_{n-1}}\mathbb{E}[N].
\end{displaymath}
While at a correction time, a non-zero position $i$ is flipped to zero can be viewed as the first time collecting the $i$-th coupon, a zero position $j$ is flipped to one can be viewed as collecting an already collected $j$-th coupon. Meanwhile, suppose that time $t$ is a correction time, then the probability that $i$-th position is flipped by $\textsc{RLS}$ ($i\in[2..n]$) is $1/((i-1)H_{n-1})$. Thus, $N$ is also a stopping time for the Coupon Collector's problem with unequal probabilities $p_i=1/(iH_{n-1}),i\in[1..n-1]$. By Theorem~\ref{thm:CCP}, we know 
\begin{displaymath}
    \mathbb{E}[N]=\int_0^\infty\left(1-\prod_{i=1}^{n-1}\left(1-\exp\left[-\frac{x}{i H_{n-1}}\right]\right)\right)\mathrm{d}x.
\end{displaymath}
Let 
\begin{align*}
    I_n:&=\int_0^\infty\left(1-\prod_{i=1}^n \left(1-\exp\left[-\frac{x}{iH_n}\right]\right)\right)\,\mathrm{d}x\\
    &=H_n\int_{0}^\infty \left(1-\prod_{i=1}^n\left(1-\exp[-x/i]\right)\right)\,\mathrm{d}x.
\end{align*}
We know $1-\exp[-x/i]\geq 1-\exp[-x/n]$ for all $i\in[1..n]$, thus,
\begin{align*}
    I_n&\leq H_n \int_0^\infty \left(1-\left(1-\exp\left[-x/n\right]\right)^n\right)\,\mathrm{d}x\\
    &=nH_n\int_0^\infty (1-(1-\exp[-x])^n)\,\mathrm{d}x.
\end{align*}
Using the inequality $(1-1/x)^{x-1}\geq \mathrm{e}^{-x}$, we further have 
\begin{align*}
    I_n&\leq nH_n\int_0^\infty \left(1-\exp\left[-\frac{n}{\mathrm{e}^x-1}\right]\right)\,\mathrm{d}x\\
    &=nH_n\Bigg(\int_0^{\ln(n+1)}\left(1-\exp\left[-\frac{n}{\mathrm{e}^x-1}\right]\right)\,\mathrm{d}x\\
    &+\int_{\ln(n+1)}^\infty \left(1-\exp\left[-\frac{n}{\mathrm{e}^x-1}\right]\right)\,\mathrm{d}x\Bigg).
\end{align*}
One can see that for $0\leq x\leq \ln(n+1)$,
\begin{displaymath}
    1-1/\mathrm{e}\leq1-\exp\left[-\frac{n}{\mathrm{e}^x-1}\right]\leq 1.
\end{displaymath}
Thus,
\begin{align*}
    \int_{0}^{\ln(n+1)}\left(1-\exp\left[-\frac{n}{\mathrm{e}^x-1}\right]\right)\,\mathrm{d}x&\leq \int_0^{\ln(n+1)}1\,\mathrm{d}x\\
    &=\ln(n+1).
\end{align*}
And, we know that $1-\exp[-n/(\mathrm{e}^x-1)]\leq n/(\mathrm{e}^x-1)$ by using inequality $\mathrm{e}^{-x}\geq 1-x$. So,
\begin{displaymath}
    \int_{\ln(n+1)}^\infty \left(1-\exp\left[-\frac{n}{\mathrm{e}^x-1}\right]\right)\,\mathrm{d}x\leq \int_{\ln(n+1)}^\infty \frac{n}{\mathrm{e}^x-1}\,\mathrm{d}x.
\end{displaymath}
We let $t\gets 1-\mathrm{e}^{-x}$, then
\begin{displaymath}
    \int_{\ln(n+1)}^\infty \frac{n}{\mathrm{e}^x-1}\,\mathrm{d}x=n\int_{n/(n+1)}^1\frac{1}{t}\,\mathrm{d}t=\ln\left(1+\frac{1}{n}\right)^n.
\end{displaymath}
Since $(1+1/n)^n\leq \mathrm{e}$, we have $\ln(1+1/n)^n\leq 1$. Thus, $I_n\leq nH_n(\ln(n+1)-1)$. Hence,
$\mathbb{E}[N]=I_{n-1}\leq (n-1)H_{n-1}(1+\ln n)$, and $\mathbb{E}[T]\leq (n-1)^2\ln n+(n-1)^2$.

Before we start to derive the lower bound, we show that $I_n$ which is defined above can be bounded from below by $cnH_n(\ln n+\ln(1-c)+\gamma$ for any $0\leq c\leq 1-1/n$. We know that
\begin{displaymath}
    I_n=H_n\int_{0}^\infty \left(1-\prod_{i=1}^n\left(1-\exp[-x/i]\right)\right)\,\mathrm{d}x.
\end{displaymath}
Since the term $0\leq 1-\exp[-x/i]\leq 1-\exp[-x/i_\text{min}]\leq 1$ for every $i\in[1..n]$, and $(1-1/n)^n\leq \mathrm{e}$,
\begin{align*}
    \prod_{i=1}^n(1-\exp[-x/i])&\leq \prod_{i=cn+1}^n(1-\exp[-x/i])\\&\leq \left(1-\exp\left[-\frac{x}{cn}\right]\right)^{(1-c)n}\\
    &\leq \exp\left[-\frac{(1-c)n}{\exp\left[\frac{x}{cn}\right]}\right].
\end{align*}
Thus,
\begin{align*}
    I_n&\geq H_n\int_0^\infty \left(1-\exp\left[-\frac{(1-c)n}{\exp\left[\frac{x}{cn}\right]}\right]\right)\,\mathrm{d}x\\
    &=cnH_n\int_0^\infty \left(1-\exp\left[-\frac{(1-c)n}{\mathrm{e}^x}\right]\right)\,\mathrm{d}x.
\end{align*}
Let $m\gets(1-c)n$ and $v\gets m/\mathrm{e}^x$, then
\begin{displaymath}
    \int_0^\infty \left(1-\exp\left[-\frac{(1-c)n}{\mathrm{e}^x}\right]\right)\,\mathrm{d}x=\int_0^m \frac{1-\mathrm{e}^{-v}}{v}\,\mathrm{d}v.
\end{displaymath}
By Taylor's series,
\begin{align*}
    \int_0^m \frac{1-\mathrm{e}^{-v}}{v}\,\mathrm{d}v=-\sum_{n=1}^\infty \frac{(-1)^nm^n}{nn!}.
\end{align*}
And (see Equation 5.1.11 in~\citealt{handbook}),
\begin{displaymath}
    -\sum_{n=1}^\infty \frac{(-1)^nm^n}{nn!}=\gamma+\ln m+E_1(m),
\end{displaymath}
where $E_1(m)=\int_{m}^\infty(\mathrm{e}^{-t}/t)\,\mathrm{d}t>0$. Thus, $I_n\geq cnH_n(\ln n+\ln(1-c)+\gamma)$.

Now we start to give the lower bound of the expected optimization time. We first consider the initial search point $x^{(0)}$. Let $X$ be the random variable that denotes the number of non-zero positions. Then,
\begin{displaymath}
    X=\sum_{i=2}^n \mathbf{1}_{\{x^{(0)}_i\neq 0\}}.
\end{displaymath}
We see that each random variable $\mathbf{1}_{\{x^{(0)}_i\neq 0\}}\sim$\\$\text{Bernoulli}(1-1/i)$. Thus $\mathbb{E}[X]=n-H_n$. Hence, by Chernoff's multiplicative bound, we have 
\begin{displaymath}
    \Pr[X\leq (1-\delta)\mathbb{E}[X]]\leq\exp\left[-\frac{\delta^2 \mathbb{E}[X]}{2}\right]. 
\end{displaymath}
Take $\delta=(\ln n)/\sqrt{n}$, and let $\theta=(1-\delta)\mathbb{E}[X]$, then $\Pr[X\leq \theta]=o(1)$. Let $A$ be the random variable denoting the event that $X\geq \theta$, then, $\Pr[A]=1-o(1)$. And,
\begin{align*}
    \mathbb{E}[T]&=\mathbb{E}[T\mid A]\Pr[A]+\mathbb{E}[T\mid \bar{A}]\Pr[\bar{A}]\\
    &\geq (1-o(1))\mathbb{E}[T\mid A].
\end{align*}
Assume that $A$ occurs. Since we are deriving the lower bound, we can pessimistically assume that the positions $2,3,\dots \theta+1$ of $x^{(0)}$ are not zero, while the positions $\theta+2,\dots,n$ are all zero. Now, different from the upper bound case, we call time $t$ a \textit{correction time} if a position $i\in[2..\theta+1]$ of $x^{(t-1)}$ is flipped to zero, or a position $i\in[2..\theta+1]$ of $x^{(t-1)}$ that is already zero is flipped to $1$. The definition of $t_k,T_k, N$ remain the same. $T_k$ is still a geometrically distributed random variable, but with parameter $p$ where
\begin{displaymath}
    p=\frac{1}{n-1}\left(\sum_{i=2}^{\theta+1}\frac{1}{i-1}\right)=\frac{H_\theta}{n-1} 
\end{displaymath}
in this case. Meanwhile, $N$ becomes the stopping time of a Coupon Collector's problem with $\theta$ coupons and unequal probabilities $p_i=1/(i H_\theta), i=[1..\theta]$. Thus, $\mathbb{E}[N]=I_\theta$. Therefore, take $c=1-1/\ln n$ and have
\begin{align*}
    \mathbb{E}[T\mid A]&\geq \frac{n-1}{H_\theta}\cdot c\,\theta H_\theta(\ln n+\ln (1-c)+\gamma)\\
    &\geq(n-1)(1-1/\ln n)\theta (\ln n-\ln\ln n),
\end{align*}
and $\theta=(1-\delta)(n-H_n)\geq (1-\delta)(n-\ln n-1)$. Hence, $\mathbb{E}[T\mid A]=(1-o(1))(n-1)^2\ln n$, and $\mathbb{E}[T]=(n-1)^2\ln n-o(n^2\log n)$. 
\end{proof}

Before we dive into the $\mathcal{L}\textsc{-LeadingZeros}$ fitness function, we first give the following lemma which reveals that the first time when positions $n,n-1,\dots,i+1$ are all zero, the value of position $i$ is uniformly distributed over $\{0,1,\dots,i-1\}$.
\begin{lemma}
\label{lem:RLS-position-i}
    Consider $\textsc{RLS}$ with either uniform step operator or $\pm1$ step operator and with either uniform probability vector or proportional probability vector on $\mathcal{L}\textsc{-LeadingZeros}$. Let $x^{(t)}_i,i\in[2..n]$ be the random variable that denotes the value of position $i$ at time $t$. Let $\tau_i:=\min\{t\geq 0\mid x^{(t)}_n=x^{(t)}_{n-1}=\cdots=x^{(t)}_{i+1}=0\}$. Then  $x^{(\tau_i)}_i\sim\text{U}(\{0,1,\dots,i{-}1\})$.
\end{lemma}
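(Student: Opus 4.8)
The plan is to prove the statement by induction on the positions $n, n-1, \dots, i+1$ that have already been fixed to zero, tracking the conditional distribution of the value at a not-yet-fixed position. The key structural observation is that on $\mathcal{L}\textsc{-LeadingZeros}$, a step that modifies a position $j$ with $x_j^{(t)} \neq 0$ while positions $n,\dots,j+1$ are already zero is \emph{always} accepted (it cannot decrease the fitness since the leading block of zeros is unaffected or possibly extended), and a step that modifies such a position $j$ with $x_j^{(t)} = 0$ is always rejected (it would destroy the leading block). Meanwhile, steps on positions $j' < j$ (those "below" the current frontier) never change the fitness and are always accepted, so those positions perform unconstrained random walks. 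Crucially, \textbf{position $i$ is never touched until time $\tau_i$ except when it is currently nonzero and below the frontier is irrelevant} --- the point is that before $\tau_i$, position $i$ lies strictly below the current frontier of fixed zeros, so every mutation proposed at position $i$ is accepted, meaning position $i$ executes a free random walk (under whichever step operator) on $\{0,\dots,i-1\}$ until the moment $\tau_i$ is reached.

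First I would set up the induction carefully. Let $\tau_n \le \tau_{n-1} \le \cdots \le \tau_{i+1} \le \tau_i$ be the successive times at which the frontier advances. The inductive claim is: conditioned on the entire history up to $\tau_{j+1}$, the value $x^{(\tau_{j+1})}_j$ is uniform on $\{0,\dots,j-1\}$, for each $j \ge i$. The base case concerns the initialization: $x^{(0)}$ is sampled uniformly from $L_n$, so each coordinate is independent and uniform on its domain; in particular $x^{(0)}_j \sim \mathrm{U}(\{0,\dots,j-1\})$. For the inductive step, between $\tau_{j+1}$ and $\tau_j$, position $j-1$ (and all positions below it) are repeatedly and freely mutated whenever selected, while the process waits for position $j$ to hit zero. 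I need to argue that at the stopping time $\tau_j$, position $j-1$ is still uniform. This requires two ingredients: (a) uniformity is the stationary distribution of the random walk induced by the step operator, and is in fact \emph{preserved} by a single step of either operator (the uniform step operator sends uniform to uniform trivially; the $\pm 1$ step operator, with reflecting boundaries and symmetric $\pm$ choice, also has the uniform distribution as a fixed point of its transition matrix --- this is a short direct check); and (b) the number of times position $j-1$ is updated before $\tau_j$ is a stopping time that depends only on the trajectory of position $j$ and the sequence of which-position-was-chosen indicators, both of which are independent of the \emph{values} taken at position $j-1$. Hence we may apply the optional-stopping / "stopped chain started at stationarity stays stationary" principle to conclude $x^{(\tau_j)}_{j-1}$ is uniform.

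The main obstacle I anticipate is making precise and rigorous the independence argument in ingredient (b): one must show that the evolution of the value at position $j-1$ is, conditionally on the schedule of selected positions and on the trajectory of positions $\ge j$, a genuine Markov chain driven by fresh randomness, so that stopping it at the $\mathcal{F}$-measurable time $\tau_j$ does not bias it away from uniform. The cleanest way is to decouple the randomness: think of each coordinate as carrying its own independent i.i.d.\ stream of "mutation outcomes", consumed one at a time whenever that coordinate is selected; then $\tau_j$ is measurable with respect to the selection-schedule and the higher coordinates' streams, hence independent of coordinate $(j{-}1)$'s stream, and the value at $(j{-}1)$ after consuming a schedule-determined (random but independent) number of steps of a uniform-preserving chain started uniform is again uniform. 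Once this decoupling lemma is stated, the induction closes immediately, and taking $j = i$ in the inductive hypothesis (which after the last frontier advance reads: $x^{(\tau_i)}_i$ is uniform on $\{0,\dots,i-1\}$) gives exactly the claim. I would also remark that the argument is insensitive to which probability vector ($\mathbf{p}$ uniform or proportional) is used, since the vector only affects the selection schedule, which we have already isolated as part of the "independent of coordinate $(j-1)$'s value" randomness.
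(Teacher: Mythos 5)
Your proposal is correct and rests on exactly the same three ingredients as the paper's proof: (i) while some position above $i$ is still nonzero, any proposed mutation at position $i$ leaves the fitness unchanged and is therefore accepted; (ii) one application of either step operator maps the uniform distribution on $\{0,\dots,i-1\}$ to itself (including the reflecting corner cases of the $\pm1$ operator); and (iii) at step $\tau_i$ itself \textsc{RLS} mutates a position above $i$, so position $i$ is untouched. The packaging differs: the paper conditions on $\{\tau_i=k\}$ and inducts over time $t$, silently using the fact that the mutation outcome at position $i$ is independent of that event, whereas you make this independence explicit via per-coordinate randomness streams and a stopped-stationary-chain argument --- arguably the cleaner formalization of the same point. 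Two presentational remarks: your frontier induction over $j$ is unnecessary (the decoupling argument applies directly to each fixed $i$, since $\tau_i$ is measurable with respect to the selection schedule and the streams of positions above $i$ alone), and as written it contains an off-by-one slip --- the inductive claim is stated for $x^{(\tau_{j+1})}_j$ but the step and the conclusion concern $x^{(\tau_j)}_{j-1}$ and $x^{(\tau_i)}_i$; this should be fixed or, better, the induction dropped. Neither issue affects the validity of the underlying argument.
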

\begin{proof}(of Lemma~\ref{lem:RLS-position-i})

    Let $x^{(t)}=(x^{(t)}_n,x^{(t)}_{n-1},\dots,x^{(t)}_2),t=0,1,\dots$ be the random variable that denotes the search point at time $t$, and $y^{(t)}=(y^{(t)}_n,y^{(t)}_{n-1},\dots,y^{(t)}_2),t=1,2,\dots$ the random variable that denotes the solution after applying the step operator to a certain position of $x^{(t-1)}$. Then $x^{(t)}=y^{(t)}$ if 
    \begin{displaymath}
        \mathcal{L}\textsc{-LeadingZeros}(y^{(t)})\geq\mathcal{L}\textsc{-LeadingZeros}(x^{(t-1)}),
    \end{displaymath} and $x^{(t)}=x^{(t-1)}$ otherwise. We first prove that $x^{(\tau_i)}_i\mid\{\tau_i=k\}\sim\text{U}(\{0,1,\dots,i{-}1\})$ holds for all $k=0,1,2,\dots$. Obviously, it holds for $k=0$ because $x^{(0)}_i$ is chosen uniformly at random from $\{0,1,\dots,i-1\}$. Now, suppose that $\tau_i=k,k=1,2,\dots$, then we prove it by induction on time $t$. Obviously, $x^{(0)}_i\mid\{\tau_i=k\}\sim\text{U}(\{0,1,\dots,i-1\})$. Now, assume that $x^{(t-1)}_{i}\mid\{\tau_i=k\}\sim\text{U}(\{0,1,\dots,i-1\})$ where $1\leq t\leq k-1$. Then, for $x^{(t)}_i\mid \{\tau_i=k\}$, there are two cases:
    \begin{itemize}
        \item position $l\neq i$ is chosen: Then, whether the new solution is accepted (i.e., $x^{(t)}=y^{(t)}$) or not (i.e., $x^{(t)}=x^{(t-1)}$), $x^{(t)}_i=x^{(t-1)}_i$ because position $i$ is not touched.
        \item position $i$ is chosen: First, note that whatever value $y^{(t)}_i$ takes, $y^{(t)}$ will be accepted (i.e., $x^{(t)}=y^{(t)}$) since we know that when $t<\tau_i$, there exists at least one position $z\in[i{+}1..n]$ such that $x^{(t)}_z\neq 0$.
        \begin{itemize}
            \item Suppose that the uniform step operator is applied, then  
            \begin{align*}
                &\Pr[y^{(t)}_i=j\mid \tau_i=k]\\
                &=\sum_{m\neq j}\Pr[y^{(t)}_i=j\mid \tau_i=k,x^{(t-1)}_i=m]\\
                &\cdot\Pr[x^{(t-1)}_i=m\mid \tau_i=k]\\
                &=\sum_{m\neq j}\frac{1}{i-1}\frac{1}{i}=\frac{1}{i}.
            \end{align*}
            \item Suppose that the $\pm 1$ step operator is applied, then, for $j=1,2,\dots,i-2$,
            \begin{align*}
                &\Pr[y^{(t)}_i=j\mid \tau_i=k]\\
                &=\Pr[y_i^{(t)}=j\mid \tau_i=k,x^{(t-1)}_i=j-1]\\
                &\cdot\Pr[x^{(t-1)}_i=j-1\mid \tau_i=k]+\\
                &\Pr[y_i^{(t)}=j\mid \tau_i=k,x^{(t-1)}_i=j+1]\\
                &\cdot\Pr[x^{(t-1)}_i=j+1\mid \tau_i=k]\\
                &=\frac{1}{2}\frac{1}{i}+\frac{1}{2}\frac{1}{i}=\frac{1}{i}.
            \end{align*}
            For corner cases $j=0$ and $j=i-1$, this holds as well.
        \end{itemize}
        Thus, $y^{(t)}_i\mid \{\tau_i=k\}\sim\text{U}(\{0,1,\dots,i{-}1\})$, and so does $x^{(t)}_i\mid \{\tau_i=k\}$. 

        Through induction, we know that $x^{(\tau_i-1)}_i\sim\text{U}(\{0,1,\dots,i{-}1\})$. Moreover, for time $\tau_i$, there must exist exactly one position $z\in[i{+}1..n]$ that is flipped from a non-zero value to zero, which means that position $i$ cannot be touched. Thus, $x^{(\tau_i)}_i\mid \{\tau_i=k\}=x^{(\tau_i-1)}_i\mid\{\tau_i=k\}\sim\text{U}(\{0,1,\dots,i{-}1\})$. Thus,
        \begin{align*}
            \Pr[x^{(\tau_i)}_i=j]&=\sum_{k=0}^\infty \Pr[x^{(\tau_i)}_i=j\mid \tau_i=k]\Pr[\tau_i=k]\\
            &=\frac{1}{i}\sum_{k=0}^\infty \Pr[\tau_i=k]=\frac{1}{i}.
        \end{align*}
        Therefore, $x^{(\tau_i)}_i\sim\text{U}(0,1,\dots,i{-}1)$.
    \end{itemize}
\end{proof}

\begin{proof}(of Theorem~\ref{thm:RLS-uniform-LeadingZeros})
Let $T$ denote the random variable representing the optimization time, and let $T_i,i\in[2..n]$ be the random variables that denote the optimization time after the first time that positions $n,n-1,\dots,i+1$ all are zero. Then, $T=T_n$. By Wald's equation~ (Theorem~\ref{thm:Wald's equation}), we know that $\mathbb{E}[T_i-T_{i-1}]=(n-1)\mathbb{E}[t_i]$ where $n-1$ is the expected waiting time before position $i$ is selected and flipped, and $t_i$ is the hitting time of the following random walk: 

Let $\{X_t\}_{t\geq 0}$ be a sequence of random variables with state space $\mathcal{S}=\{0,1,\dots,i-1\}$. Assume that $\Pr[X_{t+1}=k\mid X_t=j]=1/(i-1)$ for all $j\in\{0,1,\dots,i-1\}$ and $k\in\{0,1,\dots,i-1\}\backslash\{j\}$. Then, $t_i=\min\{t\geq 0\mid X_t=0\}$. 

By Lemma~\ref{lem:RLS-position-i}, we know that $X_0\sim\text{U}(\{0,1,\dots,i{-}1\})$. Let $\bar{T}_k$ be the expected hitting time of the described random walk starting from state $k$, and $\bar{T}=\sum_{k=0}^{i-1}\bar{T}_k$. Then, $\bar{T}_0=0$ and for any $k=1,2,\dots,i-1$,
\begin{displaymath}
    \bar{T}_k=1+\frac{1}{i-1}\sum_{j\neq k}\bar{T}_j=1+\frac{1}{i-1}\left(\bar{T}-\bar{T}_k\right)=\frac{i-1}{i}+\frac{\bar{T}}{i}.
\end{displaymath}
Summing all $k$ up, then we have,
\begin{displaymath}
    \bar{T}=\sum_{k=0}^{i-1}\bar{T}_k=(i-1)\left(\frac{i-1}{i}+\frac{\bar{T}}{i}\right)=(i-1)^2.
\end{displaymath}
And since $X_0\sim\text{U}(\{0,1,\dots,i-1\})$,
\begin{displaymath}
    \mathbb{E}[t_i]=\frac{1}{i}\left(\bar{T}_0+\bar{T}_1+\cdots+\bar{T}_{i-1}\right)=\frac{\bar{T}}{i}=\frac{(i-1)^2}{i}.
\end{displaymath}
Therefore,
\begin{align*}
    \mathbb{E}[T]&=\sum_{i=2}^n (n-1)\mathbb{E}[t_i]=\sum_{i=2}^n\frac{(n-1)(i-1)^2}{i}\\
    &=(n-1)\sum_{i=2}^n \left((i-1)-\frac{i-1}{i}\right)\\
    &=n^3/2-2n^2+nH_n+3n/2-H_n.
\end{align*}
\end{proof}
\begin{proof}(of Theorem~\ref{thm:RLS-uniform-proportional-OneMax}) This proof mainly follows the approach used in~\citet{Doerr2018}. Same as before, we only consider the expected optimization time on $\mathcal{L}\textsc{-OneMax}$ since the time on $\textsc{FacVal}$ is the same. We define a stochastic process $\{X_t\}_{t\geq 0}$ where,
    \begin{displaymath}
        X_t:=\sum_{i=2}^n\mathbf{1}_{\{x^{(t)}_i\neq 0\}}.
    \end{displaymath}
    The state space is $\mathcal{S}=\{0,1,\dots,n-1\}$. First, we can see that for any $t\geq 0$, either $X_{t+1}=X_{t}$, or $X_{t+1}=X_t-1$. That is because $\textsc{RLS}$ only flips one position at a time, and for a position $i$, if $x^{(\tau)}_i=0$ for some time $\tau$, then for any $t>\tau$, $x^{(t)}_i=0$ as well. Let $T_k:=\min\{t\geq0\mid X_t=k\}$ where $k=0,1,\dots,X_0$. Thus, $T=T_0$ and $T_{X_0}=0$. Hence,
    \begin{align*}
        \mathbb{E}[T\mid X_0]&=\mathbb{E}\left[\sum_{k=1}^{X_0} T_{k-1}-T_k\mid X_0\right]\\
        &=\sum_{k=1}^{X_0}\mathbb{E}[T_{k-1}-T_k\mid X_0].
    \end{align*}
    We can see that $(T_{k-1}-T_k)\mid X_0$ is the waiting time that exactly one of the $k$ non-zero positions is flipped to zero. This happens with probability
    \begin{displaymath}
        \sum_{x^{(t)}_i>0}\frac{2(i-1)}{n(n-1)}\frac{1}{i-1}=\frac{2k}{n(n-1)}.
    \end{displaymath}
    So $(T_{k-1}-T_{k})\mid X_0$ is geometrically distributed with parameter $2k/(n(n-1))$. Therefore, 
    \begin{displaymath}
        \mathbb{E}[T\mid X_0]=\sum_{i=1}^{X_0}\frac{n(n-1)}{2k}=n(n-1)H_{X_0}/2.
    \end{displaymath}
    By the law of total expectation, $\mathbb{E}[T]=\mathbb{E}[\mathbb{E}[T\mid X_0]]=n(n-1)\mathbb{E}[H_{X_0}]/2$. We know that $H_n\geq \ln(n+1)$, thus, $\mathbb{E}[H_{X_0}]\geq\mathbb{E}[\ln(X_0+1)]$. Since $\mathbb{E}\left[\mathbf{1}_{\{x^{(0)}_i\neq 0\}}\right]=1-1/i$, we know that $\mu:=\mathbb{E}[X_0]=n-H_n$. By Chernoff's bound, we know, $\Pr[X_0\leq \mu/2]\leq \exp\left[-\mu/8\right]$. Thus,
    \begin{displaymath}
        \mathbb{E}[\ln(X_0+1)]\geq \mathbb{E}[\ln(X_0+1)\mid X_0>\mu/2]\Pr[X_0>\mu/2],
    \end{displaymath}
    where $\Pr[X_0>\mu/2]>(1-\exp[-\mu/8])$ and $\mathbb{E}[\ln(X_0+1)\mid X_0>\mu/2]>\ln \mu-\ln 2=\ln n-\mathcal{O}(1)$. Therefore, $\mathbb{E}[H_{X_0}]\geq\ln n-\mathcal{O}(1)$. Meanwhile, by Jensen's inequality, $\mathbb{E}[H_{X_0}]\leq \mathbb{E}[1+\ln(X_0+1)]\leq 1+\ln(1+\mathbb{E}[X_0])=1+\ln (1+n-H_n)=\ln n+\mathcal{O}(1)$. Thus, $\mathbb{E}[T]=n(n-1)(\ln(n)+\Theta(1))/2$.
\end{proof}

\begin{proof}(of Theorem~\ref{thm:RLS-uniform-proportional-LeadingZeros})
    Note that Lemma~\ref{lem:RLS-position-i} still holds in the case. Thus, this proof will be the same as the proof of Theorem~\ref{thm:RLS-uniform-LeadingZeros}, except for $T_i-T_{i-1}=\mathbb{E}[t_i]\cdot n(n-1)/(2(i-1))$ because now the waiting time for position $i$ to be flipped is $n(n-1)/(2(i-1))$. Therefore,
    \begin{align*}
        \mathbb{E}[T]&=\sum_{i=2}^n \frac{n(n-1)}{2(i-1)}\cdot \frac{(i-1)^2}{i}\\
        &=n^3/2-n^2 H_n/2-n^2/2+nH_n/2.
    \end{align*}
\end{proof}

\begin{proof}(of Theorem~\ref{thm:RLS-unit-OneMax}) This proof mainly follows the approach used in~\citet{Doerr2018}. We define a stochastic process $\{X_t\}_{t\geq 0}$ where
\begin{displaymath}
    X_t:=\sum_{i=2}^n (2^{x^{(t)}_i}-1).
\end{displaymath}
with state space $\mathcal{S}$ where $0\in\mathcal{S}$ and $s_\text{min}=\min(\mathcal{S}\backslash\{0\})=2^1-1=1$. Note that $X_{t+1}\leq X_t$ and $X_{t+1}<X_t$ if and only if some position $i$ with $x^{(t)}_i>0$ is flipped down, this happens with probability $1/(2(n-1))$ and $X_t-X_{t+1}$ is $2^{x^{(t)}_i}-2^{x^{(t)}_i-1}$. Thus,
\begin{align*}
    &\mathbb{E}[X_t-X_{t+1}\mid X_t=s]\\
    &=\sum_{i\in[2..n],x^{(t)}_i>0}\frac{1}{2(n-1)}\left(1-\frac{1}{2}\right)2^{x^{(t)}_i}\\
    &\geq \sum_{i=2}^n \frac{1}{4(n-1)}\left(2^{x^{(t)}_i}-1\right)=\frac{s}{4(n-1)}
\end{align*}
Meanwhile $\ln(\mathbb{E}[X_0])\leq \ln( 2^1+2^2+\cdots+2^{n-1})\leq n\ln 2$. By the  Multiplicative Drift Theorem (Theorem~\ref{thm:Multiplicative-Drift})  where $\delta=1/(4(n-1))$,
\begin{displaymath}
    \mathbb{E}[T]\leq\frac{1+\mathbb{E}[\ln X_0]}{1/(4(n-1))}.
\end{displaymath}
By Jensen's inequality $\mathbb{E}[\ln X_0]\leq \ln \mathbb{E}[X_0]$. Therefore $\mathbb{E}[T]\leq (4\ln 2)n(n-1)+4(n-1)=\mathcal{O}(n^2)$. 

Now, we consider the lower bound. Let $T_n$ the expected optimization time of the first position (index $n$). Clearly $\mathbb{E}[T]\geq \mathbb{E}[T_n]$. Also note that if a position is chosen for mutation and the unit step operator tries $+1$, then the step must be rejected and the value stays the same. Otherwise, if $-1$ is tried and the current value is greater than zero, then the step will be accepted. Therefore, by Wald's equation (Theorem~\ref{thm:Wald's equation}), $\mathbb{E}[T_n]=(n-1)\tau$,  where $(n-1)$ is the expected waiting time before the first position is flipped and $\tau$ is the stopping time of the following random walk.

Let $\{X_t\}_{t\geq 0}$ be a sequence of random variables with state space $\mathcal{S}=\{0,1,\dots,n-1\}$. Assume that $\Pr[X_{t+1}=k\mid X_t=k]=\Pr[X_{t+1}=k-1\mid X_t=k]=1/2$ for all $k=1,2,\dots,n-1$, and $\Pr[X_{t+1}=0\mid X_t=0]=1$. Then $\tau=\min\{t\geq 0\mid X_t=0\}$.

Since the first position(position $n$) of the initial search point is sampled uniformly at random over $\{0,1,\dots,n-1\}$, we have $X_0\sim\text{U}(\{0,1,\dots,n-1\})$. Let $S_k$ the expected hitting time of the above random walk starting from state $k,k\in[0..n-1]$. Then, $S_0=0$ and 
\begin{displaymath}
    \tau=\frac{1}{n}\left(S_0+S_1+\cdots+S_{n-1}\right).
\end{displaymath}
We know that for $k\in[1..n-1]$,
\begin{displaymath}
    S_{k}=1+\frac{1}{2}S_k+\frac{1}{2}S_{k-1}=2+S_{k-1}=2(k-1).
\end{displaymath}
Therefore $\tau=(1/n)(2(n-1)n/2)=n-1$ and $\mathbb{E}[T]\geq \mathbb{E}[T_n]=(n-1)^2$.
\end{proof}

Consider a fair random walk with an absorbing state at $0$ and a reflecting boundary at $i-1$. Such a random walk is also related to the well-known ``Gambler's Ruin Problem'' (see, e.g.,~\citealp{feller1}
). The expected hitting time of such a process has been extensively studied (see, e.g.,~\citealp{Aldous2019}).
\begin{theorem}
\label{thm:Random-Walk}
    Let $\{X_t\}_{t\geq 0}$ be a sequence of random variables with state space $\mathcal{S}=\{0,1,\dots,i-1\}$. Assume that $\Pr[X_{t+1}=k-1\mid X_t=k]=\Pr[X_{t+1}=k+1\mid X_t=k]=1/2$ for all $k=1,2,\dots,i-2$, and $\Pr[X_{t+1}=i-1\mid X_{t}=i-1]=\Pr[X_{t+1}=i-2\mid X_t=i-1]=1/2$ while $\Pr[X_{t+1}=0\mid X_t=0]=\Pr[X_{t+1}=1\mid X_t=0]=1/2$. Let $T:=\min\{t\geq 0\mid X_t=0\}$, then $\mathbb{E}[T]=\mathbb{E}[X_0(2i-1-X_0)]$.
\end{theorem}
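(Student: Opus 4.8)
The plan is to compute, for each starting state $k\in\{0,1,\dots,i-1\}$, the quantity $S_k:=\mathbb{E}[T\mid X_0=k]$, show that $S_k=k(2i-1-k)$, and then conclude by the law of total expectation that $\mathbb{E}[T]=\mathbb{E}[S_{X_0}]=\mathbb{E}[X_0(2i-1-X_0)]$ for an arbitrary initial distribution of $X_0$. Since the chain is finite and state $0$ is reachable from every state, the expected hitting times $S_k$ are all finite and are the unique solution of the first-step-analysis system, so it suffices to exhibit one solution and check it.

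First I would set up that system. Conditioning on the first step gives $S_0=0$; for the interior states $k\in\{1,\dots,i-2\}$ the fair walk yields $S_k=1+\frac{1}{2}S_{k-1}+\frac{1}{2}S_{k+1}$; and at the reflecting endpoint, $S_{i-1}=1+\frac{1}{2}S_{i-2}+\frac{1}{2}S_{i-1}$, which simplifies to $S_{i-1}=S_{i-2}+2$. Next I would solve it: the interior equations rewrite as the second-order linear recurrence $S_{k+1}-2S_k+S_{k-1}=-2$, whose general solution is $S_k=-k^2+ak+b$; imposing $S_0=0$ forces $b=0$. Substituting $S_k=-k^2+ak$ into the endpoint relation $S_{i-1}=S_{i-2}+2$ gives $-(i-1)^2+a(i-1)=-(i-2)^2+a(i-2)+2$, i.e.\ $-(2i-3)+a=2$, so $a=2i-1$. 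Hence $S_k=k(2i-1-k)$, and one may also just verify directly that this closed form satisfies every equation, including the two boundary ones. The law of total expectation then finishes the argument.

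The algebra in solving the recurrence is routine and I do not expect a genuine obstacle; the only points needing a word of care are the justification that the $S_k$ are finite and uniquely determined (standard finite-Markov-chain theory with an accessible absorbing state) and the remark that the claimed identity must hold for \emph{any} distribution of $X_0$, which is why the conclusion is phrased through $\mathbb{E}[S_{X_0}]$ rather than a specific value. Alternatively, the same result follows from optional stopping: with $g(k)=k(2i-1-k)$, the defining equations of the $S_k$ are exactly the statements $\mathbb{E}[g(X_{t+1})-g(X_t)\mid X_t]=-1$ for all states $\neq 0$, so $g(X_t)+t$ is a martingale up to time $T$; applying the optional stopping theorem (legitimate since increments are bounded and $\mathbb{E}[T]<\infty$) yields $\mathbb{E}[T]=\mathbb{E}[g(X_0)]=\mathbb{E}[X_0(2i-1-X_0)]$ directly.
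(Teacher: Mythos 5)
Your proposal is correct. For this particular statement the paper does not supply a proof at all: it introduces the walk as a variant of the Gambler's Ruin problem and refers to the literature (Feller; Aldous) for the expected hitting time, then uses the formula $\mathbb{E}[T]=\mathbb{E}[X_0(2i-1-X_0)]$ as a black box in the proof of Theorem~\ref{thm:RLS-unit-LeadingZeros}. Your first-step analysis fills this gap in a self-contained way, and the details check out: the system $S_0=0$, $S_k=1+\tfrac12 S_{k-1}+\tfrac12 S_{k+1}$ for interior $k$, and $S_{i-1}=2+S_{i-2}$ at the reflecting end has the unique solution $S_k=k(2i-1-k)$ (one can sanity-check $i=2$ and $i=3$ by hand), and passing to $\mathbb{E}[S_{X_0}]$ handles the arbitrary initial distribution, which matters because the theorem is applied with $X_0$ uniform on $\{0,\dots,i-1\}$. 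Your two small caveats --- finiteness and uniqueness of the $S_k$ for a finite chain with an accessible target state, and phrasing the conclusion through the law of total expectation --- are exactly the right ones to flag. The optional-stopping variant with $g(k)=k(2i-1-k)$ is an equally valid alternative and arguably the cleanest way to see why the quadratic form appears; either route is consistent with how the result is used in the paper.
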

\begin{proof}(of Theorem~\ref{thm:RLS-unit-LeadingZeros}) 
The proof is the same as the proof of Theorem~\ref{thm:RLS-uniform-LeadingZeros}, except for $\mathbb{E}[T_i-T_{i-1}]=(n-1)\mathbb{E}[t_i]$ where $t_i$ is now the hitting time of the random walk described in Theorem~\ref{thm:Random-Walk}. By Lemma~\ref{lem:RLS-position-i}, we know that $X_0\sim\text{U}(\{0,1,\dots,i-1\})$ for the position $i$. Thus,
\begin{displaymath}
    \mathbb{E}[t_i]=\frac{1}{i}\sum_{k=0}^{i-1} k(2i-1-k)=\frac{(i-1)(2i-1)}{3}.
\end{displaymath}
Thus,
\begin{align*}
    \mathbb{E}[T]&=\sum_{i=2}^n (n-1)\mathbb{E}[t_i]\\
    &=\sum_{i=2}^n (n-1)\frac{(i-1)(2i-1)}{3}\\
    &=\frac{2}{9}n^4-\frac{7}{18}n^3+\frac{1}{9}n^2+\frac{1}{18}n.
\end{align*}
\end{proof}

\begin{proof}(of Theorem~\ref{thm:linear-function-lower-bound})
    W.\,l.\,o.\,g.\ we can assume that $n$ is even. Let $T$ be the random variable that denotes the whole optimization time while $T_{n/2}$ be the random variable that denotes the first time when the first $n/2$ positions, i.e., positions from $n$ down to $n/2+1$, are zeros. Clearly we have $\mathbb{E}[T]\geq \mathbb{E}[T_{n/2}]$ since the first $n/2$ positions must have been set to zero before hitting the optimum where all positions are zero. The following proof mainly follows~\citet{DoerrPohl} that uses a similar argument as for the general lower bound in \citet{DJWoneone}.
    
    Consider the initial search point $x^{(0)}$. The probability that a position $i\in[n/2+1..n]$ is initialized as a nonzero value is $(i-1)/i\geq 1-2/n$. Let $X:=\sum_{i=n/2+1}^n\mathbf{1}_{\{x^{(0)}_i\neq 0\}}$, then we know $\mathbb{E}[X]\geq \frac{n}{2}(1-\frac{2}{n})=\frac{n}{2}-1$. By Chernoff's bound, we have,
    \begin{align*}
        \Pr\left[X\leq\frac{n}{6}\right]&=\Pr\left[X\leq \left(1-\frac{2n-6}{3n-6}\right)\left(\frac{n}{2}-1\right)\right]\\
        &\leq \Pr\left[\left(1-\frac{2n-6}{3n-6}\right)\mathbb{E}[X]\right]\\
        &\leq \exp\left[-\left(\frac{2n-6}{3n-6}\right)^2\frac{\mathbb{E}[X]}{2}\right]\\
        &\leq \exp\left[-\left(\frac{2n-6}{3n-6}\right)^2 \frac{n/2-1}{2}\right],
    \end{align*}
    which is smaller than $\exp[-n/36]$ for all $n\geq 5$. Thus, with overwhelming probability, at least $n/6$ positions out of the first $n/2$ positions are initialized as nonzero. In each iteration, the probability that a nonzero position $i\in[n/2+1..n]$ is flipped to zero is $1/((n-1)(i-1))\leq 2/(n(n-1))$. Let $\epsilon>0$. The probability that a nonzero position $i\in[n/2+1..n]$ is never flipped to zero during the first $\tilde{T}:=(1-\epsilon)((n-1)n/2-1)\ln n$ iterations can be bounded from below by 
    \begin{align*}
        &\left(1-\frac{1}{(n-1)(i-1)}\right)^{\tilde{T}}\geq\left(1-\frac{2}{n(n-1)}\right)^{\tilde{T}}\\
        &=\left(\left(1-\frac{2}{n(n-1)}\right)^{n(n-1)/2-1}\right)^{(1-\epsilon)\ln n}\\
        &\geq \exp[-(1-\epsilon)\ln n]=1/n^{1-\epsilon},
    \end{align*}
    where the last inequality uses the fact that $(1-1/n)^{n-1}\geq 1/\mathrm{e}$ for all $n\geq 1$. Therefore, suppose that there are more than $n/6$ positions out of the first $n/2$ positions which are initialized as nonzero, then the probability that, $n/6$ such positions, denoted by $i_1,i_2,\dots,i_{n/6}$, are flipped to zero at least once during the first $\tilde{T}$ iterations can be bounded from above by 
    \begin{align*}
        &\prod_{k=1}^{n/6}\left(1-\left(1-\frac{1}{(n-1)(i_k-1)}\right)^{\tilde{T}}\right)\leq\left(1-\frac{1}{n^{1-\epsilon}}\right)^{n/6}\\
        &\leq \exp[-n^{\epsilon}/6],
    \end{align*}
    where the last inequality uses the fact that $(1-1/n)^n\leq 1/\mathrm{e}$ for all $n\geq 1$. Let $E$ be the event that at least $n/6$ positions out of the first $n/2$ positions are initialized as nonzero and there exists at least one position $i\in[n/2+1..n]$, which is initialized as nonzero and never flipped to zero during the first $\tilde{T}$ iterations. Then,
    \begin{align*}
        \mathbb{E}[T]&\geq \mathbb{E}[T_{n/2}]\geq \mathbb{E}[T_{n/2}\mid E]\Pr[E]\geq \tilde{T}\Pr[E]\\
        &\geq \tilde{T}(1-\exp[-n/36])(1-\exp[-n^{\epsilon}/6])\\
        &=\Omega(n^2\log n).
    \end{align*}
\end{proof}

\begin{proof}(of Theorem~\ref{thm:(1+1)-EA-uniform-OneMax})
Let $T$ be that random variable that denotes the optimization time. Let $x^{(t)}=(x^{(t)}_n,x^{(t)}_{n-1},\dots,x^{(t)}_2),t=0,1,\dots$ be the random variable that denotes the search point at time $t$. Consider the stochastic process $\{X_t\}_{t\geq0}$ where $X_t:=\sum_{i=2}^n x^{(t)}_i$. To simplify the reading, we let $x_i=x^{(t)}_i$. Let $E$ be the event that only one position is flipped at time $t+1$, then $\Pr[E]=(1-1/(n-1))^{n-2}\geq 1/\mathrm{e}$. And
\begin{align*}
    \mathbb{E}[X_t-X_{t+1}\mid E]&=\frac{1}{n-1}\sum_{i=2}^n \sum_{j=1}^{x_i}\frac{j}{i-1}\\
    &=\frac{1}{n-1}\sum_{i=2}^n\frac{x_i(x_i+1)}{2(i-1)}.
\end{align*}
Using the Cauchy–Schwarz inequality, we know
\begin{align*}
    \sum_{i=2}^n \frac{x_i(x_i+1)}{i-1}&\geq \frac{\left(\sum_{i=2}^n \sqrt{x_i(x_i+1)}\right)^2}{\sum_{i=2}^n (i-1)}\\
    &=\frac{\left(\sum_{i=2}^n \sqrt{x_i(x_i+1)}\right)^2}{n(n-1)/2}\\
    &\geq \frac{\left(\sum_{i=2}^n x_i\right)^2}{n(n-1)/2}=\frac{2(X_t)^2}{n(n-1)}.
\end{align*}
Meanwhile,
\begin{displaymath}
    \sum_{i=2}^n \frac{x_i(x_i+1)}{i-1}\geq \sum_{i=2}^n \frac{x_i(x_i+1)}{n-1}\geq \sum_{i=2}^n \frac{2x_i}{n-1}=\frac{2X_t}{n-1}.
\end{displaymath}
Define $h(s): \mathbb{R}^+\rightarrow\mathbb{R}^+$ as
\begin{displaymath}
    h(s):=\begin{cases}
    s/(\mathrm{e}(n-1)^2) & s<n\\
    s^2/(\mathrm{e}n(n-1)^2) & s\geq n
    \end{cases}
\end{displaymath}
Note that $h(s)$ monotonically increases when $s>0$. For any $X_t=s$ where $s\neq 0$ and $\Pr[X_t=s]>0$, we have
\begin{align*}
    &\mathbb{E}[X_t-X_{t+1}\mid X_t=s]\\
    &\geq \mathbb{E}[X_t-X_{t+1}\mid X_t=s,E]\Pr[E\mid X_{t}=s].
\end{align*}
Therefore, $\mathbb{E}[X_t-X_{t+1}\mid X_t=s]\geq h(s)$. Applying the Variable Drift Theorem (Theorem~\ref{thm:Variable-Drift}) with $s_\text{min}=1$, we have
\begin{displaymath}
    \mathbb{E}[T]\leq \frac{1}{h(1)}+\mathbb{E}\left[\int_1^{X_0}\frac{1}{h(\sigma)}\,\mathrm{d}\sigma\right].
\end{displaymath}
We know $1/h(1)=\mathrm{e}(n-1)^2$, while
\begin{align*}
    \int_1^{X_0}\frac{1}{h(\sigma)}\,\mathrm{d}\sigma&=\int_1^{n}\frac{1}{h(\sigma)}\,\mathrm{d}\sigma+\int_n^{X_0}\frac{1}{h(\sigma)}\,\mathrm{d}\sigma\\
    &=\mathrm{e}(n-1)^2\ln n+\mathrm{e}n(n-1)^2\left(\frac{1}{n}-\frac{1}{X_0}\right)\\
    &\leq \mathrm{e}(n-1)^2\ln n+\mathrm{e}(n-1)^2-2\mathrm{e}(n-1),
\end{align*}
where the last inequality uses the fact that $X_0\leq \sum_{i=2}^n(i-1)=n(n-1)/2$. Hence, $\mathbb{E}[T]\leq \mathrm{e}(n-1)^2\ln n+2\mathrm{e}(n-1)^2-2\mathrm{e}(n-1)$.
\end{proof}

\begin{proof}(of Theorem~\ref{thm:(1+1)-EA-uniform-FacVal})
The construction of the potential function is based on the technique in~\cite{WITT2013}, which was generalized by \cite{DoerrPohl} for the analysis of the multi-valued search space $[r]^n$. Let $T$ be the random variable that denotes the optimization time. Let $x^{(t)}=(x^{(t)}_n,\dots,x^{(t)}_2),t=0,1,\dots$ be the random variable that denotes the search point at time $t$. Let $I:=\{2,3,\dots,n\}$. Consider the stochastic process $\{X_t\}_{t\geq 0}$ which is defined as $X_t:=\sum_{i\in I}g_i\ln(x^{(t)}_i+1)$, where $g_i:=(1+\lambda)^{i-2}$ for some constant $c>0$ and $\lambda:=(c\ln n)/(n-1)$. First, we can see that
\begin{displaymath}
    X_0\leq \sum_{i\in I}g_i\ln n=\frac{(1+\lambda)^{n-1}{-}1}{\lambda}\ln n\leq \frac{(1+\lambda)^{n-1}\ln n}{\lambda}. 
\end{displaymath}
Since $(1+\lambda)^{n-1}\leq \exp[c\ln n]=n^c$ and $(\ln n)/\lambda=(n-1)/c$, we have $X_0\leq n^{c+1}/c$ and $\ln(X_0)\leq (c+1)\ln n-\ln c$.

Consider a certain time $t\geq 0$, we define
\begin{itemize}
    \item $l(i)=x^{(t)}_i-x^{(t+1)}_i,i\in I$,
    \item $d(i)=\ln(x^{(t)}_i+1)-\ln(x^{(t+1)}_i+1),i\in I$,
    \item $I_D=\{i\in I\mid l(i)>0\}$ (the set of all $down$-$flipped$ positions),
    \item $I_U=\{i\in I\mid l(i)<0\}$ (the set of all $up$-$flipped$ positions),
    \item $I_0=\{i\in I\mid l(i)=0\}$ (the set of all $non$-$flipped$ positions),
    \item $L(i)=\{n,n-1,\dots,i\}$,
    \item $R(i)=\{i-1,i-2\dots,2\}$,
    \item $\Delta_t=X_t-X_{t+1}$.
\end{itemize}
Note that $l(i)$ and $d(i)$ always have the same sign and both $(I_D,I_U,I_0)$ and $(L(i),R(i))$ are partitions of $I$. Clearly,
\begin{align*}
    &\Delta_t=\sum_{j\in I}g_j\cdot d(j)=\sum_{j\in I_D}g_j\cdot d(j)+\sum_{j\in I_U}g_j\cdot d(j)\\
    &=\sum_{j\in I_D}g_j\cdot d(j)+\sum_{j\in I_U\cap L(i)}g_j\cdot d(j)+\sum_{j\in I_U\cap R(i)}g_j\cdot d(j).
\end{align*}
Let 
\begin{displaymath}
    \Delta_L(i)=\sum_{j\in I_D}g_j\cdot d(j)+\sum_{j\in I_U\cap L(i)}g_j\cdot d(j).
\end{displaymath}
And 
\begin{displaymath}
    \Delta_R(i)=\sum_{j\in I_U\cap R(i)}g_j\cdot d(j).
\end{displaymath}
Obviously $\Delta_t=\Delta_L(i)+\Delta_R(i)$. Define $A_i,i\in I$ the event that occurs when the following three conditions are satisfied: 
\begin{enumerate}
    \item $I_D\neq\emptyset$,
    \item $i=\max I_D$,
    \item $I_U\cap L(i)=\emptyset$.
\end{enumerate}
Note that due to the property of $\textsc{FacVal}$, $x^{(t+1)}\neq x^{(t)}$ if and only if $f(x^{(t+1)})< f(x^{(t)})$. Thus, $x^{(t+1)}\neq x^{(t)}$ if and only if some positions are flipped and the leftmost flipped position is down-flipped. Let $A$ be the event that $(x^{(t)}\neq x^{(t+1)})$, and we can see that $A=\bigcup_{i\in I}A_i$, and $A_i\cap A_j=\emptyset$ for all $i,j\in I,i\neq j$. Therefore,
\begin{align*}
    &\mathbb{E}[\Delta_t]=\mathbb{E}[\Delta_t\mid A]\Pr[A]+\mathbb{E}[\Delta_t\mid\bar{A}]\Pr[\bar{A}]\\
    &=\mathbb{E}[\Delta_t\mid A]\Pr[A]=\sum_{\substack{i\in I \\x^{(t)}_i>0}}\mathbb{E}[\Delta_t\mid A_i]\Pr[A_i]\\
    &=\sum_{\substack{i\in I \\x^{(t)}_i>0}}\mathbb{E}[\Delta_L(i)\mid A_i]\Pr[A_i]+\mathbb{E}[\Delta_R(i)\mid A_i]\Pr[A_i].
\end{align*}
We know that for any $i\in I$, 
\begin{displaymath}
    \Pr[A_i]=\left(1-\frac{1}{n-1}\right)^{n-i}\frac{1}{n-1}\frac{x^{(t)}_i}{i-1}.
\end{displaymath}
Note that 
\begin{align*}
    &0\geq \mathbb{E}[\Delta_R(i)\mid A_i]\geq -\frac{1}{n-1}\sum_{j=2}^{i-1}g_j(\ln n-\ln 1)\\
    &=-\frac{\ln n}{n-1}\sum_{j=2}^{i-1}(1+\lambda)^{j-2}=-\frac{\ln n}{n-1}\frac{(1+\lambda)^{i-2}-1}{\lambda}\\
    &=-\frac{\ln n}{n-1}\frac{g_i-1}{(c\ln n)/(n-1)}\geq -\frac{g_i}{c}.
\end{align*}
Meanwhile, $\Pr[A_i]\leq x^{(t)}_i/((n-1)(i-1))$. Therefore, 
\begin{displaymath}
    \mathbb{E}[\Delta_R(i)\mid A_i]\Pr[A_i]\geq-\frac{g_i\cdot x^{(t)}_i}{c(n-1)(i-1)}.
\end{displaymath}
We know that 
\begin{align*}
    &\mathbb{E}[\Delta_L(i)\mid A_i]\\
    &=\mathbb{E}\left[\sum_{j\in I_D}g_j\cdot d(j)+\sum_{j\in I_U\cap L(i)} g_j\cdot d(j)\mid A_i\right]\\
    &=\mathbb{E}\left[\sum_{j\in I_D}g_j\cdot d(j)\mid A_i\right]\geq g_i\mathbb{E}[d(i)\mid A_i].
\end{align*}
Let $z=x^{(t)}_i>0$, then for all $z\geq 1$,
\begin{align*}
    \mathbb{E}[d(i)\mid A_i]&=\frac{1}{z}\sum_{j=0}^{z-1}(\ln(z+1)-\ln(j+1))\\
    &=\frac{1}{z}\left(\ln(z+1)^z-\ln (z!)\right)\geq \ln 2.
\end{align*}
And $\Pr[A_i]\geq (1-1/(n-1))^{n-2}x^{(t)}_i/((n-1)(i-1))\geq x^{(t)}_i/(\mathrm{e}(n-1)(i-1))$, hence,
\begin{displaymath}
    \mathbb{E}[\Delta_L(i)\mid A_i]\Pr[A_i]\geq \frac{(\ln 2)g_i\cdot x^{(t)}_i}{\mathrm{e}(n-1)(i-1)}.
\end{displaymath}
Therefore,
\begin{align*}
    \mathbb{E}[\Delta_t]&\geq \sum_{\substack{i\in I\\x^{(t)}_i>0}}\left(\frac{\ln 2}{\mathrm{e}}-\frac{1}{c}\right)\frac{g_i \cdot x^{(t)}_i}{(n-1)(i-1)}\\
    &\geq \left(\frac{\ln 2}{\mathrm{e}}-\frac{1}{c}\right)\frac{1}{(n-1)^2}\sum_{\substack{i\in I\\x^{(t)}_i>0}}g_i \cdot x^{(t)}_i\\
    &\geq \left(\frac{\ln 2}{\mathrm{e}}-\frac{1}{c}\right)\frac{1}{(n-1)^2}\sum_{\substack{i\in I\\x^{(t)}_i>0}}g_i \cdot \ln(x^{(t)}_i+1)\\
    &= \left(\frac{\ln 2}{\mathrm{e}}-\frac{1}{c}\right)\frac{X_t}{(n-1)^2}=:\delta X_t.
\end{align*}
Applying the Multiplicative Drift Theorem (Theorem~\ref{thm:Multiplicative-Drift}) where $s_\text{min}=g_2\ln 2=\ln 2$, we know that 
\begin{align*}
    \mathbb{E}[T]&\leq \frac{1+\mathbb{E}[\ln(X_0/s_\text{min})]}{\delta}\\
    &\leq \frac{1+(c+1)\ln n-\ln c-\ln\ln 2}{\delta}.
\end{align*}
One can see that when $c>\mathrm{e}/(\ln 2)$, $\mathbb{E}[T]\leq 69.2(n-1)^2\ln n+\mathcal{O}(n^2)$.
\end{proof}

\begin{proof}(of Theorem~\ref{thm:(1+1)-EA-uniform-NVal})
    The lower bound is $\Omega(n^2\log n)$, as given in~\citet{DoerrPohl}. The proof for the upper bound is basically the same as the proof of Theorem~\ref{thm:(1+1)-EA-uniform-FacVal}. Let $T$ be the random variable that denotes the optimization time, and let $I=\{1,2,\dots,n\}$. Consider the stochastic process $\{X_t\}_{t\geq0}$ which is defined as $X_t=\sum_{i\in I}g_i\ln(x^{(t)}_i+1)$ where $g_i:=(1+\lambda)^{i-1}$ for some constant $c>0$. Let $R(i)=\{i-1,i-2,\dots,1\}$. The definitions of $\lambda,l(i),d(i),I_D,I_U,I_0,L(i),\Delta_t,\Delta_L(i),\Delta_R(i),A_i$ and $A$ remain the same. First note that $\ln(X_0)\leq(c+1)\ln n-\ln c$.

    From the proof of Theorem~\ref{thm:(1+1)-EA-uniform-FacVal}, we know that
    \begin{displaymath}
        \mathbb{E}[\Delta_t]{=}\sum_{\substack{i\in I\\ x^{(t)}_i>0}}\mathbb{E}[\Delta_L(i)\mid A_i]\Pr[A_i]+\mathbb{E}[\Delta_R(i)\mid A_i]\Pr[A_i].
    \end{displaymath}
    In this case,
    \begin{displaymath}
        \Pr[A_i]=\left(1-\frac{1}{n}\right)^{n-i}\frac{1}{n}\frac{x^{(t)}_i}{n-1}.
    \end{displaymath}
    And, $x^{(t)}_i/(\mathrm{e}n(n-1))\leq\Pr[A_i]\leq x^{(t)}_i/(n(n-1))$. We also know that for any $i\in I$ where $x^{(t)}_i>0$,
    \begin{displaymath}
        \mathbb{E}[\Delta_R(i)\mid A_i]\Pr[A_i]\geq -\frac{g_i\cdot x^{(t)}_i}{cn(n-1)}, 
    \end{displaymath}
    while 
    \begin{displaymath}
        \mathbb{E}[\Delta_L(i)\mid A_i]\Pr[A_i]\geq \frac{(\ln 2)g_i\cdot x^{(t)}_i}{\mathrm{e}n(n-1)}.
    \end{displaymath}
    Therefore,
    \begin{align*}
        \mathbb{E}[\Delta_t]&\geq \sum_{\substack{i\in I\\x^{(t)}_i>0}}\left(\frac{\ln 2}{\mathrm{e}}-\frac{1}{c}\right)\frac{g_i\cdot x^{(t)}_i}{n(n-1)}\\
        &\geq \sum_{\substack{i\in I\\x^{(t)}_i>0}}\left(\frac{\ln 2}{\mathrm{e}}-\frac{1}{c}\right)\frac{g_i\cdot \ln(x^{(t)}_i+1)}{n(n-1)}\\
        &=\left(\frac{\ln 2}{\mathrm{e}}-\frac{1}{c}\right)\frac{X_t}{n(n-1)}.
    \end{align*}
    Using Multiplicative Drift Theorem where $s_\text{min}=\ln 2$, and choosing $c$ properly, we have $\mathbb{E}[T]=\mathcal{O}(n^2\log n)$.
\end{proof}

Consider $(1+1)$-EA on $\mathcal{L}\textsc{-LeadingZeros}$, we have the following Lemma which is similar to Lemma~\ref{lem:RLS-position-i}.
\begin{lemma}
\label{lem:(1+1)EA-position-i}
Consider $(1+1)$-EA with either uniform step operator or $\pm1$ step operator on $\mathcal{L}\textsc{-LeadingZeros}$. Let $x^{(t)}_i,i\in[2..n]$ be the random variable that denotes the value of position $i$ at time $t$. Let $\tau_i:=\min\{t\geq 0\mid x_n^{(t)}=\cdots= x_{i+1}^{(t)}=0\}$. Then $x^{(\tau_i)}_i\sim\text{U}(\{0,1,\dots,i-1\})$.
\end{lemma}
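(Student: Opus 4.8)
The plan is to follow the proof of Lemma~\ref{lem:RLS-position-i}, adapting it to the fact that the $(1+1)$-EA may mutate several positions in one iteration and accepts or rejects the whole offspring jointly. As in that lemma, it suffices to prove that, conditioned on $\{\tau_i=k\}$, the entry $x^{(\tau_i)}_i$ is uniform on $\{0,\dots,i-1\}$ for every $k\ge 0$, and then average over $k$; the case $k=0$ is immediate from uniform random initialization.

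The structural fact that makes this work is that whenever positions $i+1,\dots,n$ are not all zero --- which is exactly the situation at every iteration $t<\tau_i$ --- the value of $\mathcal{L}\textsc{-LeadingZeros}$ depends only on the entries at positions $i+1,\dots,n$ and ignores position $i$ entirely. I would formalize this by letting $\mathcal{G}$ be the $\sigma$-algebra generated by all of the algorithm's randomness except the initial entry $x^{(0)}_i$ and the outputs of the step operator at position $i$ in each iteration; note that $\mathcal{G}$ does contain the indicator of whether position $i$ is flipped in each iteration. A routine induction over iterations then shows that, up to and including iteration $\tau_i$, the entries $x^{(t)}_{i+1},\dots,x^{(t)}_n$, the corresponding offspring entries, and the accept/reject decision of each iteration are all $\mathcal{G}$-measurable: at an iteration $t+1<\tau_i$ the offspring cannot have zeros at all of positions $i+1,\dots,n$ (such an offspring would strictly raise the fitness and hence be accepted, forcing $\tau_i\le t+1$), so both fitness values compared in that iteration are functions of positions $i+1,\dots,n$ only; moreover $\tau_i$ equals the first iteration whose offspring has zeros at all of positions $i+1,\dots,n$ (such an offspring being accepted automatically), so $\tau_i$ is $\mathcal{G}$-measurable, and the offspring is accepted at iteration $\tau_i$.

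Given $\mathcal{G}$, the evolution of $x^{(t)}_i$ over $t=0,1,\dots,\tau_i$ is now transparent: if position $i$ is not flipped in iteration $t+1$ then $x^{(t+1)}_i=x^{(t)}_i$; if it is flipped then, because the accept/reject decision of that iteration is $\mathcal{G}$-measurable and hence deterministic given $\mathcal{G}$, the entry becomes $\text{step}_i(x^{(t)}_i)$ on an accept and stays $x^{(t)}_i$ on a reject. Each of these two updates sends a uniform distribution on $\{0,\dots,i-1\}$ to a uniform distribution: trivially when the entry is left unchanged, and by the elementary computation carried out in the proof of Lemma~\ref{lem:RLS-position-i} --- valid for both the uniform and the $\pm1$ step operators, whose transition matrices on $\{0,\dots,i-1\}$ are doubly stochastic --- when $\text{step}_i$ is applied (using that, given $\mathcal{G}$, the internal randomness of $\text{step}_i$ in iteration $t+1$ is independent of $x^{(t)}_i$). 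Since $x^{(0)}_i$ is uniform and independent of $\mathcal{G}$, induction over $t$ gives that $x^{(t)}_i$ is, conditioned on $\mathcal{G}$, uniform on $\{0,\dots,i-1\}$ for every $t\le\tau_i$; in particular this holds for $t=\tau_i$, and averaging over $\mathcal{G}$ completes the proof.

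I expect the main obstacle to be precisely the measurability bookkeeping of the second paragraph: showing that the accept/reject decisions up to iteration $\tau_i$, and the value of $\tau_i$ itself, are unaffected by the entry at position $i$. Unlike RLS --- where exactly one position changes per iteration and position $i$ is necessarily untouched at iteration $\tau_i$ --- the $(1+1)$-EA may flip position $i$ together with positions $i+1,\dots,n$, and may even flip it in iteration $\tau_i$; the observation that $\mathcal{L}\textsc{-LeadingZeros}$ never reads position $i$ before time $\tau_i$ is what dissolves all of these complications.
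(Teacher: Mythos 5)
Your proposal is correct and follows essentially the same route as the paper's proof: induction over iterations conditioned on $\{\tau_i=k\}$, using that $\mathcal{L}\textsc{-LeadingZeros}$ never reads position $i$ before time $\tau_i$ and that both step operators preserve the uniform distribution on $\{0,\dots,i-1\}$. Your explicit conditioning on the $\sigma$-algebra $\mathcal{G}$ is in fact slightly more careful than the paper's argument, which asserts that a flipped position $i$ is ``always'' adopted before $\tau_i$ (the offspring can still be rejected because of other positions); your observation that the accept/reject decision is $\mathcal{G}$-measurable, so that either branch of the mixture leaves position $i$ uniform, is the clean way to close that gap.
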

\begin{proof}(of Lemma~\ref{lem:(1+1)EA-position-i})
    The proof idea is similar to that of Lemma~\ref{lem:RLS-position-i}. Let $x^{(t)}=(x^{(t)}_n,x^{(t)}_{n-1},\dots,x^{(t)}_2),t=0,1,\dots$ be the random variable that denotes the search point at time $t$, and $y^{(t)}=(y^{(t)}_n,y^{(t)}_{n-1},\dots,y^{(t)}_2)$ be the random variable that denotes the solution after applying the step operator to a certain position of $x^{(t-1)}$. If $\tau_i=0$, then clearly $x^{(\tau_i)}_i\mid \{\tau_i=0\}\sim\text{U}(\{0,1,\dots,i-1\})$. Now, suppose that $\tau_i=k,k\geq1$. We prove $x^{(\tau_i)}_i\mid \{\tau_i=k\}$ by induction on time $t$. Obviously, $x^{(0)}_i\mid \{\tau_i=k\}\sim\text{U}(\{0,1,\dots,i-1\})$ since we initialize each position uniformly at random. Assume that $x^{(t-1)}_i\mid\{\tau_i=k\}\sim\text{U}(0,1,\dots,i-1)$ holds for all $1\leq t\leq k-1$. Then, for $x^{(t)}_i\mid\{\tau_i=k\}$, there are two cases,
    \begin{itemize}
        \item With probability $1/(n-1)$ position $i$ is flipped (i.e., $y^{(t)}_i\neq x^{(t)}_i$). From the proof of Lemma~\ref{lem:RLS-position-i}, we know that $y^{(t)}_i\mid\{\tau_i=k\}\sim\text{U}(\{0,1,\dots,i-1\})$. Since there are still nonzero positions before position $i$, $x^{(t)}_i=y^{(t)}_i$ for always. Thus, $x^{(t)}_i\mid\{\tau_i=k\}\sim\text{U}(0,1,\dots,i-1)$.
        \item With probability $1-1/(n-1)$ position $i$ is not flipped, then $x^{(t)}_i\mid\{\tau_i=k\}=x^{(t-1)}_i\mid\{\tau_i=k\}\sim\text{U}(0,1,\dots,i-1)$
    \end{itemize}
    Therefore, we know that $x^{(\tau_i-1)}_i\mid \{\tau_i=k\}\sim\text{U}(\{0,1,\dots,i-1\})$. At time $\tau_i$, all nonzero positions before $i$ are flipped to zero, resulting in a fitness improvement. Thus, if position $i$ is not flipped, it will remain the original value which is uniformly distributed. Otherwise, if position $i$ is flipped, the new solution will always adopt the flipped value which is still uniformly distributed regardless of the specific step operator used. Hence, we conclude that $x^{(\tau_i)}_i\sim\text{U}(0,1,\dots,i-1)$. 
\end{proof}

\begin{proof}(of Theorem~\ref{thm:(1+1)-EA-uniform-LeadingZeros})
Let $T$ be the random variable that denotes the optimization time, and let $T_i,i\in[2..n]$ be the random variable that denotes the optimization time after the first time that positions $n,n-1,\dots,i+1$ are all zero, and $T_1:=0$. Clearly, $T=T_n=\sum_{i=2}^n (T_i-T_{i-1})$. We can see that $T_i-T_{i-1}$ is the optimization time for position $i$ to be flipped to zero, and can be written as the sum of random variables, namely $T_i-T_{i-1}=\sum_{k=1}^{t_i}X_k$ where $X_k$ is the time between the $k{-}1$-th and $k$-th time that position $i$ is flipped and the solution is accepted, and when $k=t_i$, position $i$ is flipped to zero and the solution is accepted. Suppose that positions $n,n-1,\dots,i+1$ are all zero and the nonzero position $i$ is flipped, then the new solution can be accepted if and only if positions $n,n-1,\dots,i+1$ are not flipped. Thus, $X_k\sim\text{Geo}(p)$ where $p=(1-\frac{1}{n-1})^{n-i}\frac{1}{n-1}$. Therefore, by Wald's equation (Theorem~\ref{thm:Wald's equation}), $\mathbb{E}[T_i-T_{i-1}]=(n-1)(1-1/(n-1))^{i-n}\mathbb{E}[t_i]$. By Lemma~\ref{lem:(1+1)EA-position-i}, we know that the first time positions $n,n-1,\dots,i+1$ are all zero, the value of position $i$ is uniformly distributed over $\{0,1,\dots,i-1\}$. Therefore, $t_i$ is the same as in the proof of Theorem~\ref{thm:RLS-uniform-LeadingZeros}, and $\mathbb{E}[t_i]=(i-1)^2/i$. Hence,
\begin{align*}
    \mathbb{E}[T]&=\sum_{i=2}^n \frac{n-1}{\left(1-\frac{1}{n-1}\right)^{n-i}}\frac{(i-1)^2}{i}\\
    &=\frac{n-1}{(1-1/(n-1))^{n-1}}\sum_{i=1}^{n-1}\frac{i^2}{i+1}\left(1-\frac{1}{n-1}\right)^i.
\end{align*}

\allowdisplaybreaks
Let $N:=n-1$. Then 
\begin{align*}
    &\frac{n-1}{\left(1-\frac{1}{n-1}\right)^{n-1}}=\frac{N}{\left(1-\frac{1}{N}\right)^N}\\
    &=N\cdot \exp\left[-N\ln\left(1-\frac{1}{N}\right)\right]\\
    &=N\cdot\exp\left[-N\left(-\frac{1}{N}-\frac{1}{2N^2}-\frac{1}{3N^3}-\mathcal{O}\left(\frac{1}{N^4}\right)\right)\right]\\
    &=N\cdot\exp\left[1+\frac{1}{2N}+\frac{1}{3N^2}+\mathcal{O}\left(\frac{1}{N^3}\right)\right]\\
    &=N\cdot\mathrm{e}\cdot\left(1+\frac{1}{2N}+\frac{11}{24N^2}+\mathcal{O}\left(\frac{1}{N^3}\right)\right)\\
    &=\mathrm{e}N+\frac{\mathrm{e}}{2}+\frac{11\mathrm{e}}{24N}+\mathcal{O}\left(\frac{1}{N^2}\right).
\end{align*}
Meanwhile, let $q:=1-1/N$. Then
\begin{displaymath}
    \sum_{i=1}^{n-1}\frac{i^2}{i+1}\left(1-\frac{1}{n-1}\right)^i=\sum_{i=1}^N (i-1)q^i+\sum_{i=1}^N \frac{q^i}{i+1}.
\end{displaymath}
Firstly, let $S_1=\sum_{i=1}^N(i-1)q^i$. Then 
\begin{align*}
    S_1&=\frac{q^2-Nq^{N+1}+(N-1)q^{N+2}}{(1-q)^2}\\
    &=N^2\left(1-\frac{2}{N}+\frac{1}{N^2}+\left(-2+\frac{3}{N}-\frac{1}{N^2}\right)q^N\right).\\
    &=N^2-2N+1+(-2N^2+3N-1)q^N.
\end{align*}
And
\begin{align*}
    q^N&=\exp\left[N\ln\left(1-\frac{1}{N}\right)\right]\\
    &=\exp\left[N\left(-\frac{1}{N}-\frac{1}{2N^2}-\frac{1}{3N^3}-\mathcal{O}\left(\frac{1}{N^4}\right)\right)\right]\\
    &=\frac{1}{\mathrm{e}}\left(1-\frac{1}{2N}-\frac{5}{24N^2}-\mathcal{O}\left(\frac{1}{N^3}\right)\right).
\end{align*}
Thus,
\begin{displaymath}
    S_1=\left(1-\frac{2}{\mathrm{e}}\right)N^2+\left(\frac{4}{\mathrm{e}}-2\right)N+\left(1-\frac{25}{12\mathrm{e}}\right)+\mathcal{O}\left(\frac{1}{N}\right).
\end{displaymath}
Secondly, let $S_2=\sum_{i=1}^N q^i/(i+1)$. Then
\begin{displaymath}
    0\leq S_2\leq\sum_{i=0}^\infty \frac{q^i}{i+1}=\frac{-\ln(1-q)}{q}=\frac{\ln N}{1-1/N}.
\end{displaymath}
And
\begin{align*}
    \frac{\ln N}{1-1/N}&=\ln N\left(1+\frac{1}{N}+\frac{1}{N^2}\frac{1}{1-1/N}\right)\\
    &=\ln N+\frac{\ln N}{N}+\mathcal{O}\left(\frac{\ln N}{N^2}\right),N\geq 2.
\end{align*}
Thus,
\begin{displaymath}
    \mathbb{E}[T]=(\mathrm{e}-2)(n-1)^3+(3-3\mathrm{e}/2)(n-1)^2+R_n,
\end{displaymath}
where $R_n=\mathcal{O}(n\log n)$.
\end{proof}

\begin{proof}(of Theorem~\ref{thm:(1+1)-EA-unit-lower-bound})
Let $T$ be the random variable that denotes the optimization time, and let $T_n$ be the random variable that denotes the first time that position $n$ becomes zero. We know that for both $\mathcal{L}\textsc{-OneMax}$ and $\textsc{FacVal}$, once the first position(position $n$) becomes zero, it will not be flipped to some other values afterwards. Thus, $T\geq T_n$. We can write $T_n=\sum_{i=1}^N t_i$ where $t_i\sim\text{Geo}(1/(n-1))$ is the waiting time before the first position is flipped. Thus, again by using Wald's equation, we have $\mathbb{E}[T_n]=(n-1)\mathbb{E}[N]$, where $N$ is the stopping time of the following random walk.

Consider the random walk (see proof of Theorem~\ref{thm:RLS-unit-OneMax}) $\{X_t\}_{t\geq 0}$ whose state space is $\mathcal{S}=\{0,1,\dots,n-1\}$, and $X_0\sim\text{U}(\mathcal{S})$. State $0$ is the absorbing state and $\Pr[X_{t+1}=s\mid X_t=s]=\Pr[X_{t+1}=s-1\mid X_{t}=s]=1/2$ holds for all $s\in\mathcal{S}\backslash\{0\}$. 

Let $\bar{T}_i,i\in\mathcal{S}$ denote the hitting time starting from state $i$. Then, $\mathbb{E}[\bar{T}_i]=2+\mathbb{E}[\bar{T}_{i-1}]$ and $\mathbb{E}[\bar{T}_0]=0$. Thus, $\mathbb{E}[N]=\frac{1}{n}\sum_{i=0}^{n-1}\mathbb{E}[\bar{T}_{i}]=n-1$. Thus, $\mathbb{E}[T]\geq\mathbb{E}[T_n]=(n-1)^2$.
\end{proof}

\begin{proof}(of Theorem~\ref{thm:(1+1)-EA-unit-OneMAx})
The proof will be exactly the same as the proof of Theorem 14 in the paper~\cite{Doerr2018} except for the last step. In our case, the expected optimization time is of order at most $\ln(\frac{w^n-w}{w-1})/(\frac{c}{2wn})=\mathcal{O}(n^2)$.
\end{proof}

\begin{proof}(of Theorem~\ref{thm:(1+1)-EA-unit-FacVal}) 
This proof uses the same method as the proof of Theorem~\ref{thm:(1+1)-EA-uniform-FacVal}. Consider a stochastic process $\{X_t\}_{t\geq 0}$ that is defined as $X_t:=\sum_{i=2}^n g_ix^{(t)}_i$ where $g_i=(1+c/(n-1))^{i-2}$ for some constant $c>0$. $l(i),d(i),I_D,I_U,I_0,L(i),R(i),\Delta_t,\Delta_L(i),\Delta_R(i),A_i$ and $A$ are defined in the same way as in the proof of Theorem~\ref{thm:(1+1)-EA-uniform-FacVal}. First note that
\begin{displaymath}
    X_0\leq (n-1)\frac{\left(1+\frac{c}{n-1}\right)^{n-1}-1}{\frac{c}{n-1}}\leq \frac{\mathrm{e}^c}{c}(n-1)^2,
\end{displaymath}
where the last inequality uses the fact that $(1+1/n)^n\leq \mathrm{e}$ for all $n\geq 1$. Meanwhile, for any $i\in I$ where $x^{(t)}_i>0$,
\begin{displaymath}
    \Pr[A_i]=\left(1-\frac{1}{n-1}\right)^{n-i}\frac{1}{n-1}\frac{1}{2}.
\end{displaymath}
Therefore, $\Pr[A_i]\leq 1/(2(n-1))$, and $\Pr[A_i]\geq (1-1/(n-1))^{n-2}/(2(n-1))\geq 1/(2\mathrm{e}(n-1))$. Moreover, $\mathbb{E}[\Delta_L(i)\mid A_i]\geq g_i\cdot 1$. Also we have 
\begin{align*}
    \mathbb{E}[\Delta_R(i)\mid A_i]&\geq -\frac{1}{n-1}\frac{1}{2}\sum_{j=2}^{i-1}\left(1+\frac{c}{n-1}\right)^{j-2}\cdot 1\\
    &=-\frac{1}{2(n-1)}\frac{\left(1+\frac{c}{n-1}\right)^{i-2}-1}{\frac{c}{n-1}}\\
    &\geq -\frac{g_i}{2c}.
\end{align*}

\allowdisplaybreaks
Hence,
\begin{align*}
    &\mathbb{E}[\Delta_t]\\
    &=\sum_{\substack{i\in I\\x^{(t)}_i>0}}\mathbb{E}[\Delta_L(i)\mid A_i]\Pr[A_i]+\mathbb{E}[\Delta_R(i)\mid A_i]\Pr[A_i]\\
    &\geq \sum_{\substack{i\in I\\x^{(t)}_i>0}}\frac{g_i}{2\mathrm{e}(n-1)}-\frac{g_i}{4c(n-1)}\\
    &\geq \left(\frac{1}{2\mathrm{e}}-\frac{1}{4c}\right)\frac{1}{n-1}\sum_{i\in I}g_i\frac{x^{(t)}_i}{n-1}\\
    &=\left(\frac{1}{2\mathrm{e}}-\frac{1}{4c}\right)\frac{s}{(n-1)^2}.
\end{align*}
Thus, by the Multiplicative Drift Theorem (Theorem~\ref{thm:Multiplicative-Drift}) where $s_\text{min}=1$, and we take $c=2$, we have $\mathbb{E}[T]=\mathcal{O}(n^2\log n)$.
\end{proof}

\begin{proof}(of Theorem~\ref{thm:(1+1)-EA-unit-LeadingZeros}) 
    Let $T$ be the random variable that denotes the optimization time. Same as the proof of Theorem~\ref{thm:(1+1)-EA-uniform-LeadingZeros}, we can write 
    \begin{displaymath}
        \mathbb{E}[T]=\sum_{i=2}^n \frac{n-1}{\left(1-\frac{1}{2}\frac{1}{n-1}\right)^{n-i}}\mathbb{E}[t_i].
    \end{displaymath}
    Note that there is a $1/2$ in the term $(1-\frac{1}{2(n-1)})^{n-i}$ because the value for a zero position being flipped down will remain zero. In this certain problem, $t_i$ is the hitting time of the random walk that is described in the proof of Theorem~\ref{thm:RLS-unit-LeadingZeros}, and $\mathbb{E}[t_i]=(i-1)(2i-1)/3$. Let $N=n-1, q=1-1/(2N)$. Then,
    \begin{align*}
        \mathbb{E}[T]&=\sum_{i=2}^n \frac{n-1}{\left(1-\frac{1}{2}\frac{1}{n-1}\right)^{n-i}}\frac{(i-1)(2i-1)}{3}\\
        &=\frac{N}{\left(1-\frac{1}{2N}\right)^N}\frac{1}{3}\sum_{i=1}^N i(2i+1)q^i. \\
        &=\frac{N}{3q^N}\sum_{i=1}^Ni(2i+1)q^i.
    \end{align*}
    Let $S=\sum_{i=1}^N (2i^2+i)q^i$, we want to find a function $f(x),x\in[0..n]$ s.t. $f(0)=0$ and $f(i)-f(i-1)=(2i^2+i)q^i$. Then, $S=f(N)$. Assume that $f(x)=(ax^3+bx^2+cx+d)q^x-d$, we can solve the equation $f(x)-f(x-1)=(2x^2+x)q^x$ and find out that, $a=0,b=\frac{2q}{q-1},c=\frac{q(q-5)}{(q-1)^2},d=\frac{q(q+3)}{(q-1)^3}$. Therefore,
    \begin{displaymath}
        S=\left(\frac{2q}{q-1}N^2+\frac{q(q-5)}{(q-1)^2}N+\frac{q(q+3)}{(q-1)^3}\right)q^{N}-\frac{q(q+3)}{(q-1)^3}.
    \end{displaymath}
    Since we know $q=1-1/(2N)$,
    \begin{align*}
        \mathbb{E}[T]=\frac{N}{3q^N}S=&-\frac{52}{3}N^4+\frac{28}{3}N^3-\frac{1}{3}N^2\\
        &+\frac{N}{3q^N}(32N^3-20N^2+2N).
    \end{align*}
    Meanwhile,
    \begin{align*}
        \frac{N}{3q^N}(32N^3-20N^2+2N&)=\frac{32\sqrt{\mathrm{e}}}{3}N^4-\frac{16\sqrt{\mathrm{e}}}{3}N^3\\
        &+\frac{13\sqrt{\mathrm{e}}}{36}N^2-\frac{\sqrt{\mathrm{e}}}{48}N-\Theta(1).
    \end{align*}
    Thus,
    \begin{align*}
        \mathbb{E}[T]&=\frac{32\sqrt{\mathrm{e}}-52}{3}N^4+\frac{28-16\sqrt{\mathrm{e}}}{3}N^3+\frac{13\sqrt{\mathrm{e}}-12}{36}N^2\\
        &-\frac{\sqrt{\mathrm{e}}}{48}N-\Theta(1).
    \end{align*}
    \end{proof}

\section{Detailed Experimental Results} \label{app:exp}

Tabs.~\ref{tab:exp_small_sr},~\ref{tab:exp_small_ert},~and~\ref{tab:exp_large_rpd} present the detailed instance-by-instance results on LOP and QAP, which were summarized and discussed in Sect.~\ref{sec:exp}.

Tab.~\ref{tab:exp_small_sr} reports the success rates for each algorithm-instance pair, calculated as the proportion of runs in which the algorithm reached a global optimum for the given instance.
Tab.~\ref{tab:exp_small_ert} presents the expected runtime for each algorithm-instance pair, computed as the average number of evaluations required to reach the optimum divided by the success rate.
Tab.~\ref{tab:exp_large_rpd} shows the average relative percentage deviation for each algorithm-instance pair.
In each run, the relative percentage deviation is given by \mbox{$100 \cdot (\mathit{val}-\mathit{best})/\mathit{best}$}, where $\mathit{val}$ is the objective value returned by the run, and $\mathit{best}$ is the best objective value achieved by any algorithm in the suite on that instance.

The last row of each table reports the average rank achieved by each algorithm across the 20 instances.
We also performed Wilcoxon tests using the standard significance level of 0.05, and controlling the false discovery rate with the Benjamini–Hochberg procedure.
Accordingly, the average ranks in the last row of each table are annotated as follows:
marked with the symbol $\blacktriangle$ if the algorithm is the overall best performer,
left unmarked if its performance is not significantly different from the best algorithm,
or marked with the symbol $\triangledown$ if it is significantly worse than the best algorithm.

While not essential to the analyses presented, we provide, for completeness, information about the computing infrastructure on which the experiments were conducted. 
We performed our experiments in a virtualized HPC environment managed by SLURM. Each job was allocated 60 virtual CPU cores and 150 GB of RAM on a node with 64 vCPUs (Intel(R) Xeon(R) Gold 6230 @ 2.10GHz) and 257 GB total RAM, using a Singularity container based on Ubuntu 20.04 LTS. No GPU was used.

Furthermore, given the stochastic nature of the algorithms, and to support reproducibility, the accompanying source code explicitly specifies the random number generator employed and the seeds used for initialization.

\begin{table*}
\centering
\resizebox{\textwidth}{!}{
\begin{tabular}{llrrrrr}
\toprule
Problem & Instance & Lehmer-Harmonic & Lehmer-Uniform & Lehmer-Unit & Perm-Jump & Perm-Trans \\
\midrule
\multirow[t]{10}{*}{LOP} & N-be75eec & 1.000 & 1.000 & 0.415 & 1.000 & 1.000 \\
 & N-be75np & 0.831 & 0.799 & 0.003 & 1.000 & 0.874 \\
 & N-be75oi & 1.000 & 1.000 & 0.057 & 1.000 & 1.000 \\
 & N-be75tot & 1.000 & 1.000 & 0.009 & 1.000 & 1.000 \\
 & N-stabu70 & 0.939 & 0.971 & 0.000 & 1.000 & 1.000 \\
 & N-stabu74 & 1.000 & 1.000 & 0.000 & 1.000 & 1.000 \\
 & N-stabu75 & 1.000 & 0.993 & 0.001 & 1.000 & 0.604 \\
 & N-t59b11xx & 1.000 & 1.000 & 0.277 & 1.000 & 1.000 \\
 & N-t59d11xx & 1.000 & 1.000 & 0.014 & 1.000 & 1.000 \\
 & N-t59f11xx & 1.000 & 1.000 & 0.038 & 1.000 & 1.000 \\
\cline{1-7}
\multirow[t]{10}{*}{QAP} & sko42 & 0.321 & 0.317 & 0.007 & 0.457 & 0.540 \\
 & sko49 & 0.162 & 0.138 & 0.003 & 0.287 & 0.306 \\
 & sko56 & 0.320 & 0.327 & 0.004 & 0.429 & 0.635 \\
 & sko64 & 0.220 & 0.246 & 0.019 & 0.384 & 0.856 \\
 & sko72 & 0.293 & 0.246 & 0.001 & 0.658 & 0.453 \\
 & sko81 & 0.280 & 0.276 & 0.003 & 0.247 & 0.399 \\
 & sko90 & 0.246 & 0.225 & 0.003 & 0.201 & 0.393 \\
 & sko100a & 0.372 & 0.389 & 0.002 & 0.261 & 0.716 \\
 & sko100b & 0.407 & 0.358 & 0.007 & 0.991 & 1.000 \\
 & sko100c & 0.193 & 0.195 & 0.004 & 0.345 & 0.642 \\
\cline{1-7}
\multicolumn{2}{l}{Avg Ranks \& Stat. Comp.} & 2.85 $\triangledown$ & 3.02 $\triangledown$ & 5.00 $\triangledown$ & 2.33 & \textbf{1.80} $\blacktriangle$ \\
\bottomrule
\end{tabular}
}
\caption{Success rates of $(1+1)$-EAs on instances of size $n=10$.}
\label{tab:exp_small_sr}
\end{table*}

\begin{table*}
\centering
\resizebox{\textwidth}{!}{
\begin{tabular}{llrrrrr}
\toprule
Problem & Instance & Lehmer-Harmonic & Lehmer-Uniform & Lehmer-Unit & Perm-Jump & Perm-Trans \\
\midrule
\multirow[t]{10}{*}{LOP} & N-be75eec & 258 & 302 & 1409840 & 117 & 197 \\
 & N-be75np & 352536 & 398058 & 332333448 & 661 & 246726 \\
 & N-be75oi & 8395 & 5979 & 16547064 & 158 & 496 \\
 & N-be75tot & 3905 & 4382 & 110111190 & 301 & 1383 \\
 & N-stabu70 & 115221 & 61897 & $\infty$ & 287 & 10216 \\
 & N-stabu74 & 36304 & 23504 & $\infty$ & 950 & 759 \\
 & N-stabu75 & 117947 & 148406 & 999000135 & 1906 & 985466 \\
 & N-t59b11xx & 321 & 256 & 2610173 & 58 & 75 \\
 & N-t59d11xx & 2734 & 2660 & 70428648 & 121 & 223 \\
 & N-t59f11xx & 24851 & 23130 & 25329470 & 528 & 2686 \\
\cline{1-7}
\multirow[t]{10}{*}{QAP} & sko42 & 2241183 & 2292814 & 141873905 & 1331505 & 959003 \\
 & sko49 & 5491147 & 6521631 & 332513762 & 2661890 & 2339999 \\
 & sko56 & 2285223 & 2228711 & 249017145 & 1518810 & 629533 \\
 & sko64 & 3612478 & 3123795 & 51671042 & 1824563 & 289011 \\
 & sko72 & 2573233 & 3248222 & 999000304 & 652032 & 1261777 \\
 & sko81 & 2905237 & 2925912 & 332339135 & 3144976 & 1549257 \\
 & sko90 & 3219567 & 3639325 & 332333927 & 4038384 & 1620499 \\
 & sko100a & 1987925 & 1887132 & 499000077 & 3024079 & 458401 \\
 & sko100b & 1581188 & 1908472 & 141892972 & 53546 & 1317 \\
 & sko100c & 4322858 & 4245310 & 249070001 & 1965830 & 654611 \\
\cline{1-7}
\multicolumn{2}{l}{Avg Ranks \& Stat. Comp.} & 3.22 $\triangledown$ & 3.33 $\triangledown$ & 5.00 $\triangledown$ & 1.83 & \textbf{1.61} $\blacktriangle$ \\
\bottomrule
\end{tabular}
}
\caption{Expected RunTime of $(1+1)$-EAs on instances of size $n=10$.}
\label{tab:exp_small_ert}
\end{table*}

\begin{table*}
\resizebox{\textwidth}{!}{\centering
\begin{tabular}{llrrrrrr}
\toprule
Problem & Instance & $n$ & Lehmer-Harmonic & Lehmer-Uniform & Lehmer-Unit & Perm-Jump & Perm-Trans \\
\midrule
\multirow[t]{10}{*}{LOP} & N-be75eec & 50 & 150.69 & 150.69 & 331.38 & 8.70 & 18.83 \\
 & N-be75np & 50 & 180.21 & 180.21 & 360.38 & 1.96 & 9.12 \\
 & N-be75oi & 50 & 97.78 & 97.78 & 178.19 & 6.34 & 13.84 \\
 & N-be75tot & 50 & 181.60 & 181.60 & 318.19 & 2.97 & 14.20 \\
 & N-stabu70 & 60 & 81.63 & 81.63 & 169.55 & 3.24 & 10.39 \\
 & N-stabu74 & 60 & 100.45 & 100.45 & 187.28 & 4.47 & 10.38 \\
 & N-stabu75 & 60 & 97.17 & 97.17 & 185.41 & 4.52 & 11.84 \\
 & N-t59b11xx & 44 & 107.36 & 107.36 & 258.69 & 4.55 & 6.56 \\
 & N-t59d11xx & 44 & 92.64 & 92.64 & 258.67 & 9.27 & 11.27 \\
 & N-t59f11xx & 44 & 132.67 & 132.67 & 260.62 & 1.23 & 2.69 \\
\cline{1-8}
\multirow[t]{10}{*}{QAP} & sko42 & 42 & 9.16 & 9.35 & 11.12 & 7.24 & 2.38 \\
 & sko49 & 49 & 8.90 & 8.97 & 10.47 & 7.48 & 2.36 \\
 & sko56 & 56 & 8.55 & 8.71 & 10.10 & 6.93 & 1.86 \\
 & sko64 & 64 & 7.67 & 7.83 & 8.91 & 6.60 & 1.55 \\
 & sko72 & 72 & 7.52 & 7.63 & 8.59 & 6.05 & 1.42 \\
 & sko81 & 81 & 7.06 & 7.22 & 8.16 & 6.18 & 1.22 \\
 & sko90 & 90 & 7.28 & 7.39 & 8.26 & 6.17 & 1.40 \\
 & sko100a & 100 & 6.82 & 6.90 & 7.65 & 5.99 & 1.18 \\
 & sko100b & 100 & 6.53 & 6.70 & 7.51 & 5.88 & 1.05 \\
 & sko100c & 100 & 7.13 & 7.23 & 8.12 & 6.35 & 1.36 \\
\cline{1-8}
\multicolumn{3}{l}{Avg Ranks \& Stat. Comp.} & 3.25 $\triangledown$ & 3.75 $\triangledown$ & 5.00 $\triangledown$ & \textbf{1.50} $\blacktriangle$ & \textbf{1.50} $\blacktriangle$ \\
\bottomrule
\end{tabular}
}
\caption{Average Relative Percentage Deviations of $(1+1)$-EAs on large instances.}
\label{tab:exp_large_rpd}
\end{table*}


\end{document}